\def\eqref#1{equation~\ref{#1}}
\def\1{\bm{1}}
\DeclareMathAlphabet{\mathsfit}{\encodingdefault}{\sfdefault}{m}{sl}
\SetMathAlphabet{\mathsfit}{bold}{\encodingdefault}{\sfdefault}{bx}{n}
\newcommand{\E}{\mathbb{E}}
\newcommand{\R}{\mathbb{R}}
\DeclareMathOperator*{\argmin}{arg\,min}
\renewcommand{\cite}[1]{\citep{#1}}
\theoremstyle{plain}
\newtheorem{theorem}{Theorem}[section]
\newtheorem{lemma}[theorem]{Lemma}
\theoremstyle{definition}
\newtheorem{definition}[theorem]{Definition}
\newtheorem{assumption}[theorem]{Assumption}
\theoremstyle{remark}
\newtheorem{remark}[theorem]{Remark}
\DeclareMathOperator*{\Rd}{\mathbb{R}^d}
\DeclareMathOperator*{\PP}{\mathbb{P}}
\DeclareMathOperator*{\cO}{\mathcal{O}}
\DeclareMathOperator{\cG}{\mathcal{G}}
\DeclareMathOperator*{\cS}{\mathcal{S}}
\DeclareMathOperator*{\cA}{\mathcal{A}}
\DeclareMathOperator*{\cP}{\mathcal{P}}
\DeclareMathOperator*{\cR}{\mathcal{R}}
\DeclareMathOperator{\g}{\gamma}
\algnewcommand{\LineComment}[1]{\State \(\triangleright\) #1}
\DeclarePairedDelimiterX\Set[1]{\{}{\}}{
  
  #1
}
\newcommand{\norm}[1]{\left \lVert #1 \right\rVert }
\title{Global Convergence Guarantees for Federated Policy Gradient Methods with Adversaries}
\author{\name Swetha Ganesh
        \email swethaganesh@iisc.ac.in \\ 
		\addr 
        Indian Institute of Science (IISc), 
        Bengaluru 560012, India \\
		Purdue University, 
		West Lafayette, IN, 47907, USA 
		\AND
		\name Jiayu Chen 
        \email jiayuc2@andrew.cmu.edu \\
		\addr Carnegie Mellon University (CMU), Pittsburgh, PA, 15289, USA
        \AND
		\name Gugan Thoppe  
        \email gthoppe@iisc.ac.in \\
        \addr 
        Indian Institute of Science (IISc), 
        Bengaluru 560012, India	
        \AND
		\name Vaneet Aggarwal 
        \email vaneet@purdue.edu \\
		\addr 
		Purdue University, 
		West Lafayette, IN, 47907, USA		
	}
\begin{document}
\maketitle

\begin{abstract}
Federated Reinforcement Learning (FRL) allows multiple agents to collaboratively build a decision making policy without sharing raw trajectories. However, if a small fraction of these agents are adversarial, it can lead to catastrophic results. We propose a policy gradient based approach that is robust to adversarial agents which can send arbitrary values to the server. Under this setting, our results form the first global convergence guarantees with general parametrization. These results demonstrate resilience with adversaries, while achieving optimal sample complexity of order $\tilde{\mathcal{O}}\left( \frac{1}{N\epsilon^2} \left( 1+ \frac{f^2}{N}\right)\right)$, where $N$ is the total number of agents and $f<N/2$ is the number of adversarial agents. 
\end{abstract}

\section{Introduction}


Reinforcement Learning (RL) encompasses a category of challenges in which an agent iteratively selects actions and acquires rewards within an unfamiliar environment, all while striving to maximize the cumulative rewards. The Policy Gradient (PG) based approaches serve as an effective means of addressing RL problems, demonstrating its successful application in a diverse range of complex domains, such as  game playing \cite{schrittwieser2020mastering,bonjour2022decision}, transportation \cite{kiran2021deep,al2019deeppool}, robotics \cite{abeyruwan2023sim2real,chen2023option}, telesurgery \cite{gonzalez2023asap}, network scheduling \cite{geng2020multi,chen2023hybrid}, and healthcare \cite{yu2021reinforcement}. 




RL applications often demand extensive training data to attain the desired accuracy level. Parallelizing training can significantly expedite this process, with one approach being Federated Reinforcement Learning (FRL) \cite{jin2022federated}. In FRL, workers exchange locally trained models instead of raw data, ensuring efficient communication and data privacy. While Federated Learning (FL) is typically associated with supervised learning \cite{9311906}, recent developments have extended its application to FRL, enabling multiple agents to collaboratively construct decision-making policies without sharing raw trajectories.


Distributed systems, including FRL, face vulnerabilities such as random failures or adversarial attacks. These issues may arise from agents' arbitrary behavior due to hardware glitches, inaccurate training data, or deliberate malicious attacks. In the case of attacks, the possibility exists that attackers possess extensive knowledge and can collaborate with others to maximize disruption. These scenarios fall under the Byzantine failure model, the most rigorous fault framework in distributed computing, where a minority of agents can act arbitrarily and potentially maliciously with the aim of disrupting the system's convergence \cite{lamport1982}. In this paper, we investigate how adversaries affect the overall convergence of federated policy gradient methods.



\textbf{Related Works:} 

{\bf 1.  Global Convergence of Policy Gradient Approaches: } Recently, there has been an increasing research emphasis on exploring the global convergence of PG-based methods, going beyond the well-recognized convergence to first-order stationary policies. In \cite{agarwal2021theory}, a fairly comprehensive characterization of global convergence for PG approaches is provided. Additionally, other significant studies on sample complexity for global convergence include \cite{wang2019neural,xu2019sample,liu2020improved,masiha,fatkhullin,mondal2023improved}. We note that both \citep{fatkhullin} and \citep{mondal2023improved} achieve near-optimal sample complexity of $\tilde{O}(1/\epsilon^2)$. In the absence of adversarial elements, we use the work in \cite{fatkhullin} as a foundational reference for this paper.

{\bf 2.  Federated Reinforcement Learning: } Federated reinforcement learning has been explored in various setups, including tabular RL \cite{agarwal2021communication, jin2022federated, khodadadian2022federated}, control tasks \cite{wang2023model}, and value-function based algorithms \cite{wang2023federated, fedkl2023}, showcasing linear speedup. In the case of policy-gradient based algorithms, it's evident that linear speedup is achievable, as each agent's collected trajectories can be parallelized \cite{lan2023improved}. Nevertheless, achieving speedup becomes challenging with increasing nodes when adversaries are introduced, and this paper focuses on addressing this key issue.

{3. \bf Byzantine Fault Tolerance in Federated Learning: } 
A significant body of research has focused on distributed Stochastic Gradient Descent (SGD) with adversaries, with various works such as \cite{chen2017distributed, mhamdi18a, yin2018byzantine, meamed, alistarh2018byzantine, diakonikolas2019sever, allen2020byzantine, prasad2020robust}. Most studies typically employ an aggregator to merge gradient estimates from workers while filtering out unreliable ones  \cite{meamed, chen2017distributed, mhamdi18a, yin2018byzantine}. However, many of these approaches make stringent assumptions regarding gradient noise, making it challenging to compare different aggregators. Notably, \cite{farhadkhani22a} introduces the concept of $(f,\lambda)$-resilient averaging, demonstrating that several well-known aggregators can be seen as special instances of this idea. In our work, we explore $(f,\lambda)$-averaging for combining policy gradient estimates from workers.

{4. \bf Byzantine Fault Tolerance in Distributed Reinforcement Learning:}. We note that \cite{fan2021fault} introduces an SVRG-PG based algorithm with a local sample complexity bound of $\cO\left(\frac{1}{N^{2/3} \varepsilon^{5/3}}+\frac{\alpha^{4/3}}{\varepsilon^{5/3}}\right)$, where $N$ represents the number of workers and $\alpha$ denotes the fraction of adversarial workers. However, they do not provide global convergence guarantees and require that the variance of importance sampling weights is upper bounded, which is unverifiable in practice. Moreover, their result is sub-optimal in $N$, which fails to provide linear speedup even when no adversaries are present. They additionally perform variance reduction with samples drawn from the central server, assuming its reliability. This is not generally allowed under federated learning since this could lead to data leakage and privacy issues \cite{FL-survey}. In contrast, our study centers on the global convergence of federated RL while keeping the server process minimal,  aggregating different gradients using  $(f,\lambda)$-resilient aggregator \cite{farhadkhani22a}, without   requiring additional samples at server.


Distributed RL algorithms with adversaries have been studied for episodic tabular MDPs in \cite{Chen}. Another line of works focus on empirical evaluations of Federated RL with adversaries \cite{lin2022byzantine,rjoub2022explainable,zhang2022resilient,xu2021signguard} and do not provide sample complexity guarantees.

\textbf{Main Contributions:} In this paper, we address the following fundamental question:

{\em What is the influence of adversaries on the global convergence sample complexity of Federated Reinforcement Learning (FRL)?}

To tackle this question, we introduce Resilient Normalized Hessian Aided Recursive Policy Gradient (Res-NHARPG). \textcolor{black}{Res-NHARPG integrates resilient averaging with variance-reduced policy gradient. Resilient averaging combines gradient estimates in a manner that minimizes the impact of adversaries on the algorithm's performance, while variance reduction accelerates convergence.} Our main result is that the global sample complexities of Res-NHARPG with $(f,\lambda)$ averaging is $\cO \left( \frac{1}{\epsilon^2} \log\left(\frac{1}{\epsilon}\right) \left( \frac{1}{N} + \lambda^2 \log(N)\right)\right)$. Here, $N$ denotes the number of workers and $f<N/2$ denotes the number of faulty workers. 
The significance of our contributions can be summarized as follows.

\begin{enumerate}
    \item This paper provides the first global convergence sample complexity findings for federated policy gradient-based approaches with general parametrization \textcolor{black}{in the presence of adversaries}. 

    \item We derive the sample complexity for Res-NHARPG for a broad class of aggregators, called $(f,\lambda)$-aggregators \cite{farhadkhani22a}. This includes several popular methods such as Krum \cite{krum}, Co-ordinate Wise Median (CWMed) and Co-ordinate Wise Trimmed Mean (CWTM) \cite{yin2018byzantine}, Minimum Diameter Averaging (MDA) \cite{mhamdi18a}, Geometric Median \cite{chen2017distributed} and Mean around Medians (MeaMed) \cite{meamed}. 
    
    \item We observe that for certain choices of aggregators (MDA, CWTM, MeaMed), our proposed approach achieves the optimal sample complexity of  $\Tilde{\cO} \left( \frac{1}{N\epsilon^2} \left(1 + \frac{f^2}{N}\right)\right)$, where $\Tilde{\cO}$ ignores logarithmic factors (see Remark \ref{rem:optimal-bound}). In particular, this implies linear speedup when $f = \cO(N^{\delta})$, where $\delta \leq 0.5$.
\end{enumerate}

We also provide experimental results showing the effectiveness of our algorithm under popular adversarial attacks (random noise, random action and sign flipping) under different environments (Cartpole-v1 from OpenAI Gym and InvertedPendulum-v2, HalfCheetah, Hopper, Inverted Double Pendulum and Walker from MuJoCo).  


We obtain a significant improvement over the original bounds for $(f,\lambda)$-resilient averaging in the context of stochastic optimization \cite{farhadkhani22a}. We achieve this by showing that our policy gradient update direction has certain desirable properties that allows us to utilize sharper concentration inequalities, instead of a simple union-bound (see Section \ref{sec:proof-outline}). 

\section{Problem Setup}
\label{sec:background}


We consider a discounted Markov decision process  defined by the  tuple $(\cS,\cA,\cP,\textcolor{black}{r},\gamma)$, where $\cS$ denotes the state space, $\cA$ denotes the action space, $\cP(s'| s,a)$ is the probability of transitioning from state $s$ to state ${s}'$ after taking action $a$, $\gamma\in(0,1)$ is the discount factor, and $r:\cS\times\cA\to [-R,R]$ is the reward  function of $s$ and $a$. At each time $t$, the agent is at the current state $s_t\in\cS$ and takes action $a_t\in\cA$ based on a possibly stochastic policy $\pi:\cS\to \cP(\cA)$, i.e., $a_t\sim \pi(\cdot| s_t)$. 
The agent then obtains a reward $r_t=r(s_t,a_t)$. The sequence of state-action pairs $ \tau = \{s_0,a_0,s_1,a_1,s_2,a_2,\cdots\}$ is called a trajectory.

We consider a parameter-server architecture with a trusted central server and $N$ distributed workers or agents, labelled $1, 2, \cdots, N$. We assume each agent is given an independent and identical copy of the MDP. Among these $N$ agents, we assume that $f$ are adversarial, with these agents providing any arbitrary information {\color{black} (such adversaries are also called Byzantine adversaries)}. The aim is for the agents to collaborate using a federated policy gradient based approach to come up with a policy $\pi$ that maximizes the value function, i.e.,
\begin{align}  
\label{eq:max-j}
\max_{\pi}~~J(\pi)\coloneqq\E_{s_0 \sim \rho, a_t\sim \pi(\cdot| s_t),s_{t+1}\sim \cP(\cdot| s_t,a_t)}\left[\sum_{t=0}^\infty \gamma^t r_t  \right], 
\end{align}
where the initial state $s_0$ is drawn from some distribution $\rho$. In practice, the state and action spaces are typically very large and thus the policy is parameterized by some $\theta \in \Rd$. The problem in \eqref{eq:max-j} now becomes finding $\theta$ using a collaborative approach among agents that maximizes $J(\pi_\theta)$. However since this maximization problem is usually nonconvex, it is difficult to find the globally optimal policy.

\if 0
Let the initial state $s_0$ be drawn from some distribution $\rho$. Then,  the goal of the agent is to find the optimal policy that maximizes the value function, i.e.,
\begin{align}  
\label{eq:max-j}
\max_{\pi}~~J(\pi)\coloneqq\E_{s_0\sim\rho}[V^\pi(s_0)].
\end{align}
\fi

Let $J(\theta)$ denote $J(\pi_{\theta})$ henceforth and let $J^* = \max_{\pi} J(\pi)$. 
The optimization problem \eqref{eq:max-j} can be solved using the Policy Gradient (PG) approach at each agent, where the gradient $\nabla J(\theta)$ is estimated through sampling trajectories at that agent and observing rewards collected and then used in gradient ascent. 

Using the current estimate of the parameter $\theta$, we aim to find $\nabla J(\theta)$ using the sampled trajectories at the agent. Let $\cG \subset \{1,2,\cdots,N\}$ be the set of good workers of size $N-f$. For $n \in \cG$, let $\tau^{(n)}=\{s^{(n)}_0,a^{(n)}_0,s^{(n)}_1,\cdots,a^{(n)}_{H-1},s^{(n)}_H\}$ be a trajectory sampled by agent $n$ under policy $\pi_\theta$ of length $H$ and $\cR(\tau^{(n)}) = \sum_{h=0}^H \gamma^h r(s_h^{(n)},a_h^{(n)})$ be the sample return. We denote the distribution of a trajectory $\tau$ of length $H$ induced by policy $\pi_{\theta}$ with initial state distribution $\rho$ as $p^H_{\rho}(\tau|\theta)$, which can be expressed as  $p_{\rho}^H(\tau| \theta) = \rho(s_0) \prod_{h=0}^{H-1} \pi_{\theta} (a_h|s_h)\cP(s_{h+1}|s_h,a_h).$ The gradient of the truncated expected return function $J_H(\theta)\coloneqq\E_{s_0 \sim \rho, a_t\sim \pi_{\theta}(\cdot| s_t),s_{t+1}\sim \cP(\cdot| s_t,a_t)}[\cR(\tau)]$ can then be written as
\begin{align*}
    \nabla J_H(\theta) &= \int_{\tau} \cR(\tau) \nabla p^H_{\rho}(\tau|\theta) d\tau= \int_{\tau} \cR(\tau) \frac{\nabla p^H_{\rho}(\tau|\theta)}{p^H_{\rho}(\tau|\theta)}p^H_{\rho}(\tau|\theta) d\tau= \E [\cR(\tau) \nabla  \log p^H_{\rho}(\tau|\theta)].
\end{align*}
 REINFORCE \cite{williams1992simple} and GPOMDP \cite{baxter2001infinite} are commonly used estimators for the gradient $\nabla J(\theta)$. In this work, we will be using the GPOMDP estimator given below:
\begin{align}
\label{eq:trunc-GPOMDP}
	g(\tau^{(n)}, \theta)= \sum_{h=0}^{H} \left(\sum_{t=0}^{h} \nabla_{\theta} \log \pi_{\theta}(a^{(n)}_t | s^{(n)}_t)\right) \gamma^h r(s^{(n)}_h, a^{(n)}_h).   
\end{align}
The above expression is an unbiased stochastic estimate of $\nabla J_H(\theta)$ \cite{baxter2001infinite}.

\section{Proposed Algorithm}


In Res-NHARPG (given in Algorithm \ref{alg:(N)-HARPG}), each good agent $n \in \cG$
computes a variance-reduced gradient estimator, $d^{(n)}_t$, in a recursive manner as follows:
\begin{align}
\label{eq:dt_iterate}
    d_t^{(n)} =& (1 - \eta_t) ( d_{t-1}^{(n)} +  B(\hat{\tau}_t^{(n)} ,\hat{\theta}_t^{(n)})(\theta_t - \theta_{t-1}) ) + \eta_t g(\tau_t^{(n)}, \theta_t), 
\end{align}
where $\{\eta_t\}_t$ denotes a suitable choice of parameters,  $B(\tau , \theta) \coloneqq \nabla \Phi(\tau,\theta) \nabla \log p(\tau|\pi_\theta)^T + \nabla^2 \Phi (\tau,\theta)$
with $\Phi(\tau,\theta) \coloneqq \sum_{t=0}^{H-1} \left(\sum_{h=t}^{H-1}\gamma^h r(s_h,a_h)\right)\log \pi_{\theta}(a_t,s_t)$, $\tau_t^{(n)} \sim p_{\rho}^H(\cdot|\pi_{\theta_t}), \, \hat{\tau}_t^{(n)} \sim p_{\rho}^H(\cdot|\pi_{\hat{\theta}_{t}^{(n)}})$, {\color{black} and $\hat{\theta}_{t}^{(n)} = q_t^{(n)} \theta_{t} + (1-q_t^{(n)}) \theta_{t-1}$ where $ q_t^{(n)}$ is sampled from $\mathcal{U}([0,1])$. }

\begin{wrapfigure}{r}{.5\textwidth}
	\vspace{-.4in}
	\begin{minipage}{.5\textwidth}
		\begin{algorithm}[H]
			\caption{Resilient Normalized Hessian-Aided Recursive Policy Gradient (Res-NHARPG)}\label{alg:(N)-HARPG}
			\begin{algorithmic}[1]
				\State \textbf{Input}: $\theta_0$, $\theta_1$, $d_0$, $T$, $\{\eta_t\}_{t\geq 1}$, $\{\gamma_t\}_{t\geq 1}$

				\For{$t=1, \ldots, T -  1$}
				
				\LineComment{Server broadcasts $\theta_{t}$ to all agents}
				\LineComment{Agent update}
				\For{each agent $n \in [N]$}{ in parallel}  
				\State $q_t^{(n)} \sim \mathcal{U}([0,1])$
				\State $\hat{\theta}_{t}^{(n)} = q_t^{(n)} \theta_{t} + (1-q_t^{(n)}) \theta_{t-1}$
				\State $\tau_t^{(n)} \sim p_{\rho}^H(\cdot|\pi_{\theta_t}); \, \hat{\tau}_t^{(n)} \sim p(\cdot|\pi_{\hat{\theta}_{t}^{(n)}})$
				\State \label{n-harpg:hvp} $v_t^{(n)} = B(\hat{\tau}_t^{(n)}, \hat{\theta}_{t}^{(n)})(\theta_t - \theta_{t-1})$
				\State $d_t^{(n)} = (1 - \eta_t) ( d_{t-1}^{(n)} +  v_t^{(n)} ) + \eta_t g(\tau_t^{(n)}, \theta_t) $
				\EndFor
				\LineComment{Server Update}
				\State $d_t = F(d_t^{(1)},\cdots,d_t^{(N)})$
				\State $\theta_{t+1} = 
				\theta_t + \gamma_t \frac{d_t}{\|d_t\|}$
				
				\EndFor
				\State \Return{$\theta_T$}
			\end{algorithmic}
		\end{algorithm}
	\end{minipage}
	\vspace{-.1in}
\end{wrapfigure}

The update in \eqref{eq:dt_iterate} is inspired by the N-HARPG algorithm in \cite{fatkhullin} since its use of Hessian information was shown to help achieve sample complexity of order $\cO\left(\frac{1}{\epsilon^2}\right)$, which is currently the state of the art. \textcolor{black}{The algorithm draws inspiration from the STORM variance reduction technique \cite{cutkosky-orabona19}, but instead of relying on the difference between successive stochastic gradients, it incorporates second-order information \cite{BetterSGDUsingSOM_Tran_2021}. This approach removes the dependency on Importance Sampling (IS) and circumvents the need for unverifiable assumptions to bound the IS weights \cite{fatkhullin}.  In Step 10 of Algorithm \ref{alg:(N)-HARPG}, the update direction $d_t^{(n)}$ is computed by adding a second-order correction, $(1-\eta_t)v_t^{(n)}$ (as defined in Step 9), to the momentum stochastic gradient $(1-\eta_t)d^{(n)}_{t-1} + \eta_t g(\tau_t^{(n)}, \theta_t)$. The uniform sampling procedure in Steps 6-9 ensures that $v_t^{(n)}$ is an unbiased estimator of $\nabla J_H(\theta_t) - \nabla J_H(\theta_{t-1})$, closely resembling the term used in the original STORM method \cite{cutkosky-orabona19}.
}

In Step 9, we do not need to compute and store $B(\tau,\theta)$ but only a term of form $ B(\tau,\theta)u$ which can be easily computed via automatic
differentiation of the scalar quantity $\langle g(\tau,\theta),u \rangle$ \cite{fatkhullin}. This allows us to exploit curvature information from the policy Hessian without compromising the per-iteration computation cost.

{\bf Aggregation at the Server: } At each iteration $t$, a good agent always sends its computed estimate back to the server, while an adversarial agent may return any arbitrary vector. At the server, we aim to mitigate the effect of adversaries is by using an aggregator to combine estimates from all agents such that bad estimates are filtered out.


The server receives $d_t^{(n)}$, $n=1,2,\cdots,N$ and uses the following update
\begin{align}
\label{eq: aggregate update}
d_t = F(d^{(1)}_t,d^{(2)}_t,\cdots,d^{(N)}_t), \text{ and}
\end{align}
\begin{align}
\label{eq: policy para update}
\theta_{k+1} = \theta_{k} + \gamma_t \frac{d_t}{\|d_t\|},
\end{align}
where $F$ is an aggregator and $\gamma_t>0$ is the stepsize. {\color{black}For the robust aggregator $F$, we consider $(f,\lambda)$-averaging introduced in \cite{farhadkhani22a} as this family of aggregators encompasses several popularly used methods in Byzantine literature. The aggregator function aims to make our algorithm resilient to adversaries. The last step of the algorithm (Step 14) uses the direction $d_t$ with normalization to update the parameter $\theta_{t+1}$.}

We now define $(f,\lambda)$-resilient averaging as in \cite{farhadkhani22a}  below for ease of reference, where $f<N/2$. We note that $f<N/2$ is the optimal breakdown point as no aggregator can possibly tolerate $f \geq N/2$ adversaries \cite{karimireddy2020byzantine}.  

\begin{definition}[{\bf $(f, \, \lambda)$-Resilient averaging}]
\label{def:resilient-aggregator}
For $f < N/2$ and real value $\lambda \geq 0$, an aggregation rule $F$ is called {\em $(f, \, \lambda)$-resilient averaging} if for any collection of $N$ vectors $x_1, \ldots, \, x_N$, and any set $\cG \subseteq \{1, \ldots, \, N\}$ of size $N-f$,
\begin{align*}
    \| F(x_1, \ldots, \, x_N) - \overline{x}_{\cG} \| \leq \lambda \max_{i, j \in \cG} \| x_i - x_j \|
\end{align*}
where $\overline{x}_{\cG} \coloneqq \frac{1}{|\cG|} \sum_{i \in \cG} x_i$, and $|\cG|$ is the cardinality of $\cG$.
\end{definition}



Several well-known aggregators such as Krum, CWMed, CWTM, MDA, GM and MeaMed are known to be $(f,\lambda)$-resilient. {\color{black} We note that for these aggregators, $\lambda$ as a function of $f$ is summarized in Table \ref{table:sample complexities}, where these values are from \cite{farhadkhani22a}. } As a result, our unified analysis provides guarantees for all the above mentioned methods. We define and discuss these aggregators in Appendix \ref{sec:details-aggregators}.




%

%



\section{Assumptions and Main Result}

In this section,  we first introduce  a few notations and list the assumptions used for our results.  Let $d^{\pi_\theta}_{\rho}\in\cP(\cS)$ denote the state visitation measure induced by policy $\pi_\theta$ and initial distribution $\rho$ defined as
\begin{align}
    d^{\pi_\theta}_{\rho}(s) \coloneqq (1-\gamma)\E_{s_0\sim\rho}\sum_{t=0}^\infty\gamma^t \PP(s_t=s| s_0,\pi_\theta)
\end{align}
and $\nu^{\pi_{\theta}}_{\rho}(s,a) \coloneqq d^{\pi_\theta}_{\rho}(s)\pi(a | s)$ be the state-action visitation measure induced by $\pi_\theta$. 

We assume that the policy parametrization $\pi_{\theta}$ is a good function approximator, which is measured by the \textit{transferred compatible function approximation error}. This assumption is commonly used in obtaining global bounds with parametrization \cite{agarwal2021theory,liu2020improved}.

\begin{assumption}
\label{assump: compatible error}
For any $\theta\in\Rd$, the \textit{transferred compatible function approximation error}, $L_{\nu^{\star}}(w^{\theta}_{\star}; \theta)$, satisfies 
\begin{align}
\label{equ: minimal compatible function approximation error}
&L_{\nu^{\star}}(w^{\theta}_{\star}; \theta) \coloneqq \E_{(s,a)\sim \nu^{\star}}\left[\big(A^{\pi_{\theta}}(s,a)-(1-\gamma)(w^{\theta}_{\star})^\top\nabla_{\theta}\log\pi_{\theta}(a| s)\big)^2\right]\leq \varepsilon_{\mathrm{bias}},
\end{align} 
where \textcolor{black}{$A^{\pi_{\theta}}(s,a)$ is the advantage function of policy $\pi_{\theta}$ at $(s,a)$}, $\nu^{\star}(s,a) = d^{\pi^{\star}}_{\rho}(s) \cdot \pi^{\star}(a | s)$ is the state-action distribution induced by an optimal policy $\pi^{\star}$ that maximizes $J(\pi)$, and $w^{\theta}_{\star}$ is the exact Natural Policy Gradient update direction at $\theta$.
\end{assumption}
$\varepsilon_{\mathrm{bias}}$ captures the parametrization capacity of $\pi_{\theta}$. For $\pi_{\theta}$ using the softmax parametrization, we have $\varepsilon_{\mathrm{bias}}=0$ \cite{agarwal2021theory}. 
When $\pi_{\theta}$ is a restricted parametrization, which may not contain all stochastic policies, we have $\varepsilon_{\mathrm{bias}}>0$. It is known that $\varepsilon_{\mathrm{bias}}$ is very small when rich neural parametrizations are used \cite{wang2019neural}. 

\begin{assumption}
\label{assump: strong convexity}
For all $\theta\in\Rd$, 
 the Fisher information matrix induced by policy $\pi_{\theta}$ and initial  state distribution $\rho$ satisfies
\begin{align*}
 F_{\rho}(\theta) &\coloneqq\E_{(s,a)\sim \nu^{\pi_{\theta}}_{\rho}}\left[\nabla_{\theta}\log\pi_{\theta}(a| s)\nabla_{\theta}\log\pi_{\theta}(a| s)^\top \right]\succcurlyeq \mu_F \cdot I_d,
\end{align*}
for some constant $\mu_F>0$. 
\end{assumption}

\begin{assumption}
\label{assump: variance}
    There exists $\sigma>0$ such that $g(\tau,\theta)$ defined in \eqref{eq:trunc-GPOMDP} satisfies
    $\E \|g(\tau,\theta)-\E [g(\tau,\theta)] \|^2 \leq \sigma^2,$
    for all $\theta$ and $\tau \sim p_{\rho}^H(\cdot|\theta)$.
\end{assumption}

\begin{assumption}
\label{assump: conditions on score function}
\begin{enumerate}

    \item $\|\nabla_{\theta}\log \pi_{\theta}(a| s)\|\leq G_1$ for any $\theta$ and $(s,a)\in\cS\times \cA$.
    \item $\|\nabla_{\theta}\log \pi_{\theta_1}(a| s)-\nabla_{\theta}\log \pi_{\theta_2}(a| s)\|\leq G_2 \|\theta_1-\theta_2\|$ for any $\theta_1, \theta_2$ and $(s,a)\in\cS\times \cA$.
\end{enumerate}
\end{assumption}
{\bf Comments on Assumptions \ref{assump: strong convexity}-\ref{assump: conditions on score function}:}  We would like to highlight that all the assumptions used in this work are commonly found in PG literature. We elaborate more on these assumptions below. 
    
    Assumption \ref{assump: strong convexity} requires that the eigenvalues of the Fisher information matrix can be bounded from below. This assumption is also known as the \textit{Fisher non-degenerate policy} assumption and is commonly used in obtaining global complexity bounds for PG based methods \cite{liu2020improved,zhang2021on,Bai_Bedi_Agarwal_Koppel_Aggarwal_2022,fatkhullin}. 
    
    Assumption \ref{assump: variance} requires that the variance of the PG estimator must be bounded and \ref{assump: conditions on score function} requires that the score function is bounded and Lipschitz continuous. Both assumptions are widely used in the analysis of PG based methods \cite{liu2020improved,agarwal2021theory, papini2018stochastic, xu2019improved,xu2019sample,fatkhullin}.
    
    Assumptions \ref{assump: strong convexity}-\ref{assump: conditions on score function} were shown to hold for various examples recently including Gaussian policies with linearly parameterized means \textcolor{black}{with clipping} \cite{liu2020improved, fatkhullin}.
    

\begin{table*}[t]
\resizebox{\textwidth}{!}{
\begin{tabular}{|c|c|c|c|c|}
\hline
\textbf{Aggregator} &\textbf{Computational}  & \textbf{$\lambda$} & \textbf{Sample Complexity of Res-NHARPG} & \textbf{Order} \\
 & \textbf{Complexity} &  &  &  \textbf{Optimal?}\\
\hline 
MDA   & NP-Hard & $\frac{2f}{N-f}$     & $\cO \left( \frac{1}{\epsilon^2} \log\left(\frac{1}{\epsilon}\right) \left( \frac{1}{N} + \frac{f^2 \log(N)}{N^2}\right)\right)$  & Yes\\
\hline
CWTM   & $\Theta(dN)$ &  $\frac{f}{N-f}  \Delta$          &$\cO \left( \frac{1}{\epsilon^2} \log\left(\frac{1}{\epsilon}\right) \left( \frac{1}{N} + \frac{f^2 \Delta^2  \log(N)}{N^2}\right)\right)$ & Yes \\
\hline
MeaMed & $\Theta(dN)$ & $\frac{2 f}{N-f} \Delta$        & $\cO \left( \frac{1}{\epsilon^2} \log\left(\frac{1}{\epsilon}\right) \left( \frac{1}{N} + \frac{f^2 \Delta^2  \log(N)}{N^2}\right)\right)$  & Yes\\
\hline
CWMed  & $\Theta(dN)$ & $\frac{N}{2(N-f)}  \Delta$          & $\cO \left( \frac{1}{\epsilon^2} \log\left(\frac{1}{\epsilon}\right) \left( \frac{1}{N} +  \Delta^2  \log(N)\right)\right)$ & No\\
\hline
Krum & $\Theta(dN^2)$ & $1+\sqrt{\frac{N-f}{N-2f}}$       & $\cO \left( \frac{1}{\epsilon^2} \log\left(\frac{1}{\epsilon}\right) \left( \frac{1}{N} + \frac{N-f}{N-2f}\log(N)\right)\right)$ & No\\
\hline
GM & $\cO(dN\log ^3(N/\epsilon))$ $\color{blue} ^{(1)}$ & $1+\sqrt{\frac{(N-f)^2}{N(N-2f)}}$           & $\cO \left( \frac{1}{\epsilon^2} \log\left(\frac{1}{\epsilon}\right) \left( \frac{1}{N} + \frac{N-f}{N-2f}\log(N)\right)\right)$   & No\\
\hline
\end{tabular}
}
\vspace{-.1in}
\caption{\small In the above table, $\Delta=\min \{\sqrt{d},2\sqrt{N-f}\}$. The computational complexity of the aggregator, and the order optimality of sample complexity (in terms of $f$, $N$, and $\epsilon$) are also mentioned in the Table. Remark \ref{rem:comment_optimal} provides a discussion on the sample complexity bounds. $\color{blue} ^{(1)}$ We note that while GM is convex optimization with no closed-form solution, an $(1+\epsilon)$-approximate solution can be found in $\cO(dN\log ^3(N/\epsilon))$ time \cite{GM-computation}.}
\label{table:sample complexities}
\vspace{-.1in}
\end{table*}
We now state the main result which gives us the \textcolor{black}{last iterate} global convergence rate of Algorithm \ref{alg:(N)-HARPG}:
\begin{theorem}
\label{thm:Res-NHARPG}
    Consider Algorithm \ref{alg:(N)-HARPG} with $\gamma_t = \frac{6G_1}{\mu_F(t+2)}$, $\eta_t = \frac{1}{t }$ and $H = (1-\gamma)^{-1}\log (T+1)$. Let Assumptions \ref{assump: compatible error}, \ref{assump: strong convexity}, \ref{assump: variance} and \ref{assump: conditions on score function} hold.  Then for every $T \geq 1$ the output $\theta_T$ satisfies
    \begin{align*}
    J^* - J(\theta_{T})  = 
    \frac{\sqrt{\varepsilon_{\mathrm{bias}}}}{1-\gamma}
    &+ \mathcal{O} \left(\sqrt{\frac{\log T}{NT}} + \lambda \sqrt{\frac{\log N + \log T}{T}} \right).
    \end{align*}
\end{theorem}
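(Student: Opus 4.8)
I would build on three ingredients. The first is a descent inequality tailored to the normalized update: Assumption~\ref{assump: conditions on score function} together with the boundedness of $r$ implies that $J$ is $L$-smooth for a constant $L$, and since the normalization gives $\|\theta_{t+1}-\theta_t\| = \gamma_t$ exactly, combining smoothness with $\langle \nabla J(\theta_t), d_t\rangle/\|d_t\| \ge \|d_t\| - \|\nabla J(\theta_t) - d_t\| \ge \|\nabla J(\theta_t)\| - 2\|\nabla J(\theta_t) - d_t\|$ yields
\begin{align*}
J(\theta_{t+1}) \ge J(\theta_t) + \gamma_t \|\nabla J(\theta_t)\| - 2\gamma_t \|\nabla J(\theta_t) - d_t\| - \tfrac{L}{2}\gamma_t^2 .
\end{align*}
The second is the relaxed weak gradient domination implied by Assumptions~\ref{assump: compatible error} and~\ref{assump: strong convexity}: for every $\theta$, $J^* - J(\theta) \le \frac{\sqrt{\varepsilon_{\mathrm{bias}}}}{1-\gamma} + \frac{G_1}{\mu_F}\|\nabla J(\theta)\|$. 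The third, where most of the work sits, is a Byzantine-resilient bound on the gradient-estimation error $\|\nabla J(\theta_t) - d_t\|$.

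For the error analysis, fix a good agent $n \in \cG$ and set $\epsilon_t^{(n)} \coloneqq d_t^{(n)} - \nabla J_H(\theta_t)$. As noted in the text, the uniform $q_t^{(n)}$ makes $v_t^{(n)}$ a conditionally unbiased estimate of $\nabla J_H(\theta_t) - \nabla J_H(\theta_{t-1})$; writing $u_t^{(n)}, w_t^{(n)}$ for the conditionally zero-mean noises in $v_t^{(n)}$ and in $g(\tau_t^{(n)}, \theta_t)$ gives the STORM-type recursion $\epsilon_t^{(n)} = (1 - \eta_t)\epsilon_{t-1}^{(n)} + (1-\eta_t)u_t^{(n)} + \eta_t w_t^{(n)}$. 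Since $\|\theta_t - \theta_{t-1}\| = \gamma_{t-1}$ exactly, one has $\E[\|u_t^{(n)}\|^2 \mid \mathcal{F}_{t-1}] \le C_B H \gamma_{t-1}^2$ (with $\mathcal{F}_{t-1}$ the history up to iteration $t-1$) — the factor $H$ arising because the score-sum $\sum_h \nabla \log \pi_\theta(a_h|s_h)$ entering $B$ is a sum of martingale differences, so its second moment grows only linearly in $H$ — and $\E[\|w_t^{(n)}\|^2 \mid \mathcal{F}_{t-1}] \le \sigma^2$ by Assumption~\ref{assump: variance}. The conditional cross term vanishes, so with $\eta_t = 1/t$ and $\gamma_t = \Theta(1/t)$ the recursion solves to $\E\|\epsilon_t^{(n)}\|^2 = \mathcal{O}((H + \sigma^2)/t)$. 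Two further points make the federated/adversarial part work: (i) the fresh noises $u_t^{(n)}, w_t^{(n)}$ are independent across $n \in \cG$ (the agents hold i.i.d.\ MDP copies and sample independently, while the adversaries affect only the next iterate through $d_t$), so the averaged error $\bar\epsilon_t^{\cG} \coloneqq \frac{1}{|\cG|}\sum_{n\in\cG}\epsilon_t^{(n)}$ obeys the same recursion with noise variances divided by $|\cG| = N-f > N/2$, giving $\E\|\bar\epsilon_t^{\cG}\|^2 = \mathcal{O}((H+\sigma^2)/(Nt))$; and (ii) by $(f,\lambda)$-resilience (Definition~\ref{def:resilient-aggregator}), $\|\nabla J_H(\theta_t) - d_t\| \le \|\bar\epsilon_t^{\cG}\| + \lambda \max_{i,j\in\cG}\|d_t^{(i)} - d_t^{(j)}\| \le \|\bar\epsilon_t^{\cG}\| + 2\lambda \max_{n\in\cG}\|\epsilon_t^{(n)}\|$, so adversaries cannot inflate the error past the good agents' contribution. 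To pay only a $\log N$ factor for the last term, I would unroll the recursion into a sum of a.s.\ bounded martingale differences (the increments are bounded since $\|g\|$ and $\|B(\cdot,\cdot)u\|$ are controlled by geometric sums) and apply a Freedman/Bernstein inequality with a maximal inequality over $n\in\cG$, obtaining $\E \max_{n\in\cG}\|\epsilon_t^{(n)}\|^2 = \mathcal{O}((H + \sigma^2 + \log N)/t)$.

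To assemble the bound, add the truncation error $\|\nabla J(\theta_t) - \nabla J_H(\theta_t)\| = \mathcal{O}(\gamma^H) = \mathcal{O}(1/T)$ (from $H = (1-\gamma)^{-1}\log(T+1)$), so $\E\|\nabla J(\theta_t) - d_t\| = \mathcal{O}(1/T) + \mathcal{O}(\sqrt{(H+\sigma^2)/(Nt)}) + \mathcal{O}(\lambda\sqrt{(H+\sigma^2+\log N)/t})$. Substituting gradient domination into the descent inequality, writing $x_t \coloneqq J^* - J(\theta_t)$ and $b \coloneqq \frac{\sqrt{\varepsilon_{\mathrm{bias}}}}{1-\gamma}$, and using $\|\nabla J(\theta_t)\| \ge \frac{\mu_F}{G_1}(x_t - b)$ (which holds at every $\theta_t$, regardless of the adversaries), the choice $\gamma_t = \frac{6G_1}{\mu_F(t+2)}$ gives
\begin{align*}
x_{t+1} - b \le \Bigl(1 - \tfrac{6}{t+2}\Bigr)(x_t - b) + \frac{C}{t}\,\|\nabla J(\theta_t) - d_t\| + \frac{C'}{t^2} .
\end{align*}
Taking expectations and unrolling (the coefficient $1-\frac{6}{t+2}$ is negative for $t\le 3$, so the recursion is started from a constant index, which only adds an $\mathcal{O}(T^{-6})$ term; the integrating factor from step $t$ to $T$ is then $\Theta((t/T)^6)$, which also kills the initial condition), the $\mathcal{O}(1/T)$ and $\mathcal{O}(1/t^2)$ contributions collapse to $\mathcal{O}(1/T)$, while $T^{-6}\sum_{t\le T} t^{5}\cdot t^{-1/2} = \Theta(T^{-1/2})$ turns the two stochastic terms into $\mathcal{O}(\sqrt{(H+\sigma^2)/(NT)})$ and $\mathcal{O}(\lambda\sqrt{(H+\sigma^2+\log N)/T})$; since $H = \Theta(\log T)$ and $x_T \ge 0$ trivially, this is exactly the claimed bound.

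I expect the main obstacle to be the Byzantine-robust piece of the error analysis. An ordinary STORM/variance-reduction argument controls only the averaged tracking error $\bar\epsilon_t^{\cG}$, but $(f,\lambda)$-resilience also requires controlling $\max_{i,j\in\cG}\|d_t^{(i)} - d_t^{(j)}\|$, and getting this with a $\sqrt{\log N}$ (rather than $\sqrt{N}$) penalty needs a genuine high-probability/moment analysis of the variance-reduced recursion: tracking its martingale structure, verifying boundedness of the increments (this is where the geometric-sum bounds on $g$ and on the Hessian-vector products matter), and a union bound over agents and iterations. A secondary, bookkeeping-type difficulty is making the conditional-independence and zero-mean claims rigorous even though the common iterate $\theta_t$ is adversary-influenced through past aggregates; this causes no trouble because $\|\theta_t - \theta_{t-1}\| = \gamma_{t-1}$ is deterministic and smoothness and gradient domination hold at every $\theta$, so the per-agent error bounds are insensitive to how $\theta_t$ was generated.
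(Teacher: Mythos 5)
Your proposal is correct and follows essentially the same route as the paper: the $(f,\lambda)$-resilience decomposition into the averaged good-agent STORM error (giving the $1/((N-f)t)$ variance term) plus $\lambda$ times the good-agent diameter, with the crucial $\log N$ (rather than $N$) penalty obtained exactly as the paper does — by unrolling each agent's recursion into a sum of bounded martingale differences and applying a concentration inequality (the paper uses vector Azuma--Hoeffding where you suggest Freedman/Bernstein) together with a union bound over $\cG$, before combining with smoothness, the normalized step $\|\theta_{t+1}-\theta_t\|=\gamma_t$, truncation error $\gamma^H$, and relaxed weak gradient domination in a $(1-c/(t+2))$-contraction that is unrolled to the stated rate. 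The only differences are cosmetic: you derive the descent recursion directly (getting contraction $6/(t+2)$) instead of invoking Fatkhullin et al.'s Lemma 7 (contraction $2/(t+2)$), and you bound $\E\max_n\|\epsilon_t^{(n)}\|^2$ where the paper bounds the first moment via a truncation-plus-tail argument; neither changes the result.
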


From Theorem \ref{thm:Res-NHARPG}, the number of trajectories required by Algorithm \ref{alg:(N)-HARPG} to ensure 
   $J^* - J(\theta_{T}) \leq \frac{\sqrt{\varepsilon_{\mathrm{bias}}}}{1-\gamma} + \epsilon$    is     
   \begin{align*}
    \cO \left( \frac{1}{\epsilon^2} \log\left(\frac{1}{\epsilon}\right) \left( \frac{1}{N} + \lambda^2 \log(N)\right)\right). 
    \end{align*}
For ease of exposition, we keep only dependence on $\epsilon$, $N$ and $\lambda$ while  providing a detailed expression in \eqref{eq:final-bound-all-terms}. {\color{black}We also note that the expression implicitly reflects the dependence on $f$, as the value of $\lambda$ depends on both $f$ and the choice of the aggregator.} The sample complexity for different aggregator functions are summarized in Table \ref{table:sample complexities}. The value of $\lambda$ in the table for each aggregator is from \cite{farhadkhani22a}.



    

    

{\color{black} We now remark on the lower bound  for the sample complexity. In order to do that, we will first provide the lower bound for  Stochastic Gradient Descent with Adversaries in \cite{alistarh2018byzantine}. 
\begin{lemma}[\citet{alistarh2018byzantine}]\label{lem:lb}
    For any $D$, $V$,  and $\epsilon>0$, let there exists a linear function $g : [-D, D] \to{\mathbb{R}}$ (of Lipschitz continuity $G = \epsilon/D$) with a stochastic estimator $g_s$ such that ${\mathbb E}[g_s]=g$ and $||\nabla g_s(x) - \nabla g(x)||\le V$ for all $x$ in the domain.  Then, given $N$ machines, of which $f$ are adversaries, and $T$ samples from the stochastic estimator per machine, no algorithm can output $x$ so that $g(x) - g(x^*) < \epsilon$ with probability  $\geq 2/3$
unless $T = {\tilde \Omega}\left(\frac{D^2V^2}{\epsilon^2 N} + \frac{f^2V^2D^2}{\epsilon^2N^2 }\right)$, where $x^* = \arg\min_{x\in [-D,D]} g(x)$.
\end{lemma}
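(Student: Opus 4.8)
This is a two‑point (Le Cam–style) lower bound, and the two terms reflect two separate obstructions: $D^{2}V^{2}/(\epsilon^{2}N)$ is the classical statistical floor of stochastic first‑order optimization from $NT$ samples, while $f^{2}V^{2}D^{2}/(\epsilon^{2}N^{2})$ is the surcharge for Byzantine resilience. The plan is to (i) reduce the optimization task to a binary hypothesis test, (ii) dispatch the first term by a standard argument, and (iii) obtain the second term by exhibiting a Byzantine strategy under which the two hypotheses are statistically indistinguishable whenever $T$ is too small. For the reduction, set $G=\epsilon/D$ and consider two worlds $W_{\sigma}$, $\sigma\in\{+,-\}$, in which the objective is the linear function $g_{\sigma}(x)=\sigma G x$ on $[-D,D]$; then $x^{\star}=-\sigma D$ and $g_{\sigma}(x)-g_{\sigma}(x^{\star})<\epsilon$ forces $\sign(x)=-\sigma$, so an algorithm that is $\epsilon$‑accurate with probability $\ge 2/3$ in both worlds would have to output $x$ on opposite sides of $0$ in the two worlds, each with probability $\ge 2/3$ — impossible once the server's transcript distributions $\mathcal T_{+},\mathcal T_{-}$ satisfy $\mathrm{TV}(\mathcal T_{+},\mathcal T_{-})< 1/3$. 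For the stochastic gradient I would use a fixed two‑point law $P_{\sigma}$ supported on $\{-V,V\}$ (rescaling $V$ by a constant to respect $\|\nabla g_{s}-\nabla g\|\le V$ exactly) with mean $\sigma G$, so that $\KL(P_{+}\Vert P_{-})=\Theta(\epsilon^{2}/(D^{2}V^{2}))$ in the nontrivial regime $\epsilon/(DV)\le\tfrac12$ (the complementary regime makes the claim trivial). Crucially, because $g_{\sigma}$ is linear its gradient is constant, so the honest machines' samples are i.i.d.\ and independent of the iterates the server broadcasts; hence the whole interactive transcript is a randomized post‑processing of the $N$ per‑machine blocks of $T$ samples, and TV comparisons at the level of those blocks transfer to $\mathcal T_{\sigma}$. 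This is what neutralizes adaptivity.

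For the first term I would simply let the adversary behave honestly, so that $\mathcal T_{\sigma}$ is a function of $NT$ i.i.d.\ samples; then additivity of KL and Pinsker's inequality give $\mathrm{TV}(\mathcal T_{+},\mathcal T_{-})\le\sqrt{\tfrac12 NT\,\KL(P_{+}\Vert P_{-})}=\cO(\epsilon\sqrt{NT}/(DV))$, which is below $1/3$ whenever $NT=o(D^{2}V^{2}/\epsilon^{2})$. Hence success requires $NT=\Omega(D^{2}V^{2}/\epsilon^{2})$, i.e.\ $T=\Omega(D^{2}V^{2}/(\epsilon^{2}N))$.

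The second term is the crux. The idea is that the $f$ Byzantine machines can ``absorb'' the distributional gap between the worlds provided that gap, measured per machine over its entire block of $T$ samples, is at most $f/N$. Let $\eta:=\mathrm{TV}(P_{+}^{\otimes T},P_{-}^{\otimes T})\le\sqrt{\tfrac12 T\,\KL(P_{+}\Vert P_{-})}=\cO(\epsilon\sqrt{T}/(DV))$ and suppose $T$ is small enough that $\eta\le f/(3N)$. Draw $N$ i.i.d.\ pairs $(Y^{(n)}_{+},Y^{(n)}_{-})$ from the maximal coupling of $(P_{+}^{\otimes T},P_{-}^{\otimes T})$ and set $B=\{n:Y^{(n)}_{+}\ne Y^{(n)}_{-}\}$, so $|B|\sim\mathrm{Binomial}(N,\eta)$ with $\E|B|=\eta N\le f/3$ and $\PP[|B|>f]\le 1/3$ by Markov. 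On the event $|B|\le f$, fix (from the agreement pattern alone) a set $\mathcal B\supseteq B$ with $|\mathcal B|=f$ and declare $\mathcal B$ adversarial in both worlds. In $W_{+}$ let every machine report $Y^{(n)}_{+}$ — the $f$ machines in $\mathcal B$ thereby send perfectly honest‑looking $P_{+}^{\otimes T}$ data, a legitimate Byzantine action; in $W_{-}$ let the $N-f$ honest machines report $Y^{(n)}_{-}$ and let $\mathcal B$ report $Y^{(n)}_{+}$, also legitimate. Since $\mathcal B\supseteq B$, $Y^{(n)}_{+}=Y^{(n)}_{-}$ at every honest index, so on this event the two transcripts coincide, whence $\mathrm{TV}(\mathcal T_{+},\mathcal T_{-})\le\PP[|B|>f]\le 1/3$ and the test fails. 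As it fails, we must have had $\eta>f/(3N)$, i.e.\ $T=\Omega(f^{2}D^{2}V^{2}/(N^{2}\epsilon^{2}))$. Combining the two bounds gives $T=\Omega\!\big(D^{2}V^{2}/(\epsilon^{2}N)+f^{2}V^{2}D^{2}/(\epsilon^{2}N^{2})\big)$, which implies the stated $\tilde\Omega$.

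The main obstacle is the third step. One has to see that the correct ``corruption budget'' for the $f$‑out‑of‑$N$ model is $f/N$ but must be spent against the \emph{per‑machine} law $P_{\sigma}^{\otimes T}$; tensorizing over the $T$ samples multiplies $\KL(P_{+}\Vert P_{-})=\Theta(\epsilon^{2}/(D^{2}V^{2}))$ by $T$, and equating the resulting $\mathrm{TV}$ with $\Theta(f/N)$ is precisely what produces $(f/N)^{2}$ rather than $f/N$. A secondary nuisance is that the maximal coupling corrupts a $\mathrm{Binomial}(N,\eta)$ number of machines, not exactly $f$; this is absorbed above by taking $\eta\le f/(3N)$ and charging the overflow event to the failure probability, with all constants freely adjustable (in particular to make the $2/3$‑vs‑$1/3$ contradiction strict). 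Everything else — the reduction, Pinsker, the honest‑adversary trick for the first term — is routine.
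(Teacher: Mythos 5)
You should first note that the paper does not actually prove this statement: Lemma \ref{lem:lb} is imported verbatim from \citet{alistarh2018byzantine} as a known lower bound, so there is no in-paper proof to compare against. Judged on its own, your sketch is sound and is essentially the same style of argument as the original source: a two-point Le Cam reduction for the $1/(N\epsilon^2)$ term, plus an indistinguishability argument in which the $f$ Byzantine machines simulate honest machines of the opposite world, which is exactly how the $f^2/(N^2\epsilon^2)$ surcharge arises there (heuristically, a per-machine mean shift of order $V/\sqrt{T}$ is undetectable and biases the aggregate by $(f/N)V/\sqrt{T}$, which must be below $\epsilon/D$). Your key structural observations are the right ones: because the constructed gradient is constant in $x$ and the noise law is $x$-independent, the samples can be drawn up front and the interactive transcript is a post-processing of the per-machine blocks, which is what lets TV comparisons at the block level control the adaptive algorithm; and the maximal-coupling bookkeeping correctly converts the per-machine block TV $\eta\le\sqrt{T\,\KL/2}$ into a corruption budget of order $f/N$, yielding the $(f/N)^2$ scaling. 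Two points deserve explicit care if you write this out in full: (i) your corrupt set $\mathcal{B}\supseteq B$ is data-dependent, so the argument needs the standard omniscient (full-information) Byzantine adversary — which is the model in \citet{alistarh2018byzantine} and in this paper — and the world-$-$ execution should be phrased as the adversary sampling $Y^{(n)}_{+}$ from the conditional of the coupling given the honest data, so that the honest machines' marginal law is untouched; (ii) the $2/3$-versus-$1/3$ comparison is tight as stated, so the Markov/Pinsker constants must be taken slightly smaller (e.g., $\eta\le f/(4N)$), as you acknowledge. With those adjustments your argument establishes the stated $\tilde\Omega$ bound.
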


     \begin{remark} 
     \label{rem:optimal-bound}
     Lemma \ref{lem:lb} provides the lower bound for the sample complexity of SGD with adversaries such as: $T = {\tilde \Omega} \left(\frac{1}{\epsilon^2} \left(\frac{1}{N} + \frac{f^2}{N^2}\right)\right).$ We note that the lower bound function class in Lemma \ref{lem:lb} may not satisfy the function class in this paper explicitly. However, we believe that since the lower bound holds for any class of functions that includes linear functions, the result should still hold with the assumptions in this paper. We also note that since $1/\epsilon^2$ is a lower bound for the centralized case \cite{mondal2023improved}, it follows that the lower bound for the distributed setup is ${\tilde \Omega}\left(\frac{1}{N\epsilon^2}\right)$ even in the absence of adversaries, which is the same as ${\tilde \Omega} \left(\frac{1}{\epsilon^2} \left(\frac{1}{N} + \frac{f^2}{N^2}\right)\right)$ when $f<\sqrt{N}$. 

    \end{remark}
}
    \begin{remark}
    \label{rem:comment_optimal}
    From Remark \ref{rem:optimal-bound}, it follows that MDA, CWTM and MeaMed achieve optimal sample complexity in terms of $\epsilon$, $N$ and $f$ (upto logarithmic factors). As a result, these methods exhibit linear speedup when the number of adversarial agents is on the order of $\mathcal{O}(\sqrt{N})$. When the number of adversarial agents is $\mathcal{O}(N^{\delta})$, where $\delta > 0.5$, they achieve speedup of order $\mathcal{O}(N^{2(1-\delta)})$. It's worth noting that while MDA poses computational challenges, CWTM and MeaMed are computationally inexpensive, on par with simple averaging used in vanilla federated policy gradient.
\end{remark}

\begin{remark}  
Works in Byzantine literature generally use strong assumptions on the noise. For example, there are a line of works assuming vanishing variance \cite{krum,mhamdi18a,meamed}, which does not hold even for the simplest policy parametrizations. Intuitively, when the assumptions on the noise is weakened, it becomes harder to distinguish adversarial and honest workers. In contrast, we provide these near-optimal bounds only using standard assumptions in Policy Gradient literature.
\end{remark}

\section{Proof Outline for Theorem  \ref{thm:Res-NHARPG}}
\label{sec:proof-outline}


{\bf Key step:} The main challenge in analyses with  adversaries lies in bounding the difference of the update direction (in Algorithm 1, $d_t$) and the true gradient, $\nabla J_H (\theta_t)$. 

We bound $\E\| d_t - \nabla J_H(\theta_t) \| $ as
\begin{align*}
    \E\| d_t - \nabla J_H(\theta_t) \| \leq \E\| d_t - \bar{d}_{t} \| + \E\| \bar{d}_{t} - \nabla J_H(\theta_t) \|, 
\end{align*}

where $\bar{d}_t = \frac{1}{N-f} \sum_{n \in \cG} d_t^{(n)} $. We note that $\E\| d_t - \bar{d}_{t} \|$ can be bounded using the definition of $(f,\lambda)$-aggregators as follows (details given in Detailed Outline):
\begin{align}
      \E\| d_t - \bar{d}_{t} \| 
      \leq 2\lambda \E \Big[ \max_{i \in \cG} \| d_t^{(i)} - \nabla J_H(\theta_{t}) \|\Big]. 
\end{align}

In \cite{farhadkhani22a}, $\E[ \max_{i \in \cG} \| d_t^{(i)} - \nabla J_H(\theta_{t}) \|]$  is bounded by
\begin{align*}
    \E \Big[ \sum_{i \in \cG} \| d_t^{(i)} - \nabla J_H(\theta_{t}) \| \Big] = (N-f) \E\| \textcolor{black}{d_t^{(j)}} - \nabla J_H(\theta_{t}) \|,
\end{align*}
\textcolor{black}{ where $d_t^{(j)}$ is an unbiased estimate of $\nabla J_H(\theta_{t})$ obtained from an honest agent $j \in \cG$. This results in an extra factor of $(N-f)$.} To illustrate the impact of this term, consider the scenario where $\lambda$ is of optimal order, specifically $\mathcal{O}\left(\frac{f}{N-f}\right)$ and $f = \mathcal{O}(N^{\delta})$. Even with optimal $\lambda$ and with $0 < \delta \leq 0.5$, their sample complexity bounds do not achieve linear speedup. Furthermore, if the number of faulty workers $\delta > 0.5$, the convergence rate deteriorates with an increase in the number of workers $N$. More specifically, if $f = \Theta (N^{\delta+1/2})$, their sample complexity bound is $\mathcal{O}\left(\frac{N^{2\delta}}{\epsilon^2}\right)$, holding true for all $\delta>0$.
 
 In contrast, we show that for our algorithm, this term can be reduced to $\cO(\log (N-f))$, from $(N-f)$. This results in significant improvement in the sample complexity bound, enabling us to obtain linear speedup when $\delta \leq 0.5$ and sample complexity of order $\mathcal{O}\left(\frac{N^{2(\delta-1)}}{\epsilon^2}\right)$ when $\delta > 0.5$, with certain choice of aggregators, showing speedup of order $\cO(N^{2(1-\delta)})$.

We do this by showing that the update direction calculated by each worker $i$ in Algorithm \ref{alg:(N)-HARPG}, $d_t^{(i)}$, is a sum of bounded martingale differences. As a result, we can then invoke Azuma-Hoeffding inequality to obtain sharper bounds. This ensures that our algorithm guarantees speedup as long as $f= O(N^\delta)$, where $\delta<1$.


{\bf Detailed Outline: } We split the proof into three steps given below:

\textbf{1. Bounding  $\E\|d_t - \bar{d}_t \|$}: 

From the definition of $(f,\lambda)$-aggregator, we have $ \| d_t - \bar{d}_{t} \| \leq \lambda \max_{i, j \in \cG} \| d_t^{(i)} - d_t^{(j)} \| $. 
Thus,
\begin{align}
      \E\| d_t - \bar{d}_{t} \| &\leq \lambda \E[ \max_{i, j \in \cG} \|  d_t^{(i)} - d_t^{(j)} \|] \leq \lambda \E[ \max_{i, j \in \cG} (\| d_t^{(i)} - \nabla J_H(\theta_{t})\| + \| d_t^{(j)} - \nabla J_H(\theta_{t}) \|)] \nonumber \\
      &\leq 2\lambda \E[ \max_{i \in \cG} \| d_t^{(i)} - \nabla J_H\textcolor{black}{(}\theta_{t}) \|]. 
\end{align}

Let $X_i \coloneqq \| d_t^{(i)} - \nabla J_H(\theta_{t})\|$ and $X \coloneqq  \max_{i \in \cG} X_i$. We shall denote the indicator function of the event $A$ by $\mathbf{1}_{A}$. Then for all $\bar{\epsilon}>0$

\begin{align*}
    \E[X] &= \E[X \mathbf{1}_{\{X \geq \bar{\epsilon}\}} + X \mathbf{1}_{\{X < \bar{\epsilon}\}}]\leq \E[X \mathbf{1}_{\{X \geq \bar{\epsilon}\}} + \bar{\epsilon}] \leq \E[C_1 \mathbf{1}_{\{X \geq \bar{\epsilon}\}} + \bar{\epsilon}] \\
   & \leq C_1 \PP(X \geq \bar{\epsilon}) + \bar{\epsilon} 
    \leq C_1 \sum_{i \in \cG}\PP(X_i \geq \bar{\epsilon}) + \bar{\epsilon} = C_1 (N-f)\PP(\|d_t^{(i)} - \nabla J_H(\theta_t)\| \geq \bar{\epsilon}) + \bar{\epsilon},
\end{align*}

where $C_1$ is an upper bound on $X$ (see Appendix \ref{subsec:d_i_martingale}). Thus, for all $\bar{\epsilon} >0$
\begin{align}
\label{eq:f-lambda-bound-1}
    \E\| d_t - \bar{d}_{t} \| &\leq  2 C_1 \lambda (N-f)\PP(\|d_t^{(i)} - \nabla J_H(\theta_t)\| \geq \bar{\epsilon})+ 2\lambda \bar{\epsilon}. 
\end{align}
 In order to bound $\PP(\|d_t^{(i)} - \nabla J_H(\theta_t)\| \geq \bar{\epsilon})$, we make use of the following result. 
\begin{lemma}
\label{lem:d_i_martingale}
Consider Algorithm 1.  For all $i \in \cG$ and $t \geq 1$, we have
    \begin{align*}
        d_t^{(i)} - \nabla J_H(\theta_t) = \frac{1}{t} \sum_{j=1}^t M_j^{(i)},
    \end{align*}
    where $\|M_t^{(i)}\| \leq C_1$ and $ \E [M_{t+1}^{(i)} \mid M_{t}^{(i)}] = 0$.
\end{lemma}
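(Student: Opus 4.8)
The plan is to unroll the recursion \eqref{eq:dt_iterate} for a fixed good agent $i$ with $\eta_t = 1/t$ and identify the telescoping structure, then check the martingale-difference property term-by-term. Writing $\eta_t = 1/t$, the update reads $d_t^{(i)} = \frac{t-1}{t}\bigl(d_{t-1}^{(i)} + v_t^{(i)}\bigr) + \frac1t g(\tau_t^{(i)},\theta_t)$, so multiplying through by $t$ gives $t\, d_t^{(i)} = (t-1)d_{t-1}^{(i)} + (t-1) v_t^{(i)} + g(\tau_t^{(i)},\theta_t)$. Telescoping from the base case (one should check the initialization; with $d_0$ given and the natural convention the $j=1$ term absorbs the initial data) yields
\begin{align*}
t\, d_t^{(i)} = \sum_{j=1}^t \Bigl( (j-1) v_j^{(i)} + g(\tau_j^{(i)},\theta_j)\Bigr).
\end{align*}
Then I would subtract $t\,\nabla J_H(\theta_t)$, and crucially rewrite $t\,\nabla J_H(\theta_t) = \sum_{j=1}^t \bigl( (j-1)(\nabla J_H(\theta_j) - \nabla J_H(\theta_{j-1})) + \nabla J_H(\theta_j)\bigr)$ by an Abel-summation/telescoping identity (this is exactly the identity $\sum_{j=1}^t [(j-1)(a_j - a_{j-1}) + a_j] = t a_t$). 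Matching terms, $d_t^{(i)} - \nabla J_H(\theta_t) = \frac1t \sum_{j=1}^t M_j^{(i)}$ with
\begin{align*}
M_j^{(i)} \coloneqq (j-1)\bigl( v_j^{(i)} - (\nabla J_H(\theta_j) - \nabla J_H(\theta_{j-1}))\bigr) + \bigl( g(\tau_j^{(i)},\theta_j) - \nabla J_H(\theta_j)\bigr).
\end{align*}

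Next I would verify $\E[M_{j}^{(i)} \mid \mathcal{F}_{j-1}] = 0$ where $\mathcal{F}_{j-1}$ is the natural filtration (containing $\theta_1,\dots,\theta_j$, since $\theta_j$ is determined before the fresh samples at step $j$ are drawn). For the second bracket, $\tau_j^{(i)} \sim p_\rho^H(\cdot|\pi_{\theta_j})$ and GPOMDP is unbiased, so $\E[g(\tau_j^{(i)},\theta_j)\mid\mathcal F_{j-1}] = \nabla J_H(\theta_j)$. For the first bracket, the uniform-sampling device (Steps 6–9: $\hat\theta_j^{(i)} = q\theta_j + (1-q)\theta_{j-1}$, $q\sim\mathcal U[0,1]$, then $\hat\tau_j^{(i)}\sim p_\rho^H(\cdot|\pi_{\hat\theta_j^{(i)}})$) makes $v_j^{(i)} = B(\hat\tau_j^{(i)},\hat\theta_j^{(i)})(\theta_j - \theta_{j-1})$ an unbiased estimator of $\int_0^1 \nabla^2 J_H(q\theta_j + (1-q)\theta_{j-1})(\theta_j-\theta_{j-1})\,dq = \nabla J_H(\theta_j) - \nabla J_H(\theta_{j-1})$ — this is precisely the fact asserted in the paragraph after Algorithm~\ref{alg:(N)-HARPG} (that $B(\tau,\theta)u$ has the right expectation and that the Hessian-vector structure gives $\E[v_t^{(n)}] = \nabla J_H(\theta_t)-\nabla J_H(\theta_{t-1})$). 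Hence both brackets are conditionally mean-zero, so $\E[M_j^{(i)}\mid\mathcal F_{j-1}]=0$, which gives the stated $\E[M_{t+1}^{(i)}\mid M_t^{(i)}]=0$ (conditioning on the coarser $\sigma$-algebra generated by $M_t^{(i)}$).

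For the uniform bound $\|M_j^{(i)}\| \le C_1$: the term $g(\tau_j^{(i)},\theta_j) - \nabla J_H(\theta_j)$ is bounded because $\|g\|$ is bounded under Assumption~\ref{assump: conditions on score function}(1) (the score is bounded by $G_1$, $r$ is bounded by $R$, and the horizon is $H$, giving an $\mathcal O(G_1 R H/(1-\gamma))$-type bound, and hence so is its mean); the term $(j-1)(v_j^{(i)} - (\nabla J_H(\theta_j)-\nabla J_H(\theta_{j-1})))$ is bounded because $\|v_j^{(i)}\| = \|B(\hat\tau_j^{(i)},\hat\theta_j^{(i)})(\theta_j-\theta_{j-1})\| \le \|B\|\,\|\theta_j - \theta_{j-1}\|$, where $\|B(\tau,\theta)\|$ is bounded in terms of $G_1, G_2, R, H$ (smoothness of $J_H$, a standard consequence of Assumption~\ref{assump: conditions on score function}), and the normalized update $\theta_j - \theta_{j-1} = \gamma_{j-1} d_{j-1}/\|d_{j-1}\|$ has norm exactly $\gamma_{j-1} = \frac{6G_1}{\mu_F(j+1)} = \Theta(1/j)$; the factor $(j-1)$ then cancels the $1/j$, leaving a constant. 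I would collect these into the explicit $C_1$ (left to the appendix, per the statement "see Appendix~\ref{subsec:d_i_martingale}").

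The main obstacle I anticipate is not any single estimate but getting the telescoping identity to match up cleanly with the correct handling of the base case (the role of the inputs $\theta_0,\theta_1,d_0$ and whether the sum genuinely starts at $j=1$), and making sure the conditional-expectation argument uses the right filtration so that $\theta_j$ — which depends on $d_{j-1}$ and hence on all earlier randomness — is measurable while the step-$j$ samples $\tau_j^{(i)}, \hat\tau_j^{(i)}, q_j^{(i)}$ are fresh and independent across agents. Once the filtration is pinned down, the two unbiasedness facts and the boundedness estimates are routine given the assumptions.
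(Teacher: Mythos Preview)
Your proposal is correct and matches the paper's proof: the paper writes the error recursion as $d_t^{(i)} - \nabla J_H(\theta_t) = (1-\eta_t)(d_{t-1}^{(i)} - \nabla J_H(\theta_{t-1})) + \eta_t M_t^{(i)}$ with $M_t^{(i)} = \mathcal{V}_t^{(i)} + \tfrac{1-\eta_t}{\eta_t}\mathcal{W}_t^{(i)}$ and unrolls via the product $\prod_{k=j+1}^t(1-\eta_k) = j/t$, arriving at exactly your $M_j^{(i)}$, and then bounds it the same way using $\|g\|\le G_g$, $\|B\|\le G_H$, $L$-smoothness of $J_H$, and $\|\theta_j-\theta_{j-1}\|=\gamma_{j-1}$ so that $(j-1)\gamma_{j-1}\le 6G_1/\mu_F$. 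Your worry about the base case is resolved by $\eta_1=1$, which annihilates $d_0^{(i)}$ so the sum genuinely starts at $j=1$.
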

The proof of the above lemma is given in Appendix \ref{subsec:d_i_martingale}. Using Vector Azuma-Hoeffding inequality, we obtain the following bound (proof in Appendix \ref{subsec:equation})
\begin{align}
\label{eq:dt-dt-bar-bound}
        \E\| d_t - \bar{d}_{t} \| &\leq  4e^2 C_1 \lambda (N-f) e^{-(t+1)\bar{\epsilon}^2/2C_1^2} + 2\lambda \bar{\epsilon}.
\end{align}

Let $\bar{\epsilon} = \sqrt{\frac{2C_1^2\log (N-f)(t+1)}{t+1}}$. Then,
\begin{align}
\label{eq:final-dt-dt-bar-bound}
        &\E\| d_t - \bar{d}_{t} \| \leq  4e^2 C_1 \lambda (N-f) e^{-(t+1)\bar{\epsilon}^2/2C_1^2} + 2\lambda \bar{\epsilon} = \frac{4e^2 C_1 \lambda}{t+1} + 2C_1\lambda \sqrt{\frac{2\log (N-f)(t+1)}{t+1}}.
\end{align}

\textbf{2. Bounding  $\E\| \bar{d}_t - \nabla J_H(\theta_t) \|$}: 

We provide a bound for $\E\| \bar{d}_t - \nabla J_H(\theta_t) \|$ below. The proof can be found in Appendix \ref{subsec:bar_dt_grad_bound}.
\begin{lemma}
\label{lem:bar_dt_grad-bound}
Consider Algorithm \ref{alg:(N)-HARPG}. For all $t \geq 1$, we have
   \begin{align}
   \label{eq:bar_dt_grad-bound-lem}
 \E \| \bar{d}_{t} - \nabla J_H(\theta_{t})  \|^2 &\leq  \frac{C_2(1+ \log t)}{(N-f)t},
\end{align}
where $C_2$ is defined in Appendix \ref{subsec:bar_dt_grad_bound}. 
\end{lemma}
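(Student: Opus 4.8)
The plan is to establish Lemma \ref{lem:bar_dt_grad-bound} by analyzing the recursive variance-reduction update \eqref{eq:dt_iterate} averaged over the good agents. Define $\bar{d}_t = \frac{1}{N-f}\sum_{n\in\cG} d_t^{(n)}$ and the error $e_t \coloneqq \bar{d}_t - \nabla J_H(\theta_t)$. Averaging \eqref{eq:dt_iterate} over $n\in\cG$ and subtracting $\nabla J_H(\theta_t)$, I would obtain a recursion of the form
\begin{align*}
e_t = (1-\eta_t) e_{t-1} + (1-\eta_t)\big(\bar{v}_t - (\nabla J_H(\theta_t) - \nabla J_H(\theta_{t-1}))\big) + \eta_t\big(\bar{g}_t - \nabla J_H(\theta_t)\big) - (1-\eta_t)\big(\nabla J_H(\theta_t) - \nabla J_H(\theta_{t-1})\big)\cdot\big(\tfrac{\eta_t}{1-\eta_t}\big),
\end{align*}
where $\bar{v}_t$ and $\bar{g}_t$ are the averages over $\cG$ of $v_t^{(n)}$ and $g(\tau_t^{(n)},\theta_t)$ respectively. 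The key structural facts are: (i) conditioned on the history up to $\theta_t$, the term $\bar{v}_t$ is an unbiased estimate of $\nabla J_H(\theta_t)-\nabla J_H(\theta_{t-1})$ (this is exactly the point of the uniform sampling in Steps 6--9, and follows from the Hessian-integral identity used in \cite{fatkhullin}), and $\bar{g}_t$ is unbiased for $\nabla J_H(\theta_t)$; (ii) each of these averages over $N-f$ independent agents has variance scaled by $1/(N-f)$; (iii) the "bias-like" drift term $\nabla J_H(\theta_t)-\nabla J_H(\theta_{t-1})$ is controlled by $\|\theta_t-\theta_{t-1}\| = \gamma_{t-1} \le \cO(1/t)$ using $L$-smoothness of $J_H$ (which follows from Assumption \ref{assump: conditions on score function} plus boundedness of rewards), and the second-moment of $v_t^{(n)}$ is likewise $\cO(\|\theta_t-\theta_{t-1}\|^2)$ by the bounded-Hessian bound on $B(\tau,\theta)$.

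With $\eta_t = 1/t$, the recursion becomes $e_t = \frac{t-1}{t} e_{t-1} + \xi_t$, i.e. $t\, e_t = (t-1)e_{t-1} + t\,\xi_t$, so telescoping gives $e_t = \frac{1}{t}\sum_{j=1}^t j\,\xi_j$ where $\xi_j$ collects the martingale-difference noise terms (both the $\bar{v}$-centered and $\bar{g}$-centered parts) together with a deterministic remainder of size $\cO(1/(jt))$ coming from the drift. Taking $\E\|\cdot\|^2$, the martingale-difference terms are conditionally orthogonal across $j$, so $\E\|e_t\|^2 \le \frac{1}{t^2}\sum_{j=1}^t j^2\,\E\|\xi_j\|^2 + (\text{deterministic remainder})^2$. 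Since $\E\|\xi_j\|^2 \le \frac{C}{(N-f)}\cdot\frac{1}{j^2}$ — the noise variance of $\bar g_j$ contributes $\eta_j^2\sigma^2/(N-f) = \sigma^2/(j^2(N-f))$, and the $\bar v_j$ term contributes $\cO(\|\theta_j-\theta_{j-1}\|^2/(N-f)) = \cO(1/(j^2(N-f)))$ — we get $\E\|e_t\|^2 \le \frac{1}{t^2}\cdot\frac{C}{N-f}\sum_{j=1}^t 1 + \cdots = \cO\big(\frac{1}{(N-f)t}\big)$ plus the accumulated drift contribution, which carries the extra $\log t$ factor (the deterministic remainder telescopes to something of order $\frac{\log t}{t}$, and cross terms between the remainder and the martingale part contribute the $\frac{\log t}{(N-f)t}$ scaling). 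Collecting everything yields the claimed $\frac{C_2(1+\log t)}{(N-f)t}$ bound.

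The main obstacle I anticipate is carefully tracking the unbiasedness and conditional-independence structure through the recursion: the noise $\xi_j$ is not a clean martingale difference with respect to the natural filtration because $\bar v_j$ and $\bar g_j$ both depend on $\theta_j$ (and $\bar v_j$ on $\theta_{j-1}$), so one must condition appropriately and verify that the cross terms $\E\langle j\xi_j, j'\xi_{j'}\rangle$ for $j\ne j'$ vanish, and that the drift-versus-noise cross terms are handled by Cauchy-Schwarz without losing the $1/(N-f)$ factor. A second delicate point is bounding the second moment of $v_t^{(n)} = B(\hat\tau_t^{(n)},\hat\theta_t^{(n)})(\theta_t-\theta_{t-1})$ uniformly: this needs a bound on $\E\|B(\tau,\theta)\|^2$, which comes from $\Phi(\tau,\theta)$ having bounded gradient and Hessian — following from Assumption \ref{assump: conditions on score function}, boundedness of $r$, and the geometric factor $\gamma^h$ — and should be imported as in \cite{fatkhullin,BetterSGDUsingSOM_Tran_2021}. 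The definition of $C_2$ will then aggregate $\sigma^2$, $G_1$, $G_2$, the Hessian/smoothness constants, $\mu_F$ (via $\gamma_t$), $R$, and $(1-\gamma)^{-1}$.
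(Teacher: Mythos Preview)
Your overall strategy matches the paper's: write $e_t \coloneqq \bar d_t - \nabla J_H(\theta_t)$ as a contraction $(1-\eta_t)e_{t-1}$ plus conditionally mean-zero noise whose variance picks up a $1/(N-f)$ factor from averaging over the honest agents, then unroll with $\eta_t = 1/t$ and sum. The paper does this by passing directly to a recursion on $\E\|e_t\|^2$ rather than telescoping $e_t$ and squaring, but the ingredients are the same.

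There is, however, an algebra slip in your decomposition. Starting from the averaged update and subtracting $\nabla J_H(\theta_t) = (1-\eta_t)\nabla J_H(\theta_t) + \eta_t\nabla J_H(\theta_t)$, one gets exactly
\[
e_t = (1-\eta_t)\,e_{t-1} + (1-\eta_t)\,\underbrace{\bigl(\bar v_t - (\nabla J_H(\theta_t)-\nabla J_H(\theta_{t-1}))\bigr)}_{\bar{\mathcal W}_t} + \eta_t\,\underbrace{\bigl(\bar g_t - \nabla J_H(\theta_t)\bigr)}_{\bar{\mathcal V}_t},
\]
with \emph{no} extra drift term $-\eta_t(\nabla J_H(\theta_t)-\nabla J_H(\theta_{t-1}))$. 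Both $\bar{\mathcal W}_t$ and $\bar{\mathcal V}_t$ are conditionally mean-zero, so there is no deterministic remainder to chase. Consequently your attribution of the $\log t$ factor to ``accumulated drift'' is off: in the paper the $\log t$ arises because they relax $(1-\eta_t)^2$ to $(1-\eta_t)$ in the squared-norm recursion, which yields $\E\|e_t\|^2 \le \tfrac{1}{t}\sum_{i=1}^t i\,y_i$ with $i\,y_i = \cO\bigl(\tfrac{1}{(N-f)i}\bigr)$ and hence a harmonic sum $\sum_i 1/i \le 1+\log t$. Your telescoping-then-squaring route, carried out correctly with orthogonality of the $\{\xi_j\}$, would in fact give $\E\|e_t\|^2 \le \tfrac{1}{t^2}\sum_j j^2\,\E\|\xi_j\|^2 = \cO\bigl(\tfrac{1}{(N-f)t}\bigr)$ \emph{without} the $\log t$, which is still consistent with (indeed slightly sharper than) the stated bound.
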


From \eqref{eq:final-dt-dt-bar-bound} and \eqref{eq:bar_dt_grad-bound-lem}, we have
\begin{align}
\label{eq:dt_grad_bound}
    &\E\|d_t - \nabla J_H(\theta_t)\| \leq \frac{4e^2 C_1 \lambda}{t+1} + 2C_1\lambda \sqrt{\frac{2\log (N-f)(t+1)}{t+1}} + \sqrt{\frac{C_2(1+ \log t)}{(N-f)t}}. 
\end{align}
{\bf 3. Obtaining the final bound:} 

Using Lemma 7 of \cite{fatkhullin}, we have
\begin{align}
\label{eq:fatkhullin-lemma-general}
\begin{split}
   &J^* - J(\theta_{t+1}) \\
   &\leq \left(1 - \frac{\sqrt{2 \mu} \g_t}{3}\right)  (J^* - J(\theta_{t}))  + \frac{\varepsilon'\g_t}{3} + \frac{8 \g_t }{3} \mathbb E\|d_t - \nabla J_H(\theta_t)\| + \frac{L \g_t^2}{2} + \frac{4}{3} \g_t D_g \g^{{H}},
\end{split}
\end{align}
where $\mu = \frac{\mu_F^2}{2G_1^2}$, $\varepsilon' = \frac{\sqrt{\varepsilon_{\mathrm{bias}}}}{\mu(1-\gamma)}$ and $D_g, L$ are defined in Appendix \ref{subsec:notations}.

Substituting  \eqref{eq:dt_grad_bound} in \eqref{eq:fatkhullin-lemma-general} and unrolling the recursions gives us the following bound. The full proof can be found in Appendix \ref{subsec:final-bound}.

\begin{lemma}
\label{lem:final-bound}
Consider Algorithm \ref{alg:(N)-HARPG}. Then for all $T \geq 1$
    \begin{align*}
    J^* - J(\theta_{T})  \leq& 
    \frac{\sqrt{\varepsilon_{\mathrm{bias}}}}{(1-\gamma)}
    + \frac{J^* - J(\theta_{0})}{(T+1)^2} + \frac{C_3}{T+1} \\
    &+ \frac{32 G_1}{\mu_F \sqrt{T+1}} \Bigg(\sqrt{\frac{C_2(1+ \log T)}{N-f}} + C_1 \lambda \sqrt{2\log (N-f)(T+1)} \Bigg),
\end{align*}
where $C_3$ is defined in Appendix \ref{subsec:final-bound}.
\end{lemma}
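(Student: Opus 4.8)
The plan is to control the last-iterate gap $J^\ast - J(\theta_T)$ by reducing everything to a bound on $\E\|d_t - \nabla J_H(\theta_t)\|$ — the deviation between the aggregated update direction and the true truncated gradient — and then feeding this into the descent recursion of Lemma~7 of \cite{fatkhullin}. First I would split
$\E\|d_t - \nabla J_H(\theta_t)\| \le \E\|d_t - \bar d_t\| + \E\|\bar d_t - \nabla J_H(\theta_t)\|$,
where $\bar d_t = \frac{1}{N-f}\sum_{n\in\cG} d_t^{(n)}$ is the average over honest agents, and bound the two terms separately. For the aggregator term I use the $(f,\lambda)$-resilience of $F$ (Definition~\ref{def:resilient-aggregator}) to write $\|d_t - \bar d_t\| \le \lambda \max_{i,j\in\cG}\|d_t^{(i)} - d_t^{(j)}\| \le 2\lambda \max_{i\in\cG}\|d_t^{(i)} - \nabla J_H(\theta_t)\|$, and the job becomes bounding this expected maximum.

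The main obstacle, and the only genuinely delicate point, is Lemma~\ref{lem:d_i_martingale}: unrolling the recursion \eqref{eq:dt_iterate} with $\eta_t = 1/t$ to obtain the representation $d_t^{(i)} - \nabla J_H(\theta_t) = \frac1t\sum_{j=1}^t M_j^{(i)}$ with $\E[M_{t+1}^{(i)}\mid M_t^{(i)}]=0$ and $\|M_t^{(i)}\|\le C_1$ for a dimension-free $C_1$. The unbiasedness hinges on the uniform sampling of $\hat\theta_t^{(i)}$ in Steps 6--9, which makes $v_t^{(i)}$ an unbiased estimator of $\nabla J_H(\theta_t) - \nabla J_H(\theta_{t-1})$ (this is exactly where the STORM/Hessian-aided structure pays off), while boundedness of each increment follows from Assumption~\ref{assump: conditions on score function} (bounded, Lipschitz score) together with the boundedness of $\cR(\tau)$ over horizon $H$; the telescoping caused by the $(1-\eta_t)$ factors and the choice $\eta_t=1/t$ is what collapses the recursion into a clean average. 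Granted this lemma, a vector Azuma--Hoeffding inequality gives a sub-Gaussian tail $\PP(\|d_t^{(i)}-\nabla J_H(\theta_t)\|\ge\bar\epsilon)\le c\,e^{-(t+1)\bar\epsilon^2/2C_1^2}$ for each good agent; then the layer-cake bound $\E[X]\le C_1\sum_{i\in\cG}\PP(X_i\ge\bar\epsilon)+\bar\epsilon$ plus the optimized truncation level $\bar\epsilon \asymp \sqrt{C_1^2\log((N-f)(t+1))/(t+1)}$ turns the sum over $\cG$ into a factor of order $\sqrt{\log(N-f)}$ rather than $(N-f)$ — this is precisely the improvement over \cite{farhadkhani22a} — yielding \eqref{eq:final-dt-dt-bar-bound}.

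For the averaging term I would invoke Lemma~\ref{lem:bar_dt_grad-bound}, whose proof again uses the martingale representation: $\bar d_t - \nabla J_H(\theta_t)$ is an average over the $N-f$ honest agents of conditionally independent, mean-zero, bounded increments, so its second moment enjoys the standard $1/(N-f)$ variance reduction, giving $\E\|\bar d_t - \nabla J_H(\theta_t)\|^2 \le C_2(1+\log t)/((N-f)t)$. Combining this with \eqref{eq:final-dt-dt-bar-bound} (via Jensen to pass from the second moment to the first) produces \eqref{eq:dt_grad_bound}.

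Finally I would substitute \eqref{eq:dt_grad_bound} into the descent inequality \eqref{eq:fatkhullin-lemma-general} with $\gamma_t = 6G_1/(\mu_F(t+2))$, chosen so that $1 - \sqrt{2\mu}\gamma_t/3 = 1 - 2/(t+2)$; unrolling, the contraction weights $\prod_{s}(1-2/(s+2))$ telescope like $1/(t+1)^2$, which kills the initial gap $J^\ast-J(\theta_0)$ at rate $\cO(1/(T+1)^2)$ and weights the per-step error terms $\gamma_t\E\|d_t-\nabla J_H(\theta_t)\|$, $\gamma_t^2$, and $\gamma_t\gamma^H$ so that, after summing, they contribute $\cO(1/(T+1))$, plus $\cO(\sqrt{\log T/((N-f)T)})$ and $\lambda\,\cO(\sqrt{\log(N-f)/T})$; the choice $H=(1-\gamma)^{-1}\log(T+1)$ makes $\gamma^H = 1/(T+1)$ and simultaneously renders the $J$-versus-$J_H$ truncation gap negligible, and the residual $\sqrt{\varepsilon_{\mathrm{bias}}}/(1-\gamma)$ comes from the compatible-function-approximation term in Lemma~7 of \cite{fatkhullin} via Assumptions~\ref{assump: compatible error} and \ref{assump: strong convexity}. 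This is exactly Lemma~\ref{lem:final-bound}; replacing $N-f$ by $\Theta(N)$ (valid since $f<N/2$) and absorbing constants into the $\mathcal{O}(\cdot)$ gives the stated form of Theorem~\ref{thm:Res-NHARPG}. Everything after Lemma~\ref{lem:d_i_martingale} is bookkeeping with the recursion and standard concentration, so that lemma is where I would concentrate the effort.
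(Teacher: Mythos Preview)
Your proposal is correct and follows essentially the same route as the paper: the same decomposition $\E\|d_t-\nabla J_H(\theta_t)\|\le \E\|d_t-\bar d_t\|+\E\|\bar d_t-\nabla J_H(\theta_t)\|$, the same use of $(f,\lambda)$-resilience plus the martingale representation of Lemma~\ref{lem:d_i_martingale} and vector Azuma--Hoeffding to get \eqref{eq:final-dt-dt-bar-bound}, the same invocation of Lemma~\ref{lem:bar_dt_grad-bound}, and then substitution into \eqref{eq:fatkhullin-lemma-general} with $\gamma_t=6G_1/(\mu_F(t+2))$ and unrolling via the auxiliary recursion lemma (Lemma~12 of \cite{fatkhullin}). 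The only minor inaccuracy is in your description of Lemma~\ref{lem:bar_dt_grad-bound}: the paper does not reuse the martingale-sum representation there but instead works directly with the recursion for $\bar d_t$ and applies the per-agent variance bounds (Assumption~\ref{assump: variance} and the $\sigma_H^2$ bound) to the averaged noise terms $\bar{\mathcal V}_t,\bar{\mathcal W}_t$, which is where the $1/(N-f)$ factor appears---but this is the same idea in slightly different packaging.
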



\begin{figure*}[htbp]
\centering
\subfigure[CartPole (Random Noise)]{
\label{fig:1(a)} 
\includegraphics[width=2.1in, height=1.3in]{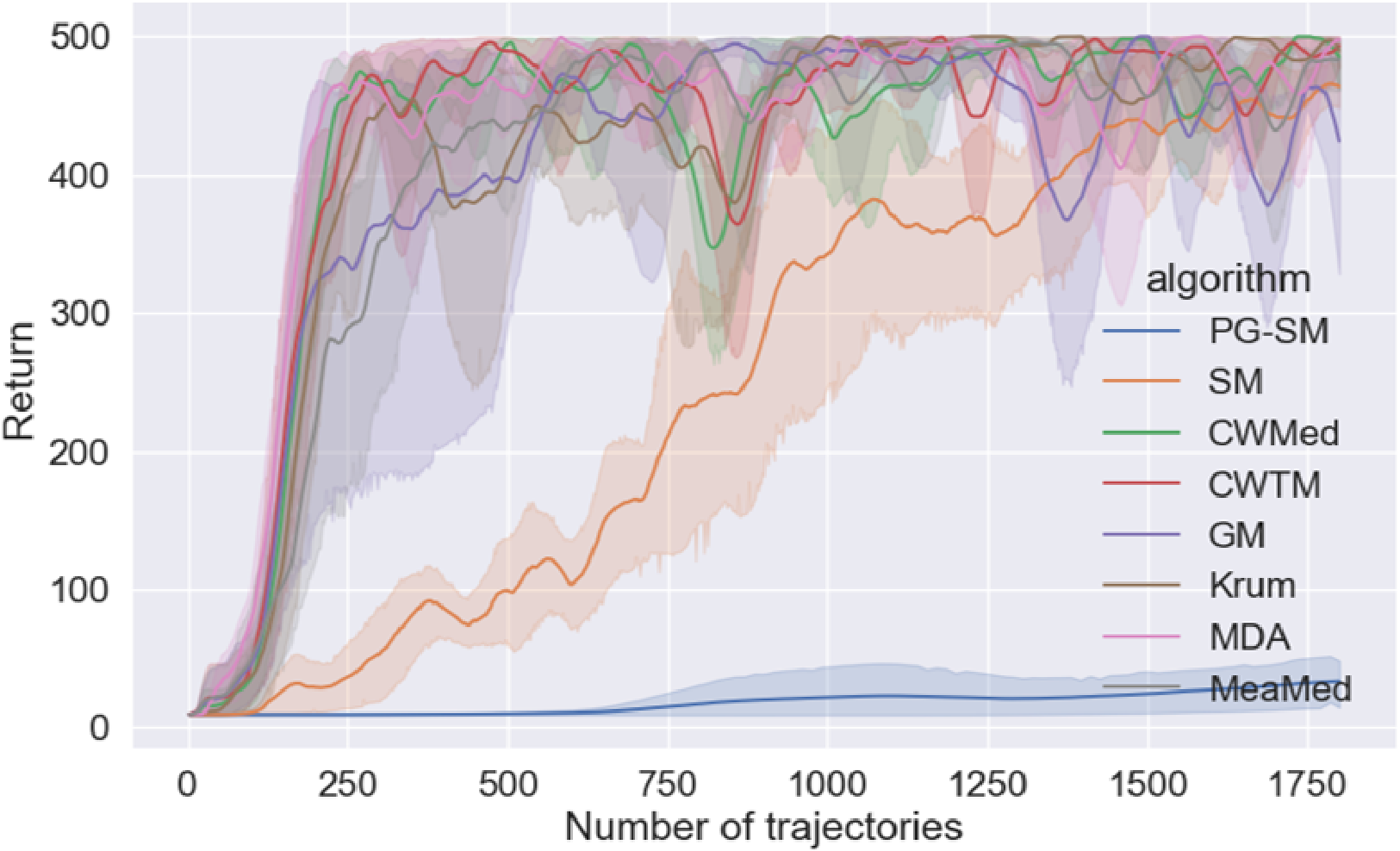}}
\subfigure[CartPole (Random Action)]{
\label{fig:1(b)} 
\includegraphics[width=2.1in, height=1.3in]{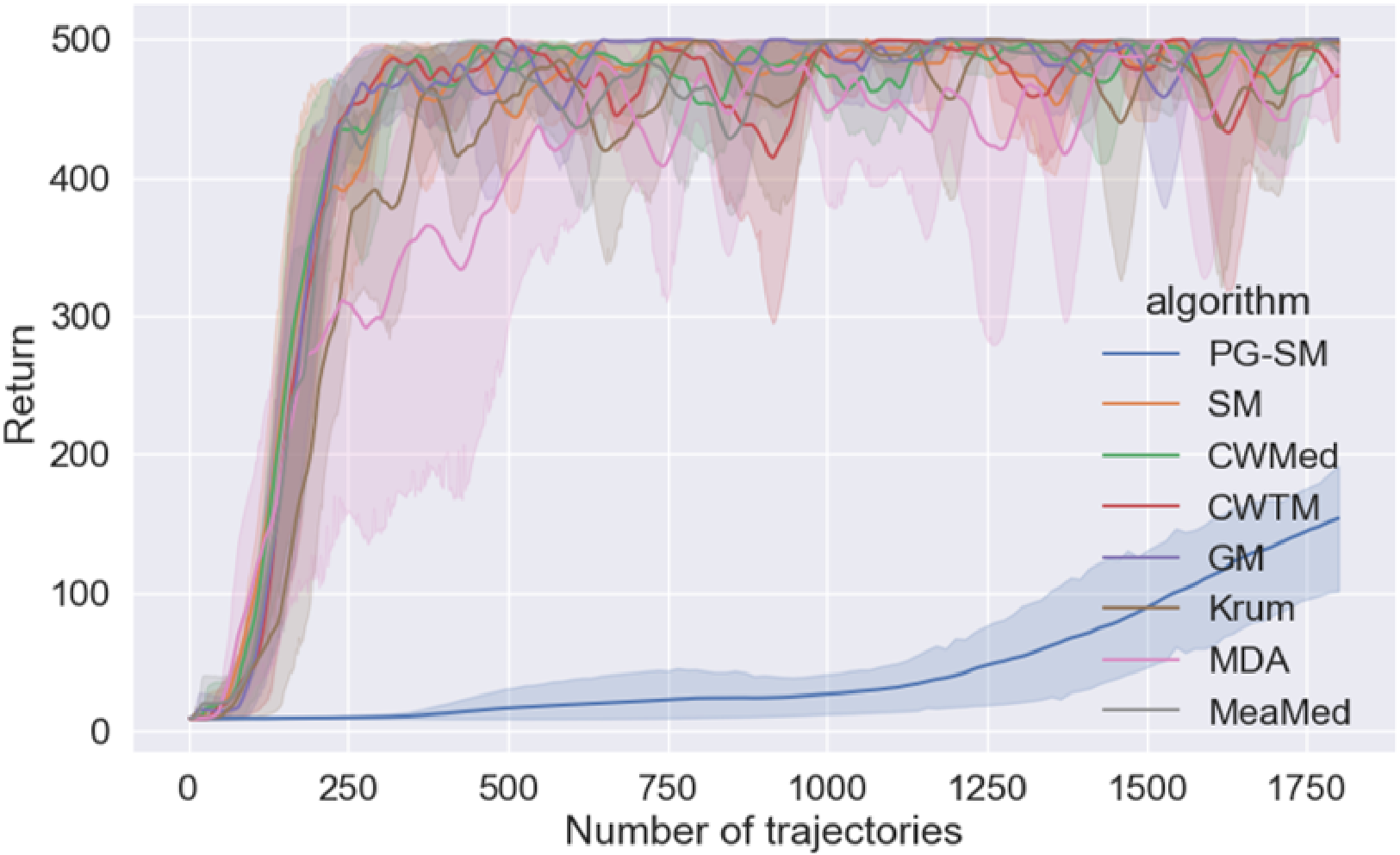}}
\subfigure[CartPole (Sign Flipping)]{
\label{fig:1(c)} 
\includegraphics[width=2.1in, height=1.3in]{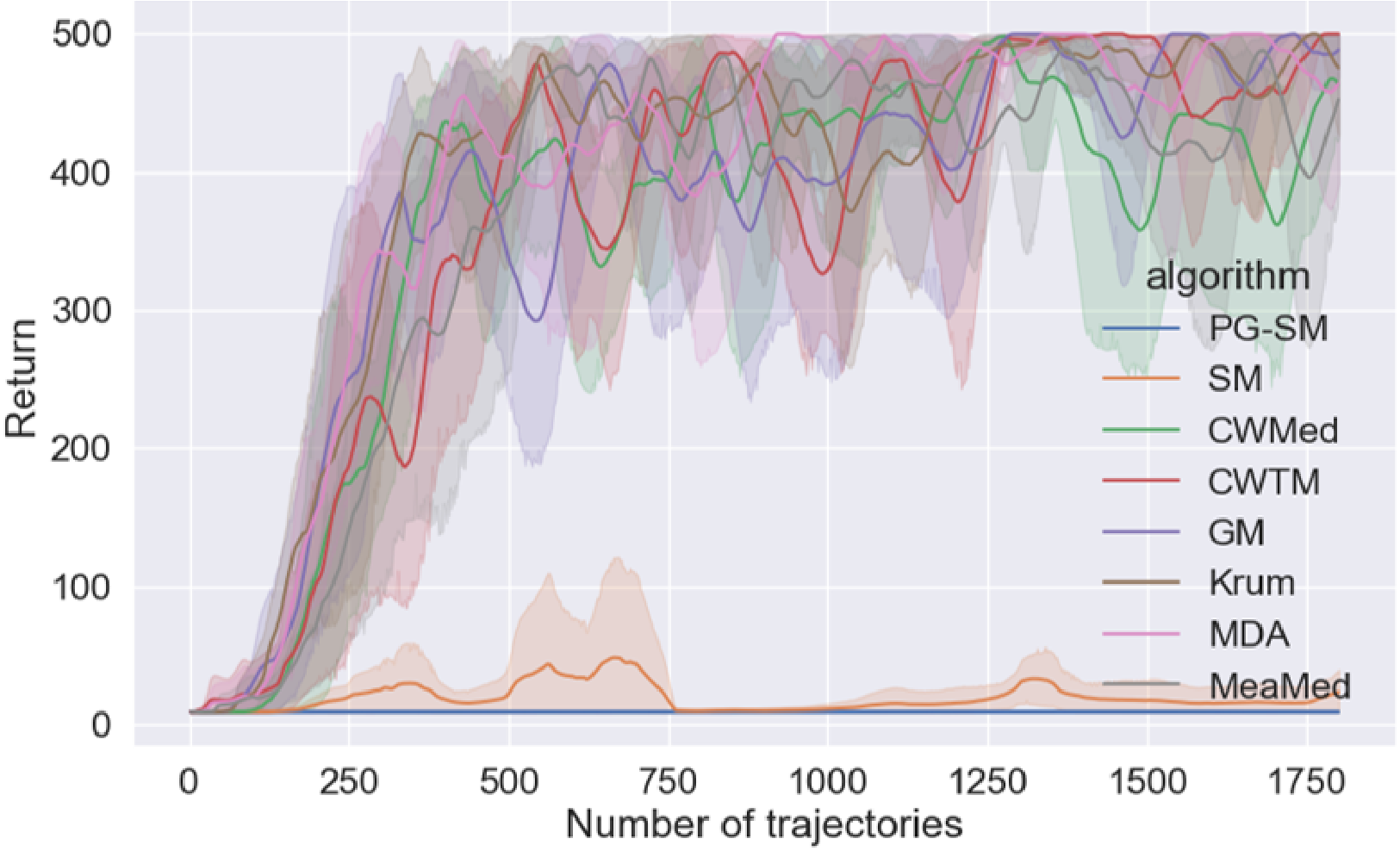}}
\subfigure[InvertedPendulum (Random Noise)]{
\label{fig:1(d)} 
\includegraphics[width=2.1in, height=1.3in]{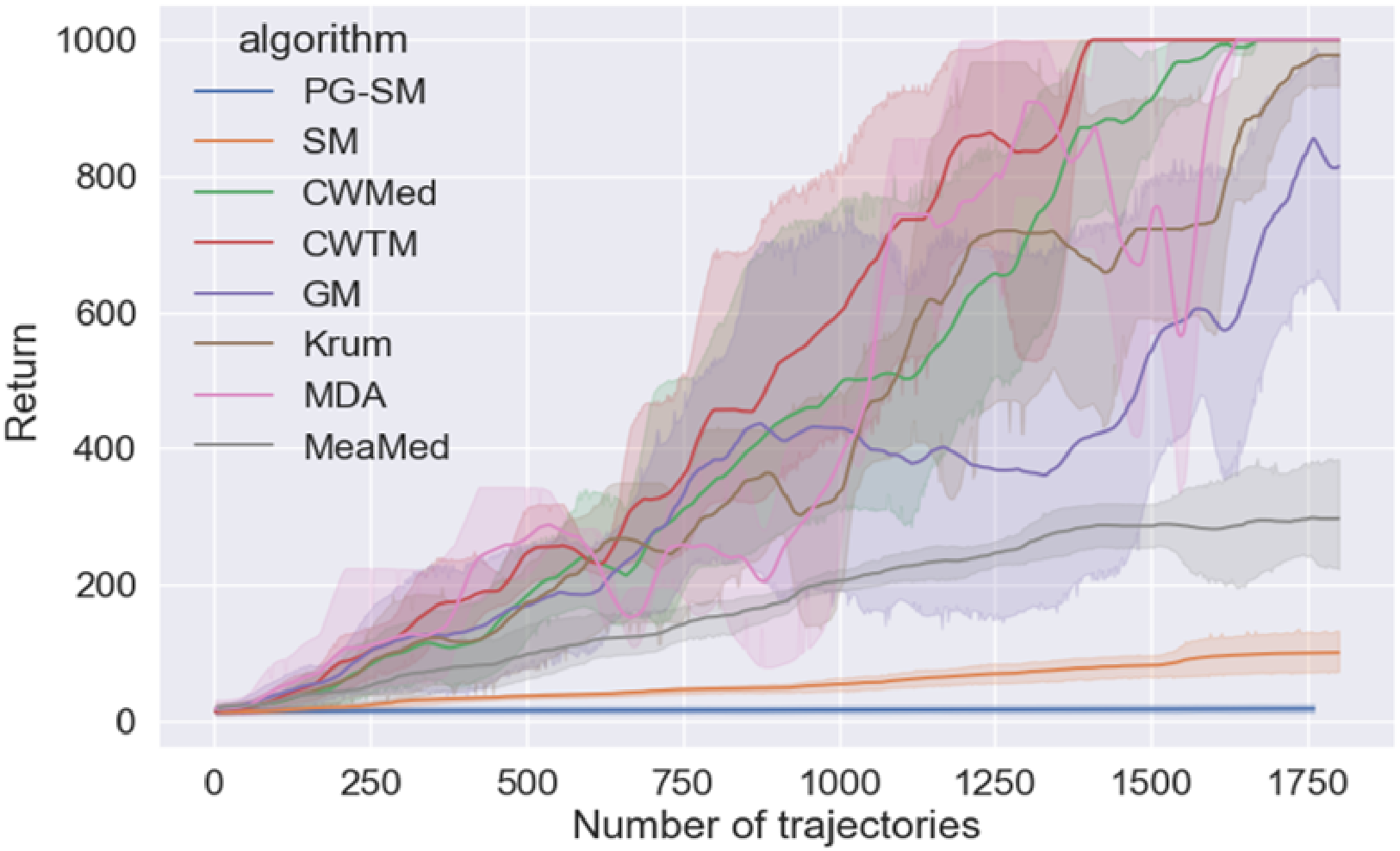}}
\subfigure[InvertedPendulum (Random Action)]{
\label{fig:1(e)} 
\includegraphics[width=2.1in, height=1.3in]{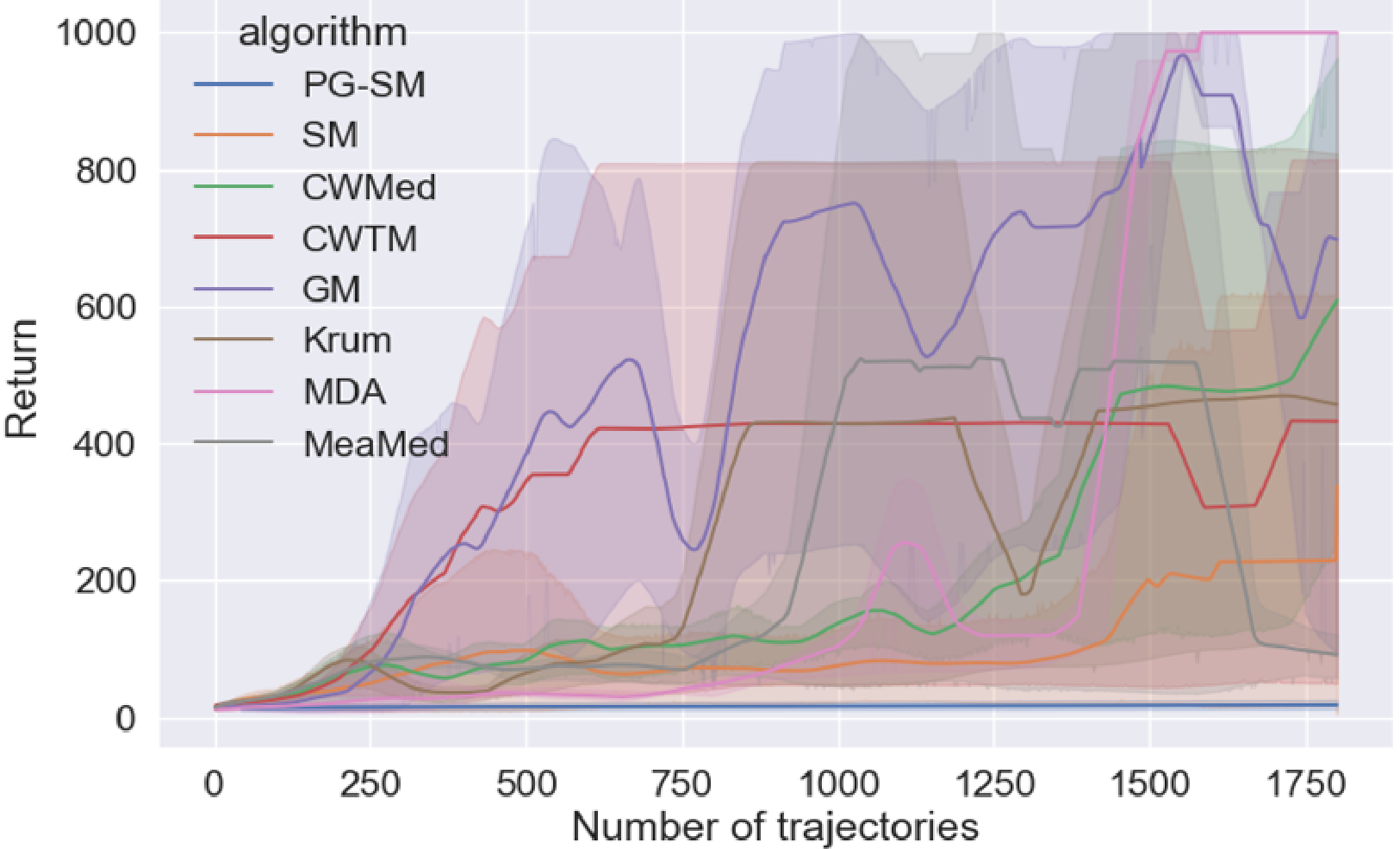}}
\subfigure[InvertedPendulum (Sign Flipping)]{
\label{fig:1(f)} 
\includegraphics[width=2.1in, height=1.3in]{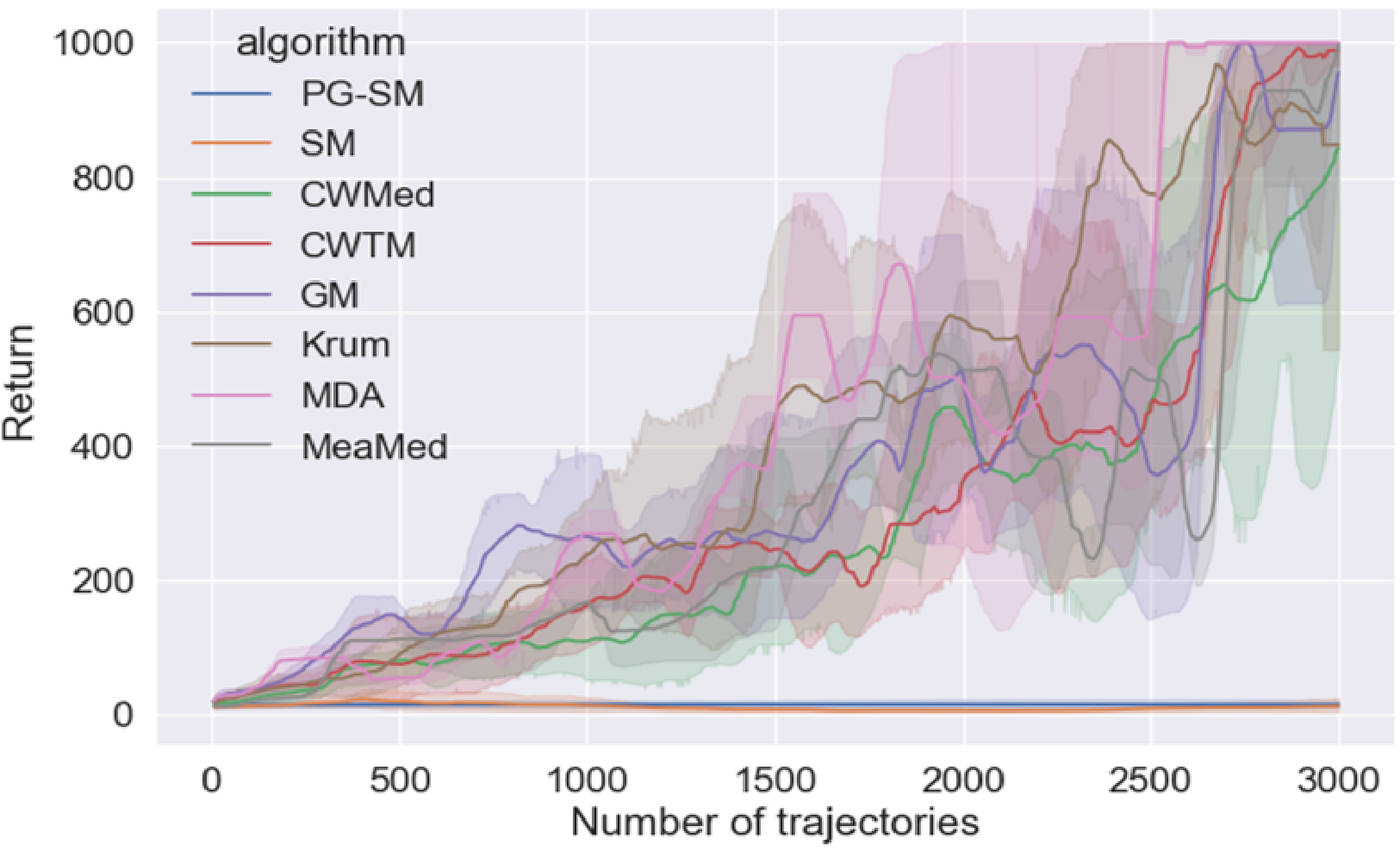}}
\vspace{-.1in}
\caption{\small Evaluation results of Res-NHARPG on CartPole and InvertedPendulum. We test Res-NHARPG with the six aggregators as shown in Table \ref{table:sample complexities}. For baselines, we select Res-NHARPG with a simple mean (SM) function as the aggregator, which is equivalent to the original N-HARPG algorithm, and a vanilla policy gradient method with the simple mean aggregator (PG-SM). For each environment, there are ten workers, of which three are adversaries, and we simulate three types of attacks: random noise, random action, and sign flipping. It can be observed that N-HARPG outperforms PG and Res-NHARPG with those $(f, \lambda)$ aggregators can effectively handle multiple types of attacks during the learning process.
}
\label{fig:1} 
\vspace{-.2in}
\end{figure*}

\section{Evaluation}

To show the effectiveness of our algorithm design (i.e., Res-NHARPG), we provide evaluation results on two commonly-used continuous control tasks: CartPole-v1 from OpenAI Gym \cite{DBLP:journals/corr/BrockmanCPSSTZ16} and InvertedPendulum-v2 from MuJoCo \cite{DBLP:conf/iros/TodorovET12}. Additional experiments on more demanding MuJoCo tasks, including HalfCheetah, Hopper, Inverted Double Pendulum, and Walker are provided in Appendix \ref{sec:additional-exp}. For each task on CartPole-v1 and InvertedPendulum-v2, there are ten workers to individually sample trajectories and compute gradients, and three of them are adversaries who would apply attacks to the learning process. Note that we do not know which worker is an adversary, so we cannot simply ignore certain gradient estimates to avoid the attacks. We simulate three types of attacks to the learning process: random noise, random action, and sign flipping. 


In Figure \ref{fig:1}, we present the learning process of the eight algorithms in two environments under three types of attacks. In each subfigure, the x-axis represents the number of sampled trajectories; the y-axis records the acquired trajectory return of the learned policy during evaluation. Each algorithm is repeated five times with different random seeds. The average performance and 95\% confidence interval are shown as the solid line and shadow area, respectively. Codes for our experiments have been submitted as supplementary material and will be made public. 

Comparing the performance of N-HARPG (i.e., SM) and Vanilla PG (i.e., PG-SM), we can see that N-HARPG consistently outperforms, especially in Figure \ref{fig:1(a)} and \ref{fig:1(b)} of which the task and attacks are relatively easier to deal with. For the `random action' attack (Figure \ref{fig:1(b)} and \ref{fig:1(e)}), which does not directly alter the gradient estimates, N-HARPG shows better resilience. However, in more challenging tasks (e.g., InvertedPendulum) and under stronger attacks (e.g., sign flipping), both N-HARPG and Vanilla PG would likely fail, which calls for effective aggregator functions.

For CartPole, the maximum trajectory return is set as 500. Res-NHARPG, implemented with each of the six aggregators, can reach that expert level within 1000 trajectory samples, with slight difference in the convergence rate. As for InvertedPendulum, not all aggregators achieve the expert level (i.e., a trajectory return of 1000), yet they all demonstrate superior performance compared to those employing only the simple mean aggregator. It's worth noting that Res-NHARPG with the MDA aggregator consistently converges to the expert level across all test cases, showing its robustness. Moreover, the `random action' attack brings more challenges to the aggregators, as the influence of random actions during sampling on the gradient estimates is indirect while all aggregators filter abnormal estimates based on gradient values. 

\section{Summary}

In this paper, we investigate the impact of adversaries on the global convergence sample complexity of Federated Reinforcement Learning (FRL). We introduce Res-NHARPG, and show its sample complexity is of order $\tilde{\mathcal{O}}\left( \frac{1}{\epsilon^2} \left( \frac{1}{N-f} + \lambda^2 \right)\right)$ using $(f,\lambda)$-aggregators, where $N$ is the total number of workers and $f$ is the number of faulty workers. Notably, when certain aggregators are used (MDA, CWTM and MeaMed), we show our approach achieves optimal sample complexity. 

This work opens up multiple possible future directions in RL with adversaries, including delay in agent feedback and heterogeneous agents. {\color{black}We further note that a parameter free version of the proposed algorithm is also an important future direction. }


\section*{Acknowledgment}
Swetha Ganesh's research is supported by the Overseas Visiting Doctoral Fellowship (OVDF) and Prime Minister's Research Fellowship (PMRF). Gugan Thoppe's research is supported by the Indo-French Centre for the Promotion of Advanced Research---CEFIPRA (7102-1), the Walmart Centre for Tech Excellence at IISc (CSR Grant WMGT-23-0001),  DST-SERB's Core Research Grant (CRG/2021/008330), and the Pratiksha Trust Young Investigator Award.

\bibliographystyle{tmlr}

 \bibliography{references}

\begin{thebibliography}{60}
\providecommand{\natexlab}[1]{#1}
\providecommand{\url}[1]{\texttt{#1}}
\expandafter\ifx\csname urlstyle\endcsname\relax
  \providecommand{\doi}[1]{doi: #1}\else
  \providecommand{\doi}{doi: \begingroup \urlstyle{rm}\Url}\fi

\bibitem[Abeyruwan et~al.(2023)Abeyruwan, Graesser, D’Ambrosio, Singh, Shankar, Bewley, Jain, Choromanski, and Sanketi]{abeyruwan2023sim2real}
Saminda~Wishwajith Abeyruwan, Laura Graesser, David~B D’Ambrosio, Avi Singh, Anish Shankar, Alex Bewley, Deepali Jain, Krzysztof~Marcin Choromanski, and Pannag~R Sanketi.
\newblock i-sim2real: Reinforcement learning of robotic policies in tight human-robot interaction loops.
\newblock In \emph{Conference on Robot Learning}, pp.\  212--224. PMLR, 2023.

\bibitem[Agarwal et~al.(2021{\natexlab{a}})Agarwal, Kakade, Lee, and Mahajan]{agarwal2021theory}
Alekh Agarwal, Sham~M Kakade, Jason~D Lee, and Gaurav Mahajan.
\newblock On the theory of policy gradient methods: Optimality, approximation, and distribution shift.
\newblock \emph{The Journal of Machine Learning Research}, 22\penalty0 (1):\penalty0 4431--4506, 2021{\natexlab{a}}.

\bibitem[Agarwal et~al.(2021{\natexlab{b}})Agarwal, Ganguly, and Aggarwal]{agarwal2021communication}
Mridul Agarwal, Bhargav Ganguly, and Vaneet Aggarwal.
\newblock Communication efficient parallel reinforcement learning.
\newblock In \emph{Uncertainty in Artificial Intelligence}, 2021{\natexlab{b}}.

\bibitem[Al-Abbasi et~al.(2019)Al-Abbasi, Ghosh, and Aggarwal]{al2019deeppool}
Abubakr~O Al-Abbasi, Arnob Ghosh, and Vaneet Aggarwal.
\newblock Deeppool: Distributed model-free algorithm for ride-sharing using deep reinforcement learning.
\newblock \emph{IEEE Transactions on Intelligent Transportation Systems}, 20\penalty0 (12):\penalty0 4714--4727, 2019.

\bibitem[Alistarh et~al.(2018)Alistarh, Allen-Zhu, and Li]{alistarh2018byzantine}
Dan Alistarh, Zeyuan Allen-Zhu, and Jerry Li.
\newblock Byzantine stochastic gradient descent.
\newblock In \emph{Proceedings of the 32nd International Conference on Neural Information Processing Systems}, pp.\  4618--4628, 2018.

\bibitem[Allen-Zhu et~al.(2020)Allen-Zhu, Ebrahimianghazani, Li, and Alistarh]{allen2020byzantine}
Zeyuan Allen-Zhu, Faeze Ebrahimianghazani, Jerry Li, and Dan Alistarh.
\newblock Byzantine-resilient non-convex stochastic gradient descent.
\newblock In \emph{International Conference on Learning Representations}, 2020.

\bibitem[Bai et~al.(2022)Bai, Bedi, Agarwal, Koppel, and Aggarwal]{Bai_Bedi_Agarwal_Koppel_Aggarwal_2022}
Qinbo Bai, Amrit~Singh Bedi, Mridul Agarwal, Alec Koppel, and Vaneet Aggarwal.
\newblock Achieving zero constraint violation for constrained reinforcement learning via primal-dual approach.
\newblock \emph{Proceedings of the AAAI Conference on Artificial Intelligence}, 36:\penalty0 3682--3689, Jun. 2022.
\newblock \doi{10.1609/aaai.v36i4.20281}.

\bibitem[Baxter \& Bartlett(2001)Baxter and Bartlett]{baxter2001infinite}
Jonathan Baxter and Peter~L Bartlett.
\newblock Infinite-horizon policy-gradient estimation.
\newblock \emph{Journal of Artificial Intelligence Research}, 15:\penalty0 319--350, 2001.

\bibitem[Blanchard et~al.(2017)Blanchard, El~Mhamdi, Guerraoui, and Stainer]{krum}
Peva Blanchard, El~Mahdi El~Mhamdi, Rachid Guerraoui, and Julien Stainer.
\newblock Machine learning with adversaries: Byzantine tolerant gradient descent.
\newblock In I.~Guyon, U.~V. Luxburg, S.~Bengio, H.~Wallach, R.~Fergus, S.~Vishwanathan, and R.~Garnett (eds.), \emph{Advances in Neural Information Processing Systems 30}, pp.\  119--129. Curran Associates, Inc., 2017.

\bibitem[Bonjour et~al.(2022)Bonjour, Haliem, Alsalem, Thomas, Li, Aggarwal, Kejriwal, and Bhargava]{bonjour2022decision}
Trevor Bonjour, Marina Haliem, Aala Alsalem, Shilpa Thomas, Hongyu Li, Vaneet Aggarwal, Mayank Kejriwal, and Bharat Bhargava.
\newblock Decision making in monopoly using a hybrid deep reinforcement learning approach.
\newblock \emph{IEEE Transactions on Emerging Topics in Computational Intelligence}, 6\penalty0 (6):\penalty0 1335--1344, 2022.

\bibitem[Brockman et~al.(2016)Brockman, Cheung, Pettersson, Schneider, Schulman, Tang, and Zaremba]{DBLP:journals/corr/BrockmanCPSSTZ16}
Greg Brockman, Vicki Cheung, Ludwig Pettersson, Jonas Schneider, John Schulman, Jie Tang, and Wojciech Zaremba.
\newblock Openai gym.
\newblock \emph{CoRR}, abs/1606.01540, 2016.

\bibitem[Chen et~al.(2023{\natexlab{a}})Chen, Bhargava, Aggarwal, Tonshal, and Gopal]{chen2023hybrid}
Chang-Lin Chen, Bharat Bhargava, Vaneet Aggarwal, Basavaraj Tonshal, and Amrit Gopal.
\newblock A hybrid deep reinforcement learning approach for jointly optimizing offloading and resource management in vehicular networks.
\newblock \emph{IEEE Transactions on Vehicular Technology}, 2023{\natexlab{a}}.

\bibitem[Chen et~al.(2023{\natexlab{b}})Chen, Lan, and Aggarwal]{chen2023option}
Jiayu Chen, Tian Lan, and Vaneet Aggarwal.
\newblock Option-aware adversarial inverse reinforcement learning for robotic control.
\newblock In \emph{2023 IEEE International Conference on Robotics and Automation (ICRA)}, pp.\  5902--5908. IEEE, 2023{\natexlab{b}}.

\bibitem[Chen et~al.(2023{\natexlab{c}})Chen, Zhang, Zhang, Wang, and Zhu]{Chen}
Yiding Chen, Xuezhou Zhang, Kaiqing Zhang, Mengdi Wang, and Xiaojin Zhu.
\newblock Byzantine-robust online and offline distributed reinforcement learning.
\newblock In Francisco Ruiz, Jennifer Dy, and Jan-Willem van~de Meent (eds.), \emph{Proceedings of The 26th International Conference on Artificial Intelligence and Statistics}, volume 206 of \emph{Proceedings of Machine Learning Research}, pp.\  3230--3269. PMLR, 2023{\natexlab{c}}.

\bibitem[Chen et~al.(2017)Chen, Su, and Xu]{chen2017distributed}
Yudong Chen, Lili Su, and Jiaming Xu.
\newblock Distributed statistical machine learning in adversarial settings: Byzantine gradient descent.
\newblock \emph{Proceedings of the ACM on Measurement and Analysis of Computing Systems}, 1\penalty0 (2):\penalty0 1--25, 2017.

\bibitem[Cohen et~al.(2016)Cohen, Lee, Miller, Pachocki, and Sidford]{GM-computation}
Michael~B. Cohen, Yin~Tat Lee, Gary Miller, Jakub Pachocki, and Aaron Sidford.
\newblock Geometric median in nearly linear time.
\newblock In \emph{Proceedings of the Forty-Eighth Annual ACM Symposium on Theory of Computing}, STOC '16, pp.\  9–21. Association for Computing Machinery, 2016.

\bibitem[Cutkosky \& Orabona(2019)Cutkosky and Orabona]{cutkosky-orabona19}
Ashok Cutkosky and Francesco Orabona.
\newblock Momentum-based variance reduction in non-convex sgd.
\newblock \emph{Advances in neural information processing systems}, 32, 2019.

\bibitem[Diakonikolas et~al.(2019)Diakonikolas, Kamath, Kane, Li, Steinhardt, and Stewart]{diakonikolas2019sever}
Ilias Diakonikolas, Gautam Kamath, Daniel Kane, Jerry Li, Jacob Steinhardt, and Alistair Stewart.
\newblock Sever: A robust meta-algorithm for stochastic optimization.
\newblock In \emph{International Conference on Machine Learning}, pp.\  1596--1606. PMLR, 2019.

\bibitem[El~Mhamdi et~al.(2018)El~Mhamdi, Guerraoui, and Rouault]{mhamdi18a}
El~Mahdi El~Mhamdi, Rachid Guerraoui, and S{\'e}bastien Rouault.
\newblock The hidden vulnerability of distributed learning in {B}yzantium.
\newblock In Jennifer Dy and Andreas Krause (eds.), \emph{Proceedings of the 35th International Conference on Machine Learning}, volume~80 of \emph{Proceedings of Machine Learning Research}, pp.\  3521--3530. PMLR, 10--15 Jul 2018.

\bibitem[Fan et~al.(2021)Fan, Ma, Dai, Jing, Tan, and Low]{fan2021fault}
Xiaofeng Fan, Yining Ma, Zhongxiang Dai, Wei Jing, Cheston Tan, and Bryan Kian~Hsiang Low.
\newblock Fault-tolerant federated reinforcement learning with theoretical guarantee.
\newblock \emph{Advances in Neural Information Processing Systems}, 34:\penalty0 1007--1021, 2021.

\bibitem[Farhadkhani et~al.(2022)Farhadkhani, Guerraoui, Gupta, Pinot, and Stephan]{farhadkhani22a}
Sadegh Farhadkhani, Rachid Guerraoui, Nirupam Gupta, Rafael Pinot, and John Stephan.
\newblock {B}yzantine machine learning made easy by resilient averaging of momentums.
\newblock In \emph{Proceedings of the 39th International Conference on Machine Learning}, volume 162 of \emph{Proceedings of Machine Learning Research}, pp.\  6246--6283. PMLR, 2022.

\bibitem[Fatkhullin et~al.(2023)Fatkhullin, Barakat, Kireeva, and He]{fatkhullin}
Ilyas Fatkhullin, Anas Barakat, Anastasia Kireeva, and Niao He.
\newblock Stochastic policy gradient methods: Improved sample complexity for {F}isher-non-degenerate policies.
\newblock In \emph{Proceedings of the 40th International Conference on Machine Learning}, volume 202 of \emph{Proceedings of Machine Learning Research}, pp.\  9827--9869. PMLR, 23--29 Jul 2023.

\bibitem[Geng et~al.(2020)Geng, Lan, Aggarwal, Yang, and Xu]{geng2020multi}
Nan Geng, Tian Lan, Vaneet Aggarwal, Yuan Yang, and Mingwei Xu.
\newblock A multi-agent reinforcement learning perspective on distributed traffic engineering.
\newblock In \emph{2020 IEEE 28th International Conference on Network Protocols (ICNP)}, pp.\  1--11. IEEE, 2020.

\bibitem[Gonzalez et~al.(2023)Gonzalez, Balakuntala, Agarwal, Low, Knoth, Kirkpatrick, McKee, Hager, Aggarwal, Xue, et~al.]{gonzalez2023asap}
Glebys Gonzalez, Mythra Balakuntala, Mridul Agarwal, Tomas Low, Bruce Knoth, Andrew~W Kirkpatrick, Jessica McKee, Gregory Hager, Vaneet Aggarwal, Yexiang Xue, et~al.
\newblock Asap: A semi-autonomous precise system for telesurgery during communication delays.
\newblock \emph{IEEE Transactions on Medical Robotics and Bionics}, 5\penalty0 (1):\penalty0 66--78, 2023.

\bibitem[Hayes(2005)]{Hayes2003ALI}
Thomas~P Hayes.
\newblock A large-deviation inequality for vector-valued martingales.
\newblock \emph{Combinatorics, Probability and Computing}, 2005.

\bibitem[Hosseinalipour et~al.(2020)Hosseinalipour, Brinton, Aggarwal, Dai, and Chiang]{9311906}
Seyyedali Hosseinalipour, Christopher~G. Brinton, Vaneet Aggarwal, Huaiyu Dai, and Mung Chiang.
\newblock From federated to fog learning: Distributed machine learning over heterogeneous wireless networks.
\newblock \emph{IEEE Communications Magazine}, 58\penalty0 (12):\penalty0 41--47, 2020.
\newblock \doi{10.1109/MCOM.001.2000410}.

\bibitem[Jin et~al.(2022)Jin, Peng, Yang, Wang, and Zhang]{jin2022federated}
Hao Jin, Yang Peng, Wenhao Yang, Shusen Wang, and Zhihua Zhang.
\newblock Federated reinforcement learning with environment heterogeneity.
\newblock In \emph{International Conference on Artificial Intelligence and Statistics}, 2022.

\bibitem[Kairouz et~al.(2021)Kairouz, McMahan, Avent, Bellet, Bennis, Bhagoji, Bonawitz, Charles, Cormode, Cummings, D'Oliveira, Rouayheb, Evans, Gardner, Garrett, Gascón, Ghazi, Gibbons, Gruteser, Harchaoui, He, He, Huo, Hutchinson, Hsu, Jaggi, Javidi, Joshi, Khodak, Konečný, Korolova, Koushanfar, Koyejo, Lepoint, Liu, Mittal, Mohri, Nock, Özgür, Pagh, Raykova, Qi, Ramage, Raskar, Song, Song, Stich, Sun, Suresh, Tramèr, Vepakomma, Wang, Xiong, Xu, Yang, Yu, Yu, and Zhao]{FL-survey}
Peter Kairouz, H.~Brendan McMahan, Brendan Avent, Aurélien Bellet, Mehdi Bennis, Arjun~Nitin Bhagoji, K.~A. Bonawitz, Zachary Charles, Graham Cormode, Rachel Cummings, Rafael~G.L. D'Oliveira, Salim~El Rouayheb, David Evans, Josh Gardner, Zachary Garrett, Adrià Gascón, Badih Ghazi, Phillip~B. Gibbons, Marco Gruteser, Zaid Harchaoui, Chaoyang He, Lie He, Zhouyuan Huo, Ben Hutchinson, Justin Hsu, Martin Jaggi, Tara Javidi, Gauri Joshi, Mikhail Khodak, Jakub Konečný, Aleksandra Korolova, Farinaz Koushanfar, Sanmi Koyejo, Tancrède Lepoint, Yang Liu, Prateek Mittal, Mehryar Mohri, Richard Nock, Ayfer Özgür, Rasmus Pagh, Mariana Raykova, Hang Qi, Daniel Ramage, Ramesh Raskar, Dawn Song, Weikang Song, Sebastian~U. Stich, Ziteng Sun, Ananda~Theertha Suresh, Florian Tramèr, Praneeth Vepakomma, Jianyu Wang, Li~Xiong, Zheng Xu, Qiang Yang, Felix~X. Yu, Han Yu, and Sen Zhao.
\newblock Advances and open problems in federated learning.
\newblock \emph{Foundations and trends{\textregistered} in machine learning}, 14\penalty0 (1--2):\penalty0 1--210, 2021.

\bibitem[Karimireddy et~al.(2020)Karimireddy, He, and Jaggi]{karimireddy2020byzantine}
Sai~Praneeth Karimireddy, Lie He, and Martin Jaggi.
\newblock Byzantine-robust learning on heterogeneous datasets via bucketing.
\newblock \emph{arXiv preprint arXiv:2006.09365}, 2020.

\bibitem[Khodadadian et~al.(2022)Khodadadian, Sharma, Joshi, and Maguluri]{khodadadian2022federated}
Sajad Khodadadian, Pranay Sharma, Gauri Joshi, and Siva~Theja Maguluri.
\newblock Federated reinforcement learning: Linear speedup under {Markovian} sampling.
\newblock In \emph{International Conference on Machine Learning}, 2022.

\bibitem[Kiran et~al.(2021)Kiran, Sobh, Talpaert, Mannion, Al~Sallab, Yogamani, and P{\'e}rez]{kiran2021deep}
B~Ravi Kiran, Ibrahim Sobh, Victor Talpaert, Patrick Mannion, Ahmad~A Al~Sallab, Senthil Yogamani, and Patrick P{\'e}rez.
\newblock Deep reinforcement learning for autonomous driving: A survey.
\newblock \emph{IEEE Transactions on Intelligent Transportation Systems}, 23\penalty0 (6):\penalty0 4909--4926, 2021.

\bibitem[Lamport et~al.(1982)Lamport, Shostak, and Pease]{lamport1982}
Leslie Lamport, Robert Shostak, and Marshall Pease.
\newblock The byzantine generals problem.
\newblock \emph{ACM Trans. Program. Lang. Syst.}, 4\penalty0 (3):\penalty0 382–401, 1982.

\bibitem[Lan et~al.(2023)Lan, Wang, Anderson, Brinton, and Aggarwal]{lan2023improved}
Guangchen Lan, Han Wang, James Anderson, Christopher Brinton, and Vaneet Aggarwal.
\newblock Improved communication efficiency in federated natural policy gradient via admm-based gradient updates.
\newblock In \emph{Proceedings of the 37th International Conference on Neural Information Processing Systems}, pp.\  59873--59885, 2023.

\bibitem[Lin \& Ling(2022)Lin and Ling]{lin2022byzantine}
Qifeng Lin and Qing Ling.
\newblock Byzantine-robust federated deep deterministic policy gradient.
\newblock In \emph{ICASSP 2022-2022 IEEE International Conference on Acoustics, Speech and Signal Processing (ICASSP)}, pp.\  4013--4017. IEEE, 2022.

\bibitem[Liu et~al.(2020)Liu, Zhang, Basar, and Yin]{liu2020improved}
Yanli Liu, Kaiqing Zhang, Tamer Basar, and Wotao Yin.
\newblock An improved analysis of (variance-reduced) policy gradient and natural policy gradient methods.
\newblock \emph{Advances in Neural Information Processing Systems}, 33:\penalty0 7624--7636, 2020.

\bibitem[Masiha et~al.(2022)Masiha, Salehkaleybar, He, Kiyavash, and Thiran]{masiha}
Saeed Masiha, Saber Salehkaleybar, Niao He, Negar Kiyavash, and Patrick Thiran.
\newblock Stochastic second-order methods improve best-known sample complexity of {SGD} for gradient-dominated functions.
\newblock In Alice~H. Oh, Alekh Agarwal, Danielle Belgrave, and Kyunghyun Cho (eds.), \emph{Advances in Neural Information Processing Systems}, 2022.

\bibitem[Mondal \& Aggarwal(2024)Mondal and Aggarwal]{mondal2023improved}
Washim~U Mondal and Vaneet Aggarwal.
\newblock Improved sample complexity analysis of natural policy gradient algorithm with general parameterization for infinite horizon discounted reward markov decision processes.
\newblock In \emph{International Conference on Artificial Intelligence and Statistics}, pp.\  3097--3105. PMLR, 2024.

\bibitem[Papini et~al.(2018)Papini, Binaghi, Canonaco, Pirotta, and Restelli]{papini2018stochastic}
Matteo Papini, Damiano Binaghi, Giuseppe Canonaco, Matteo Pirotta, and Marcello Restelli.
\newblock Stochastic variance-reduced policy gradient.
\newblock In \emph{Proceedings of the 35th International Conference on Machine Learning}, pp.\  4026--4035, 2018.

\bibitem[Prasad et~al.(2020)Prasad, Suggala, Balakrishnan, and Ravikumar]{prasad2020robust}
Adarsh Prasad, Arun~Sai Suggala, Sivaraman Balakrishnan, and Pradeep Ravikumar.
\newblock Robust estimation via robust gradient estimation.
\newblock \emph{Journal of the Royal Statistical Society: Series B (Statistical Methodology)}, 82\penalty0 (3):\penalty0 601--627, 2020.

\bibitem[Rjoub et~al.(2022)Rjoub, Bentahar, and Wahab]{rjoub2022explainable}
Gaith Rjoub, Jamal Bentahar, and Omar~Abdel Wahab.
\newblock Explainable ai-based federated deep reinforcement learning for trusted autonomous driving.
\newblock In \emph{2022 International Wireless Communications and Mobile Computing (IWCMC)}, pp.\  318--323. IEEE, 2022.

\bibitem[Rousseeuw(1985)]{rousseeuw1985multivariate}
Peter~J Rousseeuw.
\newblock Multivariate estimation with high breakdown point.
\newblock \emph{Mathematical statistics and applications}, 8\penalty0 (37):\penalty0 283--297, 1985.

\bibitem[Schrittwieser et~al.(2020)Schrittwieser, Antonoglou, Hubert, Simonyan, Sifre, Schmitt, Guez, Lockhart, Hassabis, Graepel, et~al.]{schrittwieser2020mastering}
Julian Schrittwieser, Ioannis Antonoglou, Thomas Hubert, Karen Simonyan, Laurent Sifre, Simon Schmitt, Arthur Guez, Edward Lockhart, Demis Hassabis, Thore Graepel, et~al.
\newblock Mastering atari, go, chess and shogi by planning with a learned model.
\newblock \emph{Nature}, 588\penalty0 (7839):\penalty0 604--609, 2020.

\bibitem[Shen et~al.(2019)Shen, Ribeiro, Hassani, Qian, and Mi]{shen19d}
Zebang Shen, Alejandro Ribeiro, Hamed Hassani, Hui Qian, and Chao Mi.
\newblock Hessian aided policy gradient.
\newblock In \emph{Proceedings of the 36th International Conference on Machine Learning}, volume~97 of \emph{Proceedings of Machine Learning Research}, pp.\  5729--5738. PMLR, 09--15 Jun 2019.

\bibitem[Todorov et~al.(2012)Todorov, Erez, and Tassa]{DBLP:conf/iros/TodorovET12}
Emanuel Todorov, Tom Erez, and Yuval Tassa.
\newblock Mujoco: {A} physics engine for model-based control.
\newblock In \emph{{IEEE/RSJ} International Conference on Intelligent Robots and Systems}, pp.\  5026--5033. {IEEE}, 2012.

\bibitem[Tran \& Cutkosky(2022)Tran and Cutkosky]{BetterSGDUsingSOM_Tran_2021}
Hoang Tran and Ashok Cutkosky.
\newblock Better sgd using second-order momentum.
\newblock \emph{Advances in Neural Information Processing Systems}, 35:\penalty0 3530--3541, 2022.

\bibitem[Wang et~al.(2023{\natexlab{a}})Wang, Mitra, Hassani, Pappas, and Anderson]{wang2023federated}
Han Wang, Aritra Mitra, Hamed Hassani, George~J Pappas, and James Anderson.
\newblock Federated temporal difference learning with linear function approximation under environmental heterogeneity.
\newblock \emph{arXiv preprint arXiv:2302.02212}, 2023{\natexlab{a}}.

\bibitem[Wang et~al.(2023{\natexlab{b}})Wang, Toso, Mitra, and Anderson]{wang2023model}
Han Wang, Leonardo~F Toso, Aritra Mitra, and James Anderson.
\newblock Model-free learning with heterogeneous dynamical systems: A federated {LQR} approach.
\newblock \emph{arXiv preprint arXiv:2308.11743}, 2023{\natexlab{b}}.

\bibitem[Wang et~al.(2019)Wang, Cai, Yang, and Wang]{wang2019neural}
Lingxiao Wang, Qi~Cai, Zhuoran Yang, and Zhaoran Wang.
\newblock Neural policy gradient methods: Global optimality and rates of convergence.
\newblock In \emph{International Conference on Learning Representations}, 2019.

\bibitem[Weng et~al.(2022)Weng, Chen, Yan, You, Duburcq, Zhang, Su, Su, and Zhu]{tianshou}
Jiayi Weng, Huayu Chen, Dong Yan, Kaichao You, Alexis Duburcq, Minghao Zhang, Yi~Su, Hang Su, and Jun Zhu.
\newblock Tianshou: A highly modularized deep reinforcement learning library.
\newblock \emph{Journal of Machine Learning Research}, 23\penalty0 (267):\penalty0 1--6, 2022.
\newblock URL \url{http://jmlr.org/papers/v23/21-1127.html}.

\bibitem[Williams(1992)]{williams1992simple}
Ronald~J Williams.
\newblock Simple statistical gradient-following algorithms for connectionist reinforcement learning.
\newblock \emph{Machine Learning}, 8\penalty0 (3-4):\penalty0 229--256, 1992.

\bibitem[Xie et~al.(2018)Xie, Koyejo, and Gupta]{meamed}
Cong Xie, Oluwasanmi Koyejo, and Indranil Gupta.
\newblock Generalized byzantine-tolerant sgd.
\newblock \emph{arXiv preprint arXiv:1802.10116}, 2018.

\bibitem[Xie \& Song(2023)Xie and Song]{fedkl2023}
Zhijie Xie and Shenghui Song.
\newblock {FedKL}: Tackling data heterogeneity in federated reinforcement learning by penalizing {KL} divergence.
\newblock \emph{IEEE Journal on Selected Areas in Communications}, 41\penalty0 (4):\penalty0 1227--1242, 2023.

\bibitem[Xu et~al.(2022)Xu, Huang, Song, and Lan]{xu2021signguard}
Jian Xu, Shao-Lun Huang, Linqi Song, and Tian Lan.
\newblock Byzantine-robust federated learning through collaborative malicious gradient filtering.
\newblock In \emph{2022 IEEE 42nd International Conference on Distributed Computing Systems (ICDCS)}, pp.\  1223--1235. IEEE, 2022.

\bibitem[Xu et~al.(2019)Xu, Gao, and Gu]{xu2019sample}
Pan Xu, Felicia Gao, and Quanquan Gu.
\newblock Sample efficient policy gradient methods with recursive variance reduction.
\newblock In \emph{International Conference on Learning Representations}, 2019.

\bibitem[Xu et~al.(2020)Xu, Gao, and Gu]{xu2019improved}
Pan Xu, Felicia Gao, and Quanquan Gu.
\newblock An improved convergence analysis of stochastic variance-reduced policy gradient.
\newblock In \emph{Uncertainty in Artificial Intelligence}, pp.\  541--551. PMLR, 2020.

\bibitem[Yin et~al.(2018)Yin, Chen, Kannan, and Bartlett]{yin2018byzantine}
Dong Yin, Yudong Chen, Ramchandran Kannan, and Peter Bartlett.
\newblock Byzantine-robust distributed learning: Towards optimal statistical rates.
\newblock In \emph{International Conference on Machine Learning}, pp.\  5650--5659. PMLR, 2018.

\bibitem[Yu et~al.(2021)Yu, Liu, Nemati, and Yin]{yu2021reinforcement}
Chao Yu, Jiming Liu, Shamim Nemati, and Guosheng Yin.
\newblock Reinforcement learning in healthcare: A survey.
\newblock \emph{ACM Computing Surveys (CSUR)}, 55\penalty0 (1):\penalty0 1--36, 2021.

\bibitem[Yuan et~al.(2022)Yuan, Gower, and Lazaric]{yuan22a}
Rui Yuan, Robert~M. Gower, and Alessandro Lazaric.
\newblock A general sample complexity analysis of vanilla policy gradient.
\newblock In \emph{Proceedings of The 25th International Conference on Artificial Intelligence and Statistics}, volume 151 of \emph{Proceedings of Machine Learning Research}, pp.\  3332--3380. PMLR, 28--30 Mar 2022.

\bibitem[Zhang et~al.(2021)Zhang, Ni, Yu, Szepesvari, and Wang]{zhang2021on}
Junyu Zhang, Chengzhuo Ni, Zheng Yu, Csaba Szepesvari, and Mengdi Wang.
\newblock On the convergence and sample efficiency of variance-reduced policy gradient method.
\newblock In A.~Beygelzimer, Y.~Dauphin, P.~Liang, and J.~Wortman Vaughan (eds.), \emph{Advances in Neural Information Processing Systems}, 2021.

\bibitem[Zhang et~al.(2022)Zhang, Jin, Hou, and Luo]{zhang2022resilient}
Mingyue Zhang, Zhi Jin, Jian Hou, and Renwei Luo.
\newblock Resilient mechanism against byzantine failure for distributed deep reinforcement learning.
\newblock In \emph{2022 IEEE 33rd International Symposium on Software Reliability Engineering (ISSRE)}, pp.\  378--389. IEEE, 2022.

\end{thebibliography}

\newpage

\renewcommand\thesection{\Alph{section}}
\setcounter{section}{0} 

\appendix
\onecolumn

\section{Additional experiments}
\label{sec:additional-exp}

\begin{figure*}[htbp]
\centering
\subfigure[HalfCheetah]{
\label{fig:2(a)} 
\includegraphics[width=2.6in, height=1.4in]{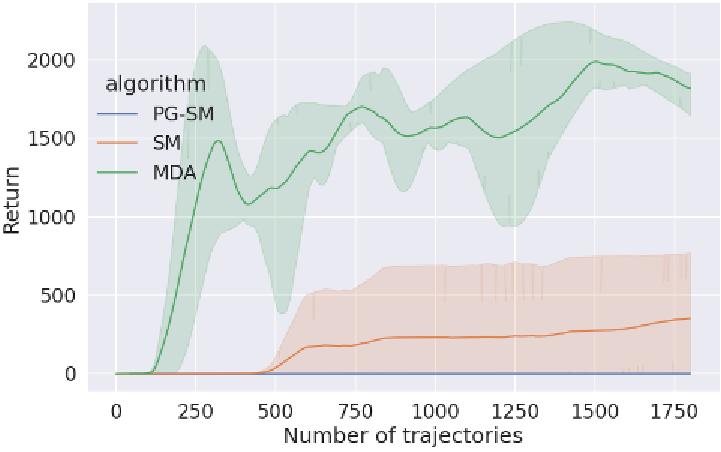}}
\subfigure[Hopper]{
\label{fig:2(b)} 
\includegraphics[width=2.6in, height=1.4in]{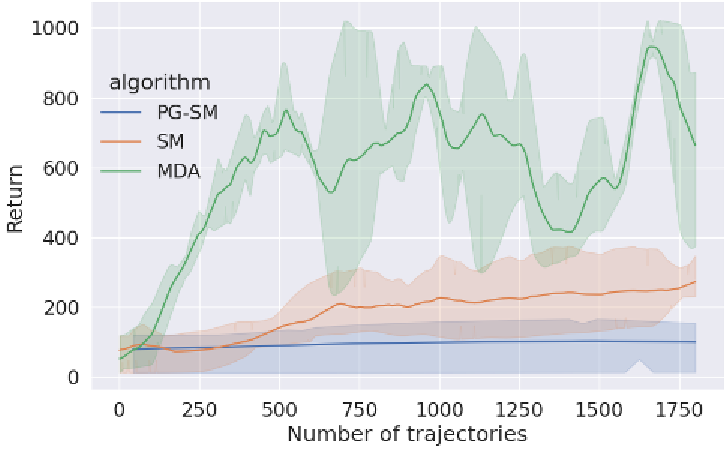}}
\subfigure[Inverted Double Pendulum]{
\label{fig:2(c)} 
\includegraphics[width=2.6in, height=1.4in]{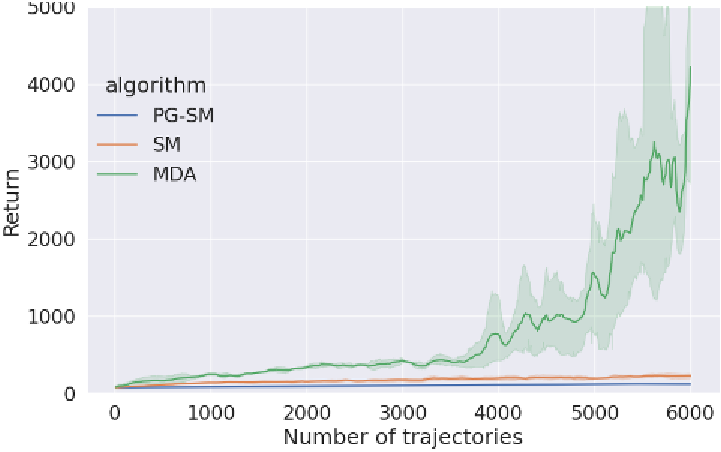}}
\subfigure[Walker]{
\label{fig:2(d)} 
\includegraphics[width=2.6in, height=1.4in]{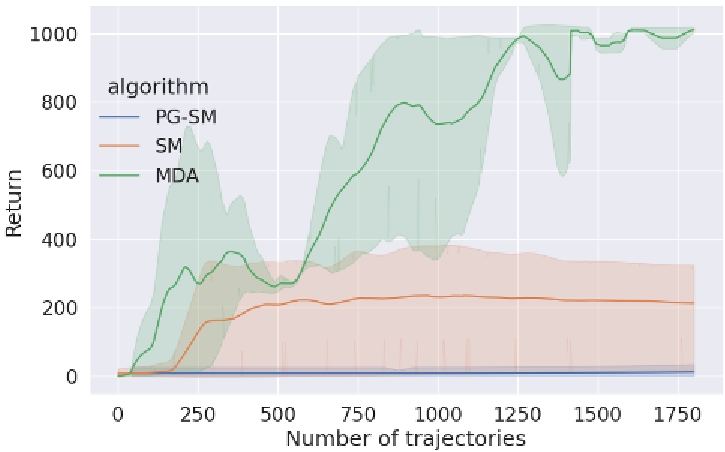}}
\caption{Res-NHARPG with the MDA aggregator consistently outperform the baselines: N-HARPG (i.e., SM) and Vanilla PG (i.e., PG-SM), on a series of MuJoCo tasks.}
\label{fig:2} 
\end{figure*}

Previous evaluation results have shown the superiority of Res-NHARPG with $(f,\lambda)$- aggregators. To further demonstrate its applicability, we consider Res-NHARPG with MDA and compare it with the baselines: N-HARPG (i.e., SM) and Vanilla PG (i.e., PG-SM), on a series of more challenging MuJoCo tasks: HalfCheetah, Hopper, Inverted Double Pendulum, Walker, of which the result is shown as Figure \ref{fig:2}. Our algorithm consistently outperforms the baselines when adversaries (specifically, random noise) exist, and relatively, N-HARPG performs better than Vanilla PG. Note that our purpose is not to reach SOTA performance but to testify the effectiveness of aggregators, so the three algorithms in each subfigure share the same set of hyperparameters (without heavy fine-tuning). In Figure \ref{fig:2}, we illustrate the training progress, up to a maximum number of sampled trajectories (6000 for the Inverted Double Pendulum and 1800 for other tasks), for each algorithm by plotting their episodic returns. The advantage of Res-NHARPG is more significant when considering the peak model performance. For instance, the highest evaluation score achieved by Res-NHARPG on Inverted Double Pendulum can exceed 9000, i.e., the SOTA performance as noted in \cite{tianshou}, while the baselines' scores are under 500. {\color{black} We also evaluate the Fed-ADMM algorithm proposed in \cite{lan2023improved} in Fig. \ref{fig:3}. As expected, even with a higher number of samples, Fed-ADMM yields significantly lower returns compared to Res-NHARPG with the MDA aggregator (see, for instance, Fed-ADMM performs poorly in HalfCheetah, achieves a return of 350 in Hopper compared to consistently above 400 for our algorithm; 90 in Inverted Double Pendulum to above 4000 for our algorithm; less than 300 in Walker to around 1000 for our algorithm). }

\textbf{Details regarding attacks considered:} By `random noise' or `sign flipping', the real estimated policy gradients are altered by adding random noises or multiplying by a negative factor, respectively. While, for `random action', adversarial workers would select random actions at each step, regardless of the state, when sampling trajectories for gradient estimations. Unlike the other attacks, `random action' does not directly change the gradients, making it more challenging to detect. Also, `random action' is different from the widely-adopted $\epsilon$-greedy exploration method, since the action choice is fully random and the randomness does not decay with the learning process.

\begin{figure*}[t]
\centering
\subfigure[HalfCheetah]{
\label{fig:3(a)} 
\includegraphics[width=2.4in, height=1.4in]{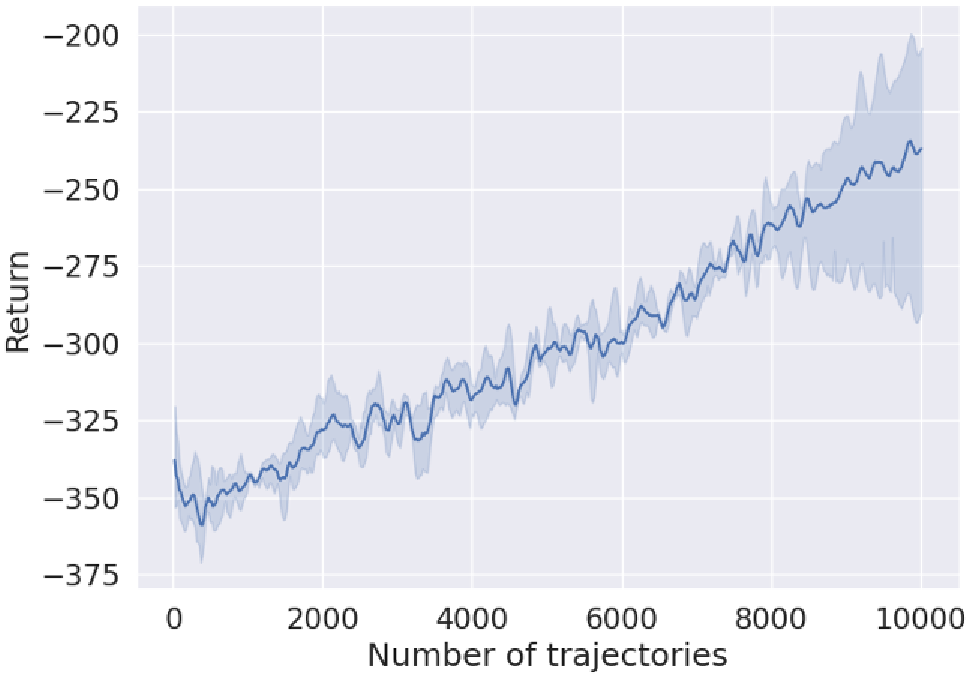}}
\subfigure[Hopper]{
\label{fig:3(b)} 
\includegraphics[width=2.4in, height=1.4in]{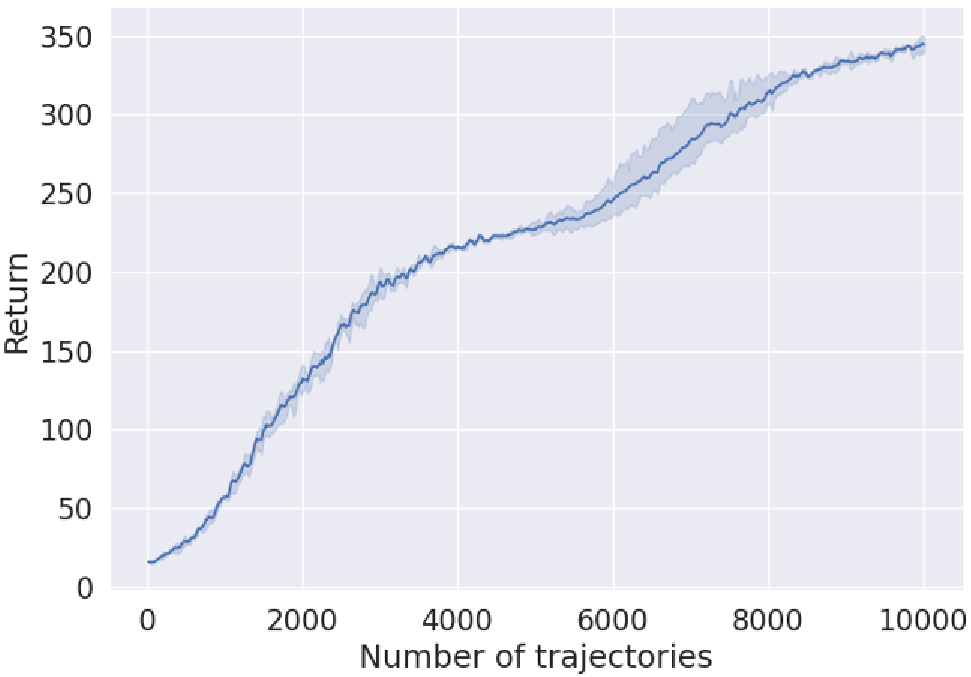}}
\subfigure[Inverted Double Pendulum]{
\label{fig:3(c)} 
\includegraphics[width=2.4in, height=1.4in]{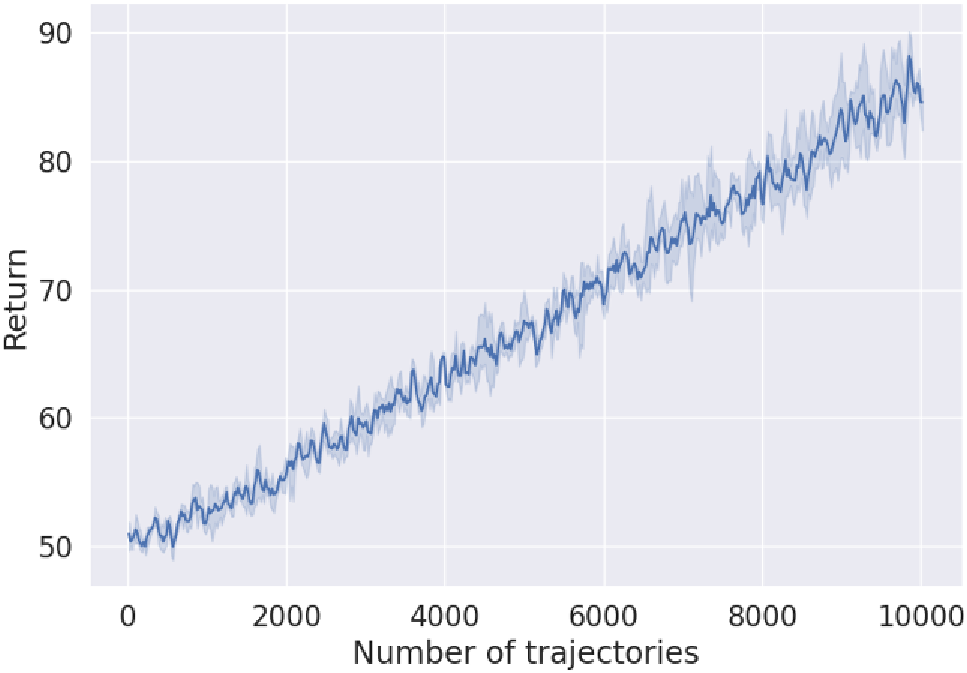}}
\subfigure[Walker]{
\label{fig:3(d)} 
\includegraphics[width=2.4in, height=1.4in]{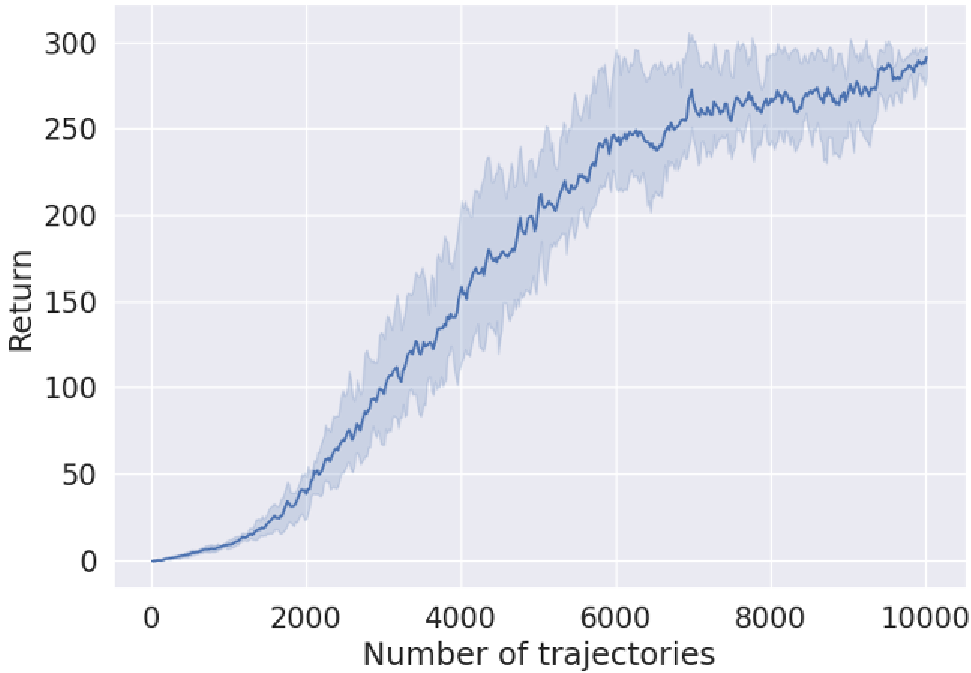}}
\caption{\color{black}Evaluation of Fed-ADMM \cite{lan2023improved} on MuJoCo tasks with random noise. The solid lines represent the mean performance, while the shaded areas indicate the 95\% confidence intervals from repeated experiments. We used the official implementation from \cite{lan2023improved}.}
\label{fig:3} 
\end{figure*}

\section{Details of aggregator functions}
\label{sec:details-aggregators}
Let $[x]_k$ denote the $k$-th coordinate of $x \in \Rd$. Given a set of $n$ vectors $X=\{x_1, \ldots, \, x_n\}$ as input, the outputs of different aggregator functions are given below.
\begin{itemize}
\item {\bf Minimum Diameter Averaging (MDA)} (originally proposed in~\cite{rousseeuw1985multivariate} and reused in~\cite{mhamdi18a}) selects a set $\cG^{*}$ of cardinality $N-f$ with the smallest diameter, i.e., 
\begin{equation*}
    \cG^{ *} \in \argmin_{\underset{|\cG| = N-f}{\cG \subset \{ 1, \ldots, \, N} \} } \left\{\max_{i,j\in \cG} \|x_i - x_j\| \right\} \label{eqn:def_S*}
\end{equation*}
and outputs $\frac{1}{N-f} \sum_{i \in \cG^{*}} x_i$.
\item {\bf Co-ordinate wise Trimmed Mean (CWTM)} \cite{yin2018byzantine}:
Consider co-ordinate $k=1,2,\cdots,d$. Let $\cG_k \subset X$ be such that $\cG^{c}_k$ consists only of elements in $X$ with the $f$ largest or $f$ smallest values of $[x]_k$. Then, CWTM outputs
\begin{equation*}
    \left[\text{CWTM}(x_1, \ldots, x_n)\right]_k = \frac{1}{N-2f} \sum_{x \in \cG_k} [x]_k. \label{eqn:def_cwtm}
\end{equation*}

\item {\bf Co-ordinate wise Median (CWMed)} \cite{yin2018byzantine}:
The output of CWMed is given by
\begin{equation*}
    \left[\text{CWMed}\left(x_1, \ldots, x_n\right)\right]_k =\text{Median}\left( [x_1]_k, \ldots [x_n]_k \right). \label{eqn:def_cwmed}
\end{equation*}

\item {\bf Mean around Median (MeaMed)} \cite{meamed}:
\label{sec:Meamed}
MeaMed computes the average of  the $N-f$ closest elements to the median in each dimension. Specifically, for each $k \in [d]$, $m \in [n]$, let $i_{m;k}$ be the index of the input vector with $k$-th coordinate that is $m$-th closest to $\textnormal{Median}([x_1]_k,\dots,[x_n]_k)$. Let $C_k$ be the set of $N-f$ indices defined as
\[C_k = \{i_{1;k}, \ldots, \, i_{N-f;k}\}.\] Then we have
\begin{equation*}
    [\text{MeaMed}(x_1, \ldots, x_n)]_k = \frac{1}{N-f} \sum_{i \in C_{k}} [x_i]_k.
\end{equation*}

    \item {\bf Krum} \cite{krum}: Multi-Krum$^*$ outputs an average of the vectors that are the closest to their neighbors upon discarding $f$ farthest vectors. Specifically, for each $i \in [n]$ and $k \in [n-1]$, let $i_k \in [n] \setminus \{i\}$ be the index of the $k$-th closest input vector from $x_i$, i.e., we have $\|x_i - x_{i_1}\| \leq \ldots \leq \|x_i - x_{i_{n-1}}\|$ with ties broken arbitrarily. Let $C_i$ be the set of $N-f-1$ closest vectors to $x_i$, i.e., 
\[C_i = \{i_{1}, \ldots, \, i_{N-f-1}\}.\] Then, for each $i \in [n]$, we define $score(i) = \sum_{j\in C_i} \|x_i - x_j\|^2 $. 
Finally, Multi-Krum$^*_q$ outputs the average of $q$ input vectors with the smallest scores, i.e., 
\begin{align*}
    \text{Multi-Krum}^{*}_q \left(x_1, \ldots, \, x_n \right) = \frac{1}{q} \sum_{i \in  S(q)} x_i,
    \label{eqn:def_krum}
\end{align*}
where $ S(q)$ is the set of $q$ vectors with the smallest scores. We call Krum$^*$ the special case of Multi-Krum$^*_q$ for $q = 1$.

    \item {\bf Geometric Median (GM)} \cite{chen2017distributed}: For input vectors $x_1, \ldots, \, x_n$, their geometric median, denoted by $\text{GM}(x_1, \ldots, x_n)$, is defined to be a vector that minimizes the sum of the distances to these vectors. Specifically, we have
 \begin{equation*}
     \textnormal{GM}(x_1, \ldots, x_n) \in \argmin_{z \in \R^d} \sum_{i = 1}^n \|z-x_i\|.
 \end{equation*}
 
\end{itemize}

\textbf{Comments:}

\begin{itemize}
\item We observe that MDA, CWTM and MeaMed achieve optimal bounds, while CWMed, Krum and GM do not. This is explained in \cite{farhadkhani22a} by noting that Krum, CWMed, and GM operate solely on median-based aggregation, while MDA, CWTM, and MeaMed perform an averaging step after filtering out suspicious estimates. This averaging step results in variance reduction, similar to simple averaging in vanilla distributed SGD. 
\item It's worth mentioning that CWMed and GM do not require any knowledge of $f$ to be implemented. On the other hand, while MDA, CWTM, MeaMed, and Krum do rely on some knowledge of $f$, they can still be implemented by substituting an upper bound for $f$ instead. In this case, our guarantees would scale based on this upper bound, instead of $f$.
\end{itemize}

\section{Proof details}
\subsection{Notations}
\label{subsec:notations}
Before proceeding further, we begin with the following lemma where we also introduce a few notations.
\begin{lemma}
\label{lem:general-lem}
For all $\theta \in \R^d$ and trajectories $\tau$ of length $H$ sampled by policy $\pi_{\theta}$, we have 
\begin{enumerate}
    \item $J$ is $L$-smooth with $L \coloneqq \frac{R(G_1^2+G_2)}{(1-\gamma)^2}$.
    
    \item $g(\tau,\theta)$ is an unbiased estimate of $\nabla J_H(\theta)$ and \begin{align}
         \|g(\tau,\theta)\| \leq \frac{G_1 R}{(1-\gamma)^2}
         \coloneqq G_g.
    \end{align}
    \item $B(\tau,\theta)$ is an unbiased estimate of $\nabla^2 J_H(\theta)$ and  \begin{align}
    \label{eq:hessian-est-bound}
         \|B(\tau,\theta)\| \leq \frac{G_2G_1^2R+LR}{(1-\gamma)^2} \coloneqq G_H.
    \end{align}
    \item  For some $0 < \sigma^2_H \leq G_H^2$, we have
    \begin{align}
         \E\|B_t(\tau,\theta) - \nabla^2 J_H(\theta)\|^2 \leq \sigma_H^2.
    \end{align}
   
\end{enumerate}
        
\end{lemma}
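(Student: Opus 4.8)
\textbf{Proof plan for Lemma~\ref{lem:general-lem}.}

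The plan is to establish the four items by direct computation from the definitions of $J_H$, $g(\tau,\theta)$, and $B(\tau,\theta)$, using only Assumptions~\ref{assump: variance}--\ref{assump: conditions on score function} together with boundedness of the reward by $R$ and the geometric decay of $\gamma^h$. For item~1, I would recall the standard policy-gradient Hessian identity $\nabla^2 J(\theta) = \E_\tau\big[\cR(\tau)(\nabla\log p(\tau|\theta)\nabla\log p(\tau|\theta)^\top + \nabla^2\log p(\tau|\theta))\big]$, then bound each factor: $|\cR(\tau)| \le R/(1-\gamma)$ via the geometric series, $\|\nabla\log p(\tau|\theta)\| \le \sum_h \|\nabla\log\pi_\theta(a_h|s_h)\| \le G_1/(1-\gamma)$ after the usual telescoping/$\gamma$-weighting bookkeeping (more precisely one uses the discounted form so that the effective horizon is $1/(1-\gamma)$), and $\|\nabla^2\log p(\tau|\theta)\| \le G_2/(1-\gamma)$ from the Lipschitz part of Assumption~\ref{assump: conditions on score function}. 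Combining gives $\|\nabla^2 J(\theta)\| \le R(G_1^2+G_2)/(1-\gamma)^2 =: L$, which is the claimed smoothness constant; the same bound passes to $J_H$ since truncation only drops nonnegative-weight tail terms.

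For item~2, unbiasedness of the GPOMDP estimator for $\nabla J_H(\theta)$ is quoted from \cite{baxter2001infinite} (already cited in the excerpt right after \eqref{eq:trunc-GPOMDP}). The norm bound follows by writing $g(\tau,\theta) = \sum_{h=0}^H \big(\sum_{t=0}^h \nabla\log\pi_\theta(a_t|s_t)\big)\gamma^h r_h$, taking norms, using $\|\nabla\log\pi_\theta\| \le G_1$, $|r_h|\le R$, and $\sum_{h=0}^\infty (h+1)\gamma^h = 1/(1-\gamma)^2$ — actually the cleaner route is to rewrite $g$ in the ``reward-to-go'' form $\sum_{t=0}^{H}\nabla\log\pi_\theta(a_t|s_t)\big(\sum_{h=t}^H \gamma^h r_h\big)$ so that the inner sum is $\le \gamma^t R/(1-\gamma)$, giving $\|g(\tau,\theta)\| \le G_1 R \sum_{t\ge0}\gamma^t/(1-\gamma) = G_1 R/(1-\gamma)^2 =: G_g$. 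Item~3 is analogous: $B(\tau,\theta) = \nabla\Phi(\tau,\theta)\nabla\log p(\tau|\pi_\theta)^\top + \nabla^2\Phi(\tau,\theta)$ is the per-trajectory version of the Hessian identity, so unbiasedness for $\nabla^2 J_H(\theta)$ is immediate by taking expectations and matching terms; for the norm bound one bounds $\|\nabla\Phi(\tau,\theta)\| \le G_1 R/(1-\gamma)^2 = G_g$ (same computation as for $g$, since $\Phi$'s gradient equals $g$ up to the truncation indexing), $\|\nabla\log p(\tau|\pi_\theta)\| \le G_1/(1-\gamma)$, and $\|\nabla^2\Phi(\tau,\theta)\| \le G_2 R/(1-\gamma)^2$ plus a $\nabla\log\pi\,\nabla(\text{reward-to-go})$ cross term, assembling to $\|B(\tau,\theta)\| \le (G_2 G_1^2 R + LR)/(1-\gamma)^2 =: G_H$ as stated (the exact grouping into $G_2 G_1^2 R + LR$ is bookkeeping — I would just verify the constants line up).

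Finally, item~4 is a one-line consequence of item~3: $\E\|B(\tau,\theta) - \nabla^2 J_H(\theta)\|^2 = \E\|B(\tau,\theta)\|^2 - \|\nabla^2 J_H(\theta)\|^2 \le \E\|B(\tau,\theta)\|^2 \le G_H^2$, so one may take $\sigma_H^2 := \sup_\theta \E\|B(\tau,\theta) - \nabla^2 J_H(\theta)\|^2 \le G_H^2$. I expect the main obstacle to be purely clerical rather than conceptual: correctly handling the index ranges in $\Phi$ (which runs to $H-1$) versus $g$ (which runs to $H$), and making sure every $\gamma$-weighted double sum collapses to the advertised $(1-\gamma)^{-1}$ or $(1-\gamma)^{-2}$ factor so that the constants $L$, $G_g$, $G_H$ come out exactly as written; there is no hard inequality or probabilistic argument here, so the proof is essentially an organized accounting of these bounds.
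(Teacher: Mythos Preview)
The paper does not actually prove this lemma: it simply cites \cite{yuan22a} (Lemmas 4.2 and 4.4) for items~1--2 and \cite{shen19d} (Lemma~4.1) for items~3--4. Your plan is therefore \emph{different} in that you propose to reproduce those computations directly rather than defer to the literature. The sketch you give is essentially what those cited proofs do, and is correct in outline; the payoff is a self-contained argument, at the cost of the clerical bookkeeping you already flag.

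One small technical caveat on item~4: the identity $\E\|B - \nabla^2 J_H\|^2 = \E\|B\|^2 - \|\nabla^2 J_H\|^2$ you invoke holds when $\|\cdot\|$ comes from an inner product (Frobenius norm, or Euclidean after vectorizing), but \emph{not} for the operator norm. Since Assumption~\ref{assump: conditions on score function}(2) controls only the Lipschitz constant of the score --- i.e., the operator norm of $\nabla^2\log\pi_\theta$ --- the bound $\|B(\tau,\theta)\|\le G_H$ in item~3 is most naturally an operator-norm statement, in which case your one-liner yields only $\sigma_H^2 \le 4G_H^2$ via the triangle inequality $\|B-\E B\|_{\mathrm{op}}\le 2G_H$. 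This constant discrepancy is harmless for the downstream analysis (only finiteness of $\sigma_H^2$ is used), but you should either switch the argument to Frobenius norm throughout items~3--4, or weaken the claimed inequality to $\sigma_H^2\le 4G_H^2$, or simply note that the sharp constant is established in \cite{shen19d}.
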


The first two statements are given in Lemma 4.2 and 4.4, \cite{yuan22a} and the third and fourth statements are given in Lemma 4.1 in \cite{shen19d}. Since the algorithm makes use of the truncated gradient and hessian estimates $\nabla J_H(\theta)$ and $\nabla^2 J_H(\theta)$ instead of $\nabla J(\theta)$ and $\nabla^2 J(\theta)$, we must bound the error terms $\nabla J(\theta)-\nabla J_H(\theta)$ and $\nabla^2 J(\theta)-\nabla^2 J_H(\theta)$. It is known that these error terms vanish geometrically fast with $H$ and is stated below:
\begin{lemma}[Lemma 3, \cite{masiha}]
\label{masiha-lemma}
    Let Assumption \ref{assump: conditions on score function} be satisfied, then for all $\theta \in \R^d$ and every $H\geq 1$, we have
    \begin{align*}
        \|\nabla J_H(\theta)-\nabla J(\theta)\| \leq D_g \gamma^H \text{ , } \|\nabla J_H^2(\theta)-\nabla J^2(\theta)\| \leq D_h \gamma^H,
    \end{align*}
    where $D_g \coloneqq \frac{G_1 R}{1-\gamma} \sqrt{\frac{1}{1-\gamma}+H}$ and $D_h \coloneqq \frac{R(G_2 +G_1^2)}{1-\gamma} \left(\frac{1}{1-\gamma}+H\right)$.
\end{lemma}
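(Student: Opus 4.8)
The plan is to write the truncation error as the tail of a policy-gradient series and bound it term by term. Set $\Psi_h(\tau):=\sum_{t=0}^{h}\nabla_\theta\log\pi_\theta(a_t\mid s_t)$. The likelihood-ratio (GPOMDP) identity gives $\nabla J(\theta)=\E_\tau\big[\sum_{h=0}^{\infty}\gamma^h r(s_h,a_h)\,\Psi_h(\tau)\big]$, while $\nabla J_H(\theta)=\E_\tau[g(\tau,\theta)]=\E_\tau\big[\sum_{h=0}^{H}\gamma^h r(s_h,a_h)\,\Psi_h(\tau)\big]$ with $g$ as in \eqref{eq:trunc-GPOMDP}; since the length-$h$ marginals agree for $h\le H$, subtracting yields
\[
\nabla J(\theta)-\nabla J_H(\theta)=\E_\tau\Big[\sum_{h=H+1}^{\infty}\gamma^h r(s_h,a_h)\,\Psi_h(\tau)\Big].
\]
The first thing I would do is justify the exchange of $\nabla$ with $\E$, and of $\E$ with the infinite sum, by dominated convergence, using $|r|\le R$, $\|\nabla_\theta\log\pi_\theta\|\le G_1$ (Assumption \ref{assump: conditions on score function}), and $\gamma<1$, so that $\sum_h \gamma^h\,\E\big[|r(s_h,a_h)|\,\|\Psi_h\|\big]<\infty$.

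For the gradient bound I would estimate each term by $\big\|\E[\gamma^h r(s_h,a_h)\Psi_h]\big\|\le \gamma^h R\,\E\|\Psi_h\|\le\gamma^h R\,(\E\|\Psi_h\|^2)^{1/2}$. The crucial point is that $\nabla_\theta\log\pi_\theta(a_t\mid s_t)$ has zero conditional mean given $s_t$, since $\E[\nabla_\theta\log\pi_\theta(a\mid s_t)\mid s_t]=\int\nabla_\theta\pi_\theta(a\mid s_t)\,da=\nabla_\theta 1=0$; hence $\{\nabla_\theta\log\pi_\theta(a_t\mid s_t)\}_t$ is a martingale-difference sequence for the trajectory filtration, the cross terms vanish, and $\E\|\Psi_h\|^2=\sum_{t=0}^{h}\E\|\nabla_\theta\log\pi_\theta(a_t\mid s_t)\|^2\le (h+1)G_1^2$. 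This gives $\|\nabla J(\theta)-\nabla J_H(\theta)\|\le G_1R\sum_{h=H+1}^{\infty}\gamma^h\sqrt{h+1}$, and I would close the argument with Cauchy--Schwarz on the series, $\sum_{h>H}\gamma^h\sqrt{h+1}\le\big(\sum_{h>H}\gamma^h\big)^{1/2}\big(\sum_{h>H}\gamma^h(h+1)\big)^{1/2}$, evaluating the two geometric-type sums (after the index shift $h\mapsto h-H$) to recover $\tfrac{G_1R}{1-\gamma}\big(\tfrac{1}{1-\gamma}+H\big)^{1/2}\gamma^H$, absorbing the extra $\gamma$ factor ($\gamma^{H+1}\le\gamma^H$) and small additive constants.

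For the Hessian I would run the same scheme starting from the second-order likelihood-ratio identity
\[
\nabla^2 J(\theta)-\nabla^2 J_H(\theta)=\E_\tau\Big[\sum_{h=H+1}^{\infty}\gamma^h r(s_h,a_h)\Big(\Psi_h\Psi_h^\top+\sum_{t=0}^{h}\nabla^2_\theta\log\pi_\theta(a_t\mid s_t)\Big)\Big].
\]
Here $\|\Psi_h\Psi_h^\top\|=\|\Psi_h\|^2$ is controlled using either $\E\|\Psi_h\|^2\le(h+1)G_1^2$ or the deterministic bound $\|\Psi_h\|\le(h+1)G_1$, and $\big\|\sum_{t=0}^{h}\nabla^2_\theta\log\pi_\theta(a_t\mid s_t)\big\|\le(h+1)G_2$ by the Lipschitz-score part of Assumption \ref{assump: conditions on score function} (which forces $\|\nabla^2_\theta\log\pi_\theta\|\le G_2$). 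Summing $(h+1)(G_1^2+G_2)R\,\gamma^h$ over $h>H$ (a standard $\sum_{h>H}(h+1)\gamma^h$ computation) gives $D_h\gamma^H$ with $D_h=\tfrac{R(G_2+G_1^2)}{1-\gamma}\big(\tfrac{1}{1-\gamma}+H\big)$; there is no square root here precisely because the $\Psi_h\Psi_h^\top$ term grows linearly in $h$.

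The main obstacle I expect is twofold. First, the rigor of differentiating the infinite-horizon objective under the integral and of interchanging expectation with the infinite sum over $h$: one has to show the $H$-truncated gradients and Hessians actually converge to $\nabla J$ and $\nabla^2 J$, and this is exactly where $\gamma<1$ combined with the polynomial-in-$h$ growth of $\|\Psi_h\|$ is needed. Second, recovering the exact constants: the square-root dependence $\sqrt{\tfrac{1}{1-\gamma}+H}$ in $D_g$ genuinely requires the martingale/orthogonality argument (a naive $\|\Psi_h\|\le(h+1)G_1$ bound would only deliver a linear-in-$H$ factor), and the final Cauchy--Schwarz must be arranged so that the index shift produces precisely $\tfrac{1}{1-\gamma}+H$ under the radical. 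Beyond these points the proof is routine geometric-series bookkeeping.
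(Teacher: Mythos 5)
The paper does not prove this lemma at all: it is imported verbatim as Lemma~3 of the cited reference \cite{masiha}, so there is no in-paper argument to compare against. Your blind derivation is correct and is exactly the standard route (and, as far as the statement's constants indicate, the route of the cited source): write the truncation error as the tail $\sum_{h>H}$ of the (GPOMDP-form) first- and second-order likelihood-ratio expansions, use $|r|\le R$, $\|\nabla_\theta\log\pi_\theta\|\le G_1$, $\|\nabla^2_\theta\log\pi_\theta\|\le G_2$, and the martingale-difference orthogonality $\E\|\Psi_h\|^2\le (h+1)G_1^2$ to get the $\sqrt{h+1}$ (resp.\ $(h+1)$) growth, then sum the geometric tails. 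One bookkeeping caution: after Cauchy--Schwarz the tail sums give $\frac{\gamma^{H+1}}{1-\gamma}\sqrt{H+1+\frac{1}{1-\gamma}}$ (resp.\ $\frac{\gamma^{H+1}}{1-\gamma}\left(H+1+\frac{1}{1-\gamma}\right)$), so you cannot independently drop the spare $\gamma$ and the extra $+1$; rather, the inequality $\gamma^2\left(H+1+\frac{1}{1-\gamma}\right)\le H+\frac{1}{1-\gamma}$ (equivalently $\gamma\left(H+1+\frac{1}{1-\gamma}\right)\le H+\frac{1}{1-\gamma}$ for the Hessian case, which follows from $(1-\gamma)^2\ge 0$) is what converts your bound into the stated $D_g\gamma^H$ and $D_h\gamma^H$ exactly.
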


\subsection{Proof of Lemma \ref{lem:bar_dt_grad-bound}}
\label{subsec:bar_dt_grad_bound}

The proof follows similarly as in \cite{fatkhullin}. Here, we take into account the affect of estimates of $(N-f)$ agents versus one agent. Since each of the $d_t^{(n)}$ are updated as
\begin{align}
  d_t^{(n)} &= (1 - \eta_t) ( d_{t-1}^{(n)} +  B(\hat{\tau}_t^{(n)}, \hat{\theta}_{t}^{(n)})(\theta_t - \theta_{t-1}) ) + \eta_t g(\tau_t^{(n)}, \theta_t^{(n)}),
\end{align}
 we notice that $\bar{d}_t = \frac{1}{N-f} \sum_{n \in \cG} d_t^{(n)} $ can be expressed as
\begin{align}
  \bar{d}_t &= (1 - \eta_t) ( \bar{d}_{t-1} +  \bar{B}(\hat{\theta}_{t})(\theta_t - \theta_{t-1}) ) + \eta_t \bar{g}(\theta_t),
\end{align}
where $\bar{B}(\hat{\theta}_{t}) \coloneqq \frac{1}{N-f} \sum_{n \in \cG} B(\hat{\tau}_t^{(n)}, \hat{\theta}_{t}^{(n)})$ and $\bar{g}(\theta_t) \coloneqq \frac{1}{N-f} \sum_{n \in \cG} g(\tau_t^{(n)}, \theta_{t})$.

It follows that
\begin{align}
\label{eq:d_t-expansion}
  \bar{d}_t - \nabla J_H(\theta_t) &= (1 - \eta_t) ( \bar{d}_{t-1} +  \bar{B}(\hat{\theta}_{t})(\theta_t - \theta_{t-1}) ) + \eta_t \bar{g}(\theta_t) - \nabla J_H(\theta_t) \nonumber \\
  &= (1 - \eta_t) ( \bar{d}_{t-1} - \nabla J_H(\theta_{t-1}) +  \bar{B}(\hat{\theta}_{t})(\theta_t - \theta_{t-1}) + \nabla J_H(\theta_{t-1}) - \nabla J_H(\theta_t))) \nonumber \\
  &+ \eta_t (\bar{g}(\theta_t) - \nabla J_H(\theta_t)) \nonumber \\
  &= (1 - \eta_t) ( \bar{d}_{t-1} - \nabla J_H(\theta_{t-1})) +  (1 - \eta_t) \bar{\mathcal{W}}_t  + \eta_t \bar{\mathcal{V}}_t,
\end{align}
where $\bar{\mathcal{W}}_t \coloneqq \bar{B}(\hat{\theta}_{t})(\theta_t - \theta_{t-1}) + \nabla J_H(\theta_{t-1}) - \nabla J_H(\theta_t)$ and $\bar{\mathcal{V}}_t \coloneqq \bar{g}(\theta_t) - \nabla J_H(\theta_t)$. We then have the following lemma

\begin{lemma}
\label{lem:variance-N-f}
For all $t \geq 1$, the following statements hold
\begin{enumerate}[label=(\roman*)]
    \item $ \E[\bar{\mathcal{W}}_t] = 0$
    \item $\E[\bar{\mathcal{V}}_t] = 0$
    \item $\E[\|\bar{\mathcal{V}}_t\|^2] \leq \frac{\sigma^2}{N-f}$
    \item $\E[\|\bar{\mathcal{W}}_t\|^2] \leq \frac{12}{N-f} ((2 L^2 + \sigma_H^2 + D_h^2 \g^{2 H})\cdot \gamma_t^2 + D_g^2 \g^{2 H})$.
\end{enumerate}
\end{lemma}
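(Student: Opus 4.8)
\textbf{Proof plan for Lemma \ref{lem:variance-N-f}.}

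The plan is to verify each of the four claims by exploiting that the per-agent quantities $\bar{\mathcal{W}}_t$ and $\bar{\mathcal{V}}_t$ are averages over the $N-f$ honest agents of i.i.d.\ zero-mean terms (conditioned on the history up to iteration $t$), so that the usual ``variance scales like $1/(N-f)$'' phenomenon applies. I would condition throughout on the sigma-algebra $\mathcal{F}_t$ generated by $\theta_0,\dots,\theta_t$ (equivalently, everything known at the start of the agent update at iteration $t$), under which $\theta_t,\theta_{t-1}$ are deterministic and the fresh randomness is $\{q_t^{(n)},\tau_t^{(n)},\hat{\tau}_t^{(n)}\}_{n\in\cG}$, which are i.i.d.\ across $n\in\cG$.

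For (i), write $\bar{\mathcal{W}}_t = \frac{1}{N-f}\sum_{n\in\cG}\big(B(\hat{\tau}_t^{(n)},\hat{\theta}_t^{(n)})(\theta_t-\theta_{t-1}) + \nabla J_H(\theta_{t-1}) - \nabla J_H(\theta_t)\big)$. For a single honest agent $n$, first take expectation over $\hat{\tau}_t^{(n)}$ given $\hat{\theta}_t^{(n)}$: by statement 3 of Lemma \ref{lem:general-lem}, $\E[B(\hat{\tau}_t^{(n)},\hat{\theta}_t^{(n)}) \mid \hat{\theta}_t^{(n)}] = \nabla^2 J_H(\hat{\theta}_t^{(n)})$, so the term equals $\nabla^2 J_H(\hat{\theta}_t^{(n)})(\theta_t-\theta_{t-1})$. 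Then average over $q_t^{(n)}\sim\mathcal{U}([0,1])$: since $\hat{\theta}_t^{(n)} = \theta_{t-1} + q_t^{(n)}(\theta_t-\theta_{t-1})$, the fundamental theorem of calculus gives $\E_{q}[\nabla^2 J_H(\hat{\theta}_t^{(n)})(\theta_t-\theta_{t-1})] = \int_0^1 \nabla^2 J_H(\theta_{t-1}+q(\theta_t-\theta_{t-1}))(\theta_t-\theta_{t-1})\,dq = \nabla J_H(\theta_t) - \nabla J_H(\theta_{t-1})$, which exactly cancels the remaining two terms. Hence each summand has conditional mean zero, so $\E[\bar{\mathcal{W}}_t\mid\mathcal{F}_t]=0$ and thus $\E[\bar{\mathcal{W}}_t]=0$. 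Statement (ii) is easier: $\bar{\mathcal{V}}_t = \frac{1}{N-f}\sum_{n\in\cG}(g(\tau_t^{(n)},\theta_t) - \nabla J_H(\theta_t))$ and $g(\tau_t^{(n)},\theta_t)$ is an unbiased estimate of $\nabla J_H(\theta_t)$ by statement 2 of Lemma \ref{lem:general-lem}, so each summand is conditionally mean zero.

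For (iii), by conditional independence across $n\in\cG$ and conditional mean zero, $\E[\|\bar{\mathcal{V}}_t\|^2\mid\mathcal{F}_t] = \frac{1}{(N-f)^2}\sum_{n\in\cG}\E[\|g(\tau_t^{(n)},\theta_t)-\nabla J_H(\theta_t)\|^2\mid\mathcal{F}_t] \le \frac{1}{(N-f)^2}\cdot(N-f)\sigma^2 = \frac{\sigma^2}{N-f}$ using Assumption \ref{assump: variance}; take total expectation. For (iv), the same i.i.d.\ decoupling gives $\E[\|\bar{\mathcal{W}}_t\|^2] \le \frac{1}{N-f}\E[\|\mathcal{W}_t^{(n)}\|^2]$ where $\mathcal{W}_t^{(n)}$ is the single-agent term; the remaining task is the (known, single-agent) bound $\E[\|\mathcal{W}_t^{(n)}\|^2] \le 12\big((2L^2+\sigma_H^2+D_h^2\gamma^{2H})\gamma_t^2 + D_g^2\gamma^{2H}\big)$, which follows the argument in \cite{fatkhullin}: split $B(\hat\tau,\hat\theta)(\theta_t-\theta_{t-1}) + \nabla J_H(\theta_{t-1}) - \nabla J_H(\theta_t)$ into (a) $\big(B(\hat\tau,\hat\theta)-\nabla^2 J_H(\hat\theta)\big)(\theta_t-\theta_{t-1})$, controlled by the Hessian-estimator variance $\sigma_H^2$ (statement 4 of Lemma \ref{lem:general-lem}) and $\|\theta_t-\theta_{t-1}\| = \gamma_{t-1}\le\gamma_t$ up to reindexing; (b) $\int_0^1(\nabla^2 J_H - \nabla^2 J)(\cdots)(\theta_t-\theta_{t-1})dq$ plus the analogous truncation term, controlled by $D_h\gamma^H$ via Lemma \ref{masiha-lemma}; and (c) $\int_0^1 \nabla^2 J(\theta_{t-1}+q(\theta_t-\theta_{t-1}))(\theta_t-\theta_{t-1})dq - (\nabla J(\theta_t)-\nabla J(\theta_{t-1}))$ which vanishes, together with the $J$ vs.\ $J_H$ gradient truncation errors bounded by $D_g\gamma^H$; collecting terms with $\|a+b+c\|^2\le 3(\|a\|^2+\|b\|^2+\|c\|^2)$ and being slightly generous with constants yields the factor $12$.

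I expect the main obstacle to be part (i)/(iv): one must handle the nested randomness $q_t^{(n)}$ then $\hat\tau_t^{(n)}$ carefully and correctly account for the gap between the truncated objective $J_H$ (which the algorithm's estimators target) and the true objective $J$, invoking Lemma \ref{masiha-lemma} to absorb the mismatch into the $\gamma^H$ terms; the $1/(N-f)$ scaling itself is routine once conditional independence is set up. A minor bookkeeping point is that $\theta_t-\theta_{t-1} = \gamma_{t-1}d_{t-1}/\|d_{t-1}\|$ has norm exactly $\gamma_{t-1}$, so any $\gamma_t^2$ in the statement should be read with the understanding that $\gamma_{t-1}^2 \le \gamma_t^2$ fails (the step sizes are decreasing), but $\gamma_{t-1} = \frac{6G_1}{\mu_F(t+1)} \le \frac{6G_1}{\mu_F(t)} $ and a constant adjustment — or simply stating the bound in terms of $\gamma_{t-1}$ — resolves this; I would flag this and keep the constant $12$ by using $\gamma_{t-1}\le 2\gamma_t$ for $t\ge 1$.
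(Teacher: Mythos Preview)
Your proposal is correct and follows essentially the same route as the paper. The paper dispatches (i)--(ii) in one line by linearity and Lemma~\ref{lem:general-lem}, while you spell out the uniform-$q$ fundamental-theorem-of-calculus argument for (i) explicitly; for (iii) the arguments are identical. For (iv) the paper does not exploit the FTC cancellation in the second moment but instead splits $\mathcal{W}_t^{(n)}$ additively into several pieces via $\|a_1+\cdots+a_k\|^2\le k\sum\|a_i\|^2$ and bounds each separately using $L$-smoothness, the Hessian-estimator variance $\sigma_H^2$, and the truncation bounds $D_g\gamma^H$, $D_h\gamma^H$ from Lemma~\ref{masiha-lemma}, also landing on the constant $12$. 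Your flag about $\|\theta_t-\theta_{t-1}\|=\gamma_{t-1}$ rather than $\gamma_t$ is valid---the paper in fact writes $\gamma_t$ at this step---but the discrepancy is harmless downstream since only $\gamma_{t-1}^2=O(1/t^2)$ is used.
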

\begin{proof}[Proof of Lemma \ref{lem:variance-N-f}]
    The first two statements are easy to see since $\E[\bar{\mathcal{V}}_t] = \frac{1}{N-f}\sum_{n \in \cG} \E[\mathcal{V}_t^{(n)}]$ and $\E[\mathcal{V}_t^{(n)}]=0$ for all $n \in \cG$ from Lemma \ref{lem:general-lem}. Similar arguments show that $\E[\bar{\mathcal{W}}_t] = 0$.

    Notice that
    \begin{align}
        \E[\|\bar{\mathcal{V}}_t\|^2] = \frac{1}{(N-f)^2}\sum_{n \in \cG} \E[\|\mathcal{V}_t^{(n)}\|^2] \leq  \frac{\sigma^2}{N-f},
    \end{align}
    where the last inequality follows from Assumption \ref{assump: variance}. Similarly,
    \begin{align}
        \E[\|\bar{\mathcal{W}}_t\|^2] = \frac{1}{(N-f)^2}\sum_{n \in \cG} \E[\|\mathcal{W}_t^{(n)}\|^2].
    \end{align}
    We have for all $n \in \cG$
    \begin{align*}
        \E\| \mathcal{W}_t^{(n)} \|^2  & =  \E \|  \nabla J_{H}(\theta_{t-1}) - \nabla J_{H}(\theta_{t}) + B(\hat{\tau}_t^{(n)}, \hat{\theta}_{t}^{(n)})(\theta_t - \theta_{t-1}) \|  \\
      & \leq   6\E \| \nabla J_H(\theta_{t-1}) - \nabla J_H(\theta_{t-1}) \|^2 +    6\E \|\nabla J_H(\theta_{t-1}) - \nabla J_H(\theta_{t}) \|^2 \\
      &+ 6\E \|B(\hat{\tau}_t^{(n)}, \hat{\theta}_{t}^{(n)}) -  \nabla^2 J_H(\hat{\theta}_{t}) (\theta_t - \theta_{t-1}) \|^2 + 6\E \| \nabla^2 J_{H} (\hat{\theta}_{t}) - \nabla^2 J (\hat{\theta}_{t}) (\theta_t - \theta_{t-1}) \|^2 \\
      &+  6\E \| \nabla^2 J (\hat{\theta}_{t}) (\theta_t - \theta_{t-1})\|^2 \\
      &\leq 2(6L^2+3\sigma^2_H)\E\|\theta_t-\theta_{t-1}\|^2+12D_g^2\gamma^{2H} +6D_h^2\gamma^{2H}\E\|\theta_t-\theta_{t-1}\|^2 \\
      &\leq 2(6L^2+3\sigma^2_H)\cdot \gamma_t^2+12D_g^2\gamma^{2H} +6D_h^2\gamma^{2H}\cdot \gamma_t^2\\
      &=12 ((2 L^2 + \sigma_H^2 + D_h^2 \g^{2 H})\cdot \gamma_t^2 + D_g^2 \g^{2 H}),
    \end{align*}
where the last inequality follows from the fact that $\|\theta_t - \theta_{t-1}\| = \|\gamma_t \frac{d_t}{\|d_t\|}\| = \gamma_t$.
\end{proof}

Using Lemma \ref{lem:variance-N-f}, we obtain 
\begin{align}
\begin{split}
    \E \| \bar{d}_{t} - \nabla J_H(\theta_{t})  \|^2  &\leq  (1 - \eta_t) \E \| \bar{d}_{t-1} - \nabla J_H(\theta_{t-1})  \|^2 \\
    &+ \frac{1}{N-f} (2\sigma^2 \eta_t^2 + 12 ((2 L^2 + \sigma_H^2 + D_h^2 \g^{2 H})\cdot \gamma_t^2 + D_g^2 \g^{2 H}).
\end{split}
\end{align} 

Let $y_t \coloneqq  \frac{1}{N-f} (2\sigma^2 \eta_t^2  + 12 ((2 L^2 + \sigma_H^2 + D_h^2 \g^{2 H})\cdot \gamma_t^2 + D_g^2 \g^{2 H}))$. Unrolling the above, we obtain
\begin{align*}
 \E \| \bar{d}_{t} - \nabla J_H(\theta_{t})  \|^2 &\leq \sum_{i=1}^t \prod_{j=i+1}^t (1 - \eta_j) y_i.
\end{align*}
For $\eta_t = \frac{1}{t}$, it follows that $\prod^{j}_{k=j_0}(1-\eta_k) = \frac{j_0-1}{j}$ and 
\begin{align*}
 \E \| \bar{d}_{t} - \nabla J_H(\theta_{t})  \|^2 &\leq \frac{1}{t} \sum_{i=1}^t i y_i.
\end{align*}

Note that for $\eta_t = \frac{1}{t}$, $\gamma_t = \frac{6G_1}{\mu_F (t+2)}$ and $H = \frac{\log(T+1)}{(1-\gamma)}$, we have $i\eta_i^2 = \frac{1}{i}$, $i\gamma_i^2 \leq \frac{6G_1}{\mu_F i}$ and $i\gamma^{2H} \leq \frac{1}{i}$. Thus,
\begin{align}
\label{eq:final-bar-dt-grad-bound}
 \E \| \bar{d}_{t} - \nabla J_H(\theta_{t})  \|^2 &\leq \frac{1}{(N-f)t} \sum_{i=1}^t \frac{2\sigma^2}{i}  + 12 (2 L^2 + \sigma_H^2 + D_h^2 \g^{2 H}) \cdot \frac{6G_1}{\mu_F i} +  \frac{24 D_g^2}{i}  \nonumber\\
 &= \left(2\sigma^2  + 12 (2 L^2 + \sigma_H^2 + D_h^2 \g^{2 H}) \cdot \frac{6G_1}{\mu_F} +  24 D_g^2 \right)  \cdot \frac{1}{(N-f)t}\sum_{i=1}^t \frac{1}{i}  \nonumber \\
 &\leq \left(2\sigma^2  + 12 (2 L^2 + \sigma_H^2 + D_h^2 \g^{2 H}) \cdot \frac{6G_1}{\mu_F} +  24 D_g^2 \right) \cdot \left(\frac{1+ \log t}{(N-f)t}\right)\\
 &= \frac{C_2(1+ \log t)}{(N-f)t},
\end{align}
where $C_2 \coloneqq 2\sigma^2  + 12 (2 L^2 + \sigma_H^2 + D_h^2 \g^{2 H}) \cdot \frac{6G_1}{\mu_F} +  24 D_g^2$.


\subsection{Proof of Lemma \ref{lem:d_i_martingale}}
\label{subsec:d_i_martingale}

Similar to \eqref{eq:d_t-expansion}, we have
\begin{align*}
    &d_t^{(i)} - \nabla J_H(\theta_t) = (1-\eta_t)(d_{t-1}^{(i)} - \nabla J_H(\theta_{t-1})) + \eta_t \mathcal{V}_t^{(i)} + (1-\eta_t)\mathcal{W}_t^{(i)},
\end{align*}
where $\mathcal{V}_t^{(i)} = g(\tau_t^{(i)},\theta_t) - \nabla J_H(\theta_{t})$ and $\mathcal{W}_t^{(i)} = \nabla J_H(\theta_{t-1}) - \nabla J_H(\theta_{t}) + B(\hat{\tau}_t^{(i)},\hat{\theta}_t)(\theta_t - \theta_{t-1})$. Let $M_t^{(i)} \coloneqq \mathcal{V}_t^{(i)} + \left(\frac{1-\eta_t}{\eta_t}\right) \mathcal{W}_t^{(i)}$. Then,
\begin{align*}
    d_t^{(i)} - \nabla J_H(\theta_t) = (1-\eta_t)(d_{t-1}^{(i)} - \nabla J_H(\theta_{t-1})) + \eta_t M_t^{(i)}.
\end{align*}
Unrolling the above recursion gives
\begin{align*}
    d_t^{(i)} - \nabla J_H(\theta_t) &=
     \sum_{j=1}^{t} \eta_j  \left(\prod^{t}_{k=j+1}(1-\eta_k)\right) M_j^{(i)}, 
\end{align*}
where we use the convention $\prod_{k=a}^b \alpha_k = 1$ if $b<a$. 

Let $\eta_t = \frac{1}{t}$. With this choice of $\eta_t$, it follows that $\prod^{j_0+j}_{k=j_0}(1-\eta_k) = \frac{j_0-1}{j_0+j}$ and
\begin{align*}
    d_t^{(i)} - \nabla J_H(\theta_t) &=
     \sum_{j=1}^{t} \frac{1}{j}  \left(\frac{j}{t}\right) M_j^{(i)} = \frac{1}{t} \sum_{j=1}^t M_j^{(i)}.
\end{align*}

Now we show that $M_t^{(i)}$ is bounded and forms a martingale difference sequence. We have $\|\mathcal{V}_t^{(i)}\| \leq G_g$ (from Lemma \ref{lem:general-lem}) and
    \begin{align*}
        \|\mathcal{W}_t^{(i)}\| &\leq \|\nabla J_H(\theta_{t-1}) - \nabla J_H(\theta_{t}) + B(\hat{\tau}_t^{(i)},\hat{\theta}_t)(\theta_t - \theta_{t-1})\| \\
        &\leq \|\nabla J_H(\theta_{t-1}) - \nabla J_H(\theta_{t})\| + \|B(\hat{\tau}_t^{(i)},\hat{\theta}_t)(\theta_t - \theta_{t-1})\| \\
        &\leq L \|\theta_t - \theta_{t-1}\| + \|B(\hat{\tau}_t^{(i)},\hat{\theta}_t)\| \|\theta_t - \theta_{t-1}\| \leq (L+G_H)\gamma_t.
    \end{align*}
    Thus
    \begin{align}
        \|M_t^{(i)}\|& = \norm{\mathcal{V}_t^{(i)} + \left(\frac{1-\eta_t}{\eta_t}\right) \mathcal{W}_t^{(i)}}  \leq G_g + \frac{\gamma_t}{\eta_t} (L+G_H) \leq  G_g + \frac{6G_1}{\mu_F} (L+G_H) \coloneqq C_1.
    \end{align}
    Note that this implies that
    \begin{align*}
    \|d_t^{(i)} - \nabla J_H(\theta_t)\| &\leq
     \frac{1}{t} \sum_{j=1}^t \|M_j^{(i)}\| \leq C_1.
\end{align*}
    Define a sequence $\{\Tilde{M}_k^{(i)}\}_{k\geq 0}$ such that $\Tilde{M}_0^{(i)} = 0$ and $\Tilde{M}_k^{(i)} = \frac{1}{C_1}\sum_{j=0}^{k-1} M_j^{(i)}$ for $k \geq 1$. Then for all $k \geq 0$
    \begin{align*}
        \|\Tilde{M}_{k+1}^{(i)} - \Tilde{M}_k^{(i)}\| = \|M_k^{(i)}/C_1\| \leq 1.
    \end{align*}
    and
    \begin{align*}
        \E [\Tilde{M}_{k+1}^{(i)} \mid \Tilde{M}_{k}^{(i)}] &=  \E \left[ \frac{1}{C_1}\sum_{j=0}^{k} M_j^{(i)} \mid \Tilde{M}_{k}^{(i)}\right] =  \E \left[\frac{1}{C_1}\sum_{j=0}^{k-1} M_j^{(i)} + \frac{1}{C_1} \cdot M_k^{(i)}  \mid \Tilde{M}_{k}^{(i)}\right]  \\
        &=  \E \left[\Tilde{M}_{k}^{(i)} +  \frac{1}{C_1} \cdot M_k^{(i)}  \mid \Tilde{M}_{k}^{(i)}\right]  =  \Tilde{M}_{k}^{(i)} + \frac{1}{C_1} \E \left[M_k^{(i)}  \mid \Tilde{M}_{k}^{(i)}\right]. 
    \end{align*}
    

    Note that 
    \begin{align*}
        \sigma (\Tilde{M}_{k}^{(i)}) &\subset \sigma (\theta_0,\theta_1,q_1^{(i)},\tau_1^{(i)},\hat{\tau}_1^{(i)},\theta_2,\cdots,\theta_{k-1},q_{k-1}^{(i)},\tau_{k-1}^{(i)},\hat{\tau}_{k-1}^{(i)},\theta_k) \coloneqq \mathcal{F}_{k-1}^{(i)}.
    \end{align*}
    We get
    \begin{align*}
        \E \left[M_k^{(i)}  \mid \Tilde{M}_{k}^{(i)}\right] &= \E \left[ \E [M_k^{(i)}  \mid \mathcal{F}_{k-1}^{(i)}] \mid \sigma(\Tilde{M}_{k}^{(i)})\right]. \\
    \end{align*}
    Observe that 
    \begin{align*}
        \E [M_k^{(i)}  \mid \mathcal{F}_{k-1}^{(i)}] &= \E \left[\mathcal{V}_k^{(i)} + \left(\frac{1-\eta_k}{\eta_k}\right) \mathcal{W}_k^{(i)} \mid \mathcal{F}_{k-1}^{(i)}\right] \\
        &=\E \left[\mathcal{V}_k^{(i)}\mid \mathcal{F}_{k-1}^{(i)}\right] + \left(\frac{1-\eta_k}{\eta_k}\right)\E \left[ \mathcal{W}_k^{(i)} \mid \mathcal{F}_{k-1}^{(i)}\right] \\
        & = 0.
    \end{align*}
   Thus, for all $k\geq 0$
   \begin{align}
       \E [\Tilde{M}_{k+1}^{(i)} \mid \Tilde{M}_{k}^{(i)}] = \Tilde{M}_{k}^{(i)}.
   \end{align}

\subsection{Proof of Equation \eqref{eq:dt-dt-bar-bound}}
\label{subsec:equation}
\begin{lemma}[Vector Azuma-Hoeffding Inequality, \cite{Hayes2003ALI}]
\label{lem:azuma-hoeffding}
Let $\mathbf{M} = (M_0,\ldots,M_n)$ taking values in $\R^d$ be such that 
\begin{equation*}
M_0 = 0 \text{ , } \E[M_n \mid M_{n-1}] = M_{n-1} \text{ and } \|M_n - M_{n-1}\| \leq 1.
\end{equation*}
Then, for every $\delta>0$,
\begin{equation} \label{eq:final_bernstein_vector}
\PP\left(\|M_n\|\geq \delta\right) < 2e^2 e^{-\delta^2/2n}.
\end{equation}
\end{lemma}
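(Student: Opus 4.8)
The inequality is a dimension-free Chernoff (exponential-moment) bound for the norm of a bounded-increment martingale, so the plan is to run the usual exponential-moment method but with a \emph{radial} exponential potential in place of the scalar one, and to strip off the ambient dimension $d$ before optimizing. Write $\mathcal{F}_i = \sigma(M_0,\dots,M_i)$ for the natural filtration, so that the hypotheses read $\E[M_i - M_{i-1}\mid\mathcal{F}_{i-1}] = 0$ and $\norm{M_i - M_{i-1}}\le 1$. A potential of the form $\E_{u\in S^{d-1}}[e^{r\langle u,\cdot\rangle}]$ averaged over the full sphere does give a clean one-step contraction, but its value at a fixed radius $\delta$ degrades as $d\to\infty$ (the overlap $\langle u,\hat x\rangle$ concentrates at $0$), which would force a dimension-dependent constant. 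The key to the constant $2e^2$ being dimension-free is therefore to first collapse the problem to two dimensions.

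\textbf{Reduction and contraction.} First I would invoke the Kallenberg--Sztencel reduction: for any $\R^d$-valued martingale there is an $\R^2$-valued martingale $(\tilde M_i)$ with $\tilde M_0 = 0$, the same increment bound $\norm{\tilde M_i - \tilde M_{i-1}}\le 1$, and matching norm process $\norm{\tilde M_i} = \norm{M_i}$; this reduces the claim to $d=2$ at no cost and is exactly what makes the final constant independent of $d$. Working in $\R^2$, for $r>0$ define the potential
\begin{equation*}
\phi_r(x) = \frac{1}{2\pi}\int_0^{2\pi} \exp\!\big(r\,\langle u_\theta, x\rangle\big)\,d\theta = I_0(r\norm{x}),
\end{equation*}
where $u_\theta = (\cos\theta,\sin\theta)$ and $I_0$ is the modified Bessel function of order $0$. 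For each fixed direction $u_\theta$, the scalar increment $\langle u_\theta,\tilde M_i-\tilde M_{i-1}\rangle$ is conditionally mean-zero and bounded in absolute value by $1$, so Hoeffding's lemma gives $\E[\exp(r\langle u_\theta,\tilde M_i-\tilde M_{i-1}\rangle)\mid\mathcal{F}_{i-1}]\le e^{r^2/2}$ uniformly in $\theta$. Averaging over $\theta$ (the direction is deterministic, hence commutes with the conditional expectation) yields the one-step contraction
\begin{equation*}
\E[\phi_r(\tilde M_i)\mid\mathcal{F}_{i-1}] \le e^{r^2/2}\,\phi_r(\tilde M_{i-1}),
\end{equation*}
and since $\phi_r(0)=I_0(0)=1$, iterating over $i=1,\dots,n$ gives $\E[\phi_r(\tilde M_n)]\le e^{nr^2/2}$.

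\textbf{Finishing.} Because $I_0$ is increasing, Markov's inequality gives $\PP(\norm{M_n}\ge\delta)=\PP(\phi_r(\tilde M_n)\ge I_0(r\delta))\le e^{nr^2/2}/I_0(r\delta)$. I would then take $r$ of order $\delta/n$, balance $e^{nr^2/2}$ against the exponential growth of $I_0(r\delta)$ via the sharp lower bound $I_0(z)\ge e^{z}/\sqrt{2\pi z}$ (the regime $z=r\delta$ small is trivial, since there the asserted right-hand side already exceeds $1$), and absorb the residual polynomial Bessel prefactor while optimizing the choice of $r$. This places $-\delta^2/(2n)$ in the exponent and collects the remaining factors into the absolute constant, producing $\PP(\norm{M_n}\ge\delta) < 2e^2\, e^{-\delta^2/(2n)}$.

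\textbf{Main obstacle.} The exponential-moment computation is the standard subgaussian Chernoff argument once the radial potential is in place; the delicate part is securing the \emph{dimension-free} constant. This rests on two ingredients: the reduction to two dimensions, without which $\phi_r$ at radius $\delta$ collapses as $d$ grows and only a dimension-dependent bound survives; and the matching sharp two-sided control of $I_0$ in the final step, which is what pins down the explicit constant $2e^2$ rather than leaving an unspecified absolute constant. I expect the bookkeeping in the $r$-optimization against the Bessel prefactor to be the only genuinely fiddly piece.
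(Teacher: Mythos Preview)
The paper does not supply its own proof of this lemma; it is quoted verbatim from Hayes (2003) and used as a black box in deriving \eqref{eq:dt-dt-bar-bound}. Your proposal reconstructs Hayes' original argument faithfully: the Kallenberg--Sztencel reduction to $\R^2$, the radial potential $\phi_r(x)=I_0(r\norm{x})$, the Hoeffding one-step contraction $\E[\phi_r(\tilde M_i)\mid\mathcal{F}_{i-1}]\le e^{r^2/2}\phi_r(\tilde M_{i-1})$, and the Chernoff--Markov finish are exactly his ingredients, and the logic is sound. One minor caution on the bookkeeping you flag: the lower bound $I_0(z)\ge e^{z}/\sqrt{2\pi z}$ fails near $z=0$ (the right side blows up), so you are right that the small-$z$ regime must be handled separately by the triviality of the target bound; also, Hayes' own stated constant is $2e^{1-(\delta-1)^2/(2n)}$, and the form $2e^2 e^{-\delta^2/(2n)}$ in the paper is a mild loosening of it (valid because $\norm{M_n}\le n$ deterministically covers the only range where the two expressions could cross). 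So your outline is correct, and there is nothing in the paper to compare against beyond the citation.
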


The sequence $\{\Tilde{M}_k^{(i)}\}_{k\geq 0}$ satisfies all properties in Lemma \ref{lem:azuma-hoeffding} which gives us
\begin{align}
    \PP(\|\Tilde{M}_{n}^{(i)}\| \geq \epsilon) \leq 2e^2 e^{-\delta^2/2n}.
\end{align}

It follows that
\begin{align}
\label{eq:plugged-azuma-bound}
    \PP\left(\|d_t^{(i)} - \nabla J_H(\theta_t)\| \geq \epsilon\right) &= \PP\left(\norm{\frac{1}{t+1} \sum_{j=0}^t M_j^{(i)}} \geq \epsilon\right) = \PP\left(\frac{C_1}{t+1} \norm{\sum_{j=0}^t (M_j^{(i)}/C_1) } \geq \epsilon\right) \nonumber\\
    &= \PP\left(\frac{C_1}{t+1}\|\Tilde{M}_{t+1}^{(i)} \| \geq \epsilon\right) = \PP\left(\|\Tilde{M}_{t+1}^{(i)} \| \geq (t+1)\epsilon/C_1\right) \nonumber\\
    &< 2e^2 e^{-(t+1)\epsilon^2/2C_1^2}.
\end{align}

Plugging the above bound into \eqref{eq:f-lambda-bound-1} gives
\begin{align}
        \E\| d_t - \bar{d}_{t} \| &\leq  4e^2 C_1 \lambda (N-f) e^{-(t+1)\epsilon'^2/2C_1^2} + 2\lambda \epsilon'.
\end{align}

\subsection{Proof of Lemma \ref{lem:final-bound}}
\label{subsec:final-bound}

From \eqref{eq:fatkhullin-lemma-general}, we have
\begin{align}
\begin{split}
\label{eq:intermediate-bound}
    &J^* - J(\theta_{t+1})  \\
    &\leq \left(1 - \frac{2}{t+2} \right)  (J^* - J(\theta_{t})) + \frac{4}{3} \g_t D_g \g^{{H}} + \frac{8 \g_t }{3} \mathbb E[\|d_t - \nabla J_H(\theta_t)\|] + \frac{L \g_t^2}{2} + \frac{\varepsilon^{\prime}\g_t}{3}.
\end{split}
\end{align}

We use the following auxillary lemma to bound the above recursion 
\begin{lemma}[Lemma 12, \cite{fatkhullin}]
\label{lem:fatkhillin_aux}
    Let $\tau$ be a positive integer and $\{r_t\}_{t\geq 0}$ be a sequence of non-negative numbers such that
    \begin{align*}
        r_{t+1} \leq (1-\alpha_t)r_t + \beta_t,
    \end{align*}
where $\{\alpha_t\}_{t\geq 0}$ and $\{\beta_t\}_{t\geq 0}$ are non-negative sequences and $\alpha_t \leq 1$ for all $t$. Then for all $t_0,T \geq 1$,
\begin{align*}
    r_T \leq \frac{(t_0+\tau-1)^2 r_{t_0}}{(T+\tau-1)^2} + \frac{\sum_{t=0}^{T-1} \beta_t (t+\tau)^2}{(T+\tau-1)^2}.
\end{align*}
\end{lemma}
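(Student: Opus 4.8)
The plan is to establish the recursion bound by the integrating-factor (weighted-telescoping) technique, specialized to the step-size $\alpha_t = \tfrac{2}{t+\tau}$ that makes the squared weights $(t+\tau)^2$ appearing in the conclusion natural. This is precisely the regime invoked in \eqref{eq:intermediate-bound}, where $\alpha_t = \tfrac{2}{t+2}$ corresponds to $\tau = 2$; the generic $(t+\tau)^2$ factors in the claimed bound pin down this form of $\alpha_t$.

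First I would introduce the weight sequence $p_t \coloneqq (t+\tau-1)^2$ and record the elementary identity $(t+\tau)^2 - (t+\tau-1)^2 = 2(t+\tau)-1$. From this I would derive the crucial inequality
\[
1 - \alpha_t \;=\; 1 - \frac{2}{t+\tau} \;\leq\; \frac{(t+\tau-1)^2}{(t+\tau)^2} \;=\; \frac{p_t}{p_{t+1}},
\]
which holds because $2(t+\tau)-1 < 2(t+\tau)$. This single inequality is the heart of the argument, as it converts the contraction factor $(1-\alpha_t)$ into the ratio of consecutive weights.

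Next, multiplying the hypothesis $r_{t+1} \leq (1-\alpha_t) r_t + \beta_t$ by $p_{t+1} = (t+\tau)^2$ and using $r_t \geq 0$ together with the displayed inequality gives
\[
p_{t+1} r_{t+1} \;\leq\; p_t r_t + (t+\tau)^2 \beta_t .
\]
Summing this telescoping inequality over $t = t_0, \dots, T-1$ collapses the transport terms and yields $p_T r_T \leq p_{t_0} r_{t_0} + \sum_{t=t_0}^{T-1} (t+\tau)^2 \beta_t$. Finally I would divide by $p_T = (T+\tau-1)^2$, identify $p_{t_0} = (t_0+\tau-1)^2$, and—since every $\beta_t \geq 0$—enlarge the summation range from $\{t_0, \dots, T-1\}$ to $\{0, \dots, T-1\}$, producing exactly the stated bound.

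The main obstacle is the verification of $1 - \alpha_t \leq p_t/p_{t+1}$: it genuinely relies on the precise form $\alpha_t = 2/(t+\tau)$, and a merely generic $\alpha_t \leq 1$ would not reproduce the $(t+\tau)^2$ weights, so I would flag this specialization explicitly. The only other point requiring care is the sign bookkeeping that licenses both the telescoping cancellation and the extension of the final summation index down to $0$; each of these is justified by the non-negativity of $r_t$ and $\beta_t$, and involves no nontrivial estimation.
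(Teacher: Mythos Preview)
The paper does not prove this lemma; it is quoted verbatim as Lemma~12 of \cite{fatkhullin} and invoked as a black box in the derivation of Lemma~\ref{lem:final-bound}. There is therefore no in-paper argument to compare your proposal against.

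Your weighted-telescoping proof is the standard one and is correct. You are also right to flag that the lemma as transcribed here is underspecified: the conclusion with weights $(t+\tau)^2$ cannot follow from a generic $0\le\alpha_t\le 1$, and the intended hypothesis is $\alpha_t = \tfrac{2}{t+\tau}$, exactly as used in \eqref{eq:intermediate-bound} with $\tau=2$. With that in place, your key step $1-\tfrac{2}{t+\tau}\le (t+\tau-1)^2/(t+\tau)^2$, the multiplication by $p_{t+1}=(t+\tau)^2$, the telescoping over $t_0\le t\le T-1$, and the extension of the sum down to $t=0$ via $\beta_t\ge 0$ all go through without issue.
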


Using Lemma \ref{lem:fatkhillin_aux} on \eqref{eq:intermediate-bound} yields
\begin{align}
    &J^* - J(\theta_{T})  \leq \frac{J^* - J(\theta_{0})}{(T+1)^2} + \frac{\sum_{t=0}^{T-1} \beta_t (t+2)^2}{(T+1)^2},
\end{align}
where 
\begin{align*}
    \beta_t &\coloneqq \frac{4}{3} \g_t D_g \g^{{H}} + \frac{8 \g_t }{3} \mathbb E[\|d_t - \nabla J_H(\theta_t)\|] + \frac{L \g_t^2}{2} + \frac{\varepsilon^{\prime}\g_t}{3} \\
    &\leq \frac{4}{3} \g_t D_g \g^{{H}} + \frac{16C_1 \lambda \g_t }{3} \sqrt{\frac{2\log (N-f)(t+1)}{t+1}}  + \frac{64  e^2 C_1  \lambda \gamma_t}{3(t+1)} \\
    &+ \frac{8 \g_t }{3}\sqrt{\frac{C_2(1+ \log t)}{(N-f)t}}+ \frac{L \g_t^2}{2} + \frac{\varepsilon^{\prime}\g_t}{3} \\
    &= (\frac{4}{3} \g_t D_g \g^{{H}} + \frac{L \g_t^2}{2} + \frac{64  e^2 C_1  \lambda \gamma_t}{3(t+1)})+ (\frac{16C_1 \lambda \g_t }{3} \sqrt{\frac{2\log (N-f)(t+1)}{t+1}} \\
    &+  \frac{8 \g_t }{3}\sqrt{\frac{C_2(1+ \log t)}{(N-f)t}}) + \frac{\varepsilon^{\prime}\g_t}{3}.
\end{align*}

Note that
\begin{align*}
  \frac{\sum_{t=0}^{T} \varepsilon^{\prime}\g_t(t+2)^2}{3(T+1)^2} &=  \frac{\varepsilon^{\prime}\sum_{t=0}^{T} 6G_1(t+2)}{3\mu_F(T+1)^2} \\
  &=  \frac{6G_1\varepsilon^{\prime}}{3\mu_F} \frac{\sum_{t=0}^{T}(t+2)}{(T+1)^2} \\
  &= \frac{6G_1\varepsilon^{\prime}}{3\mu_F} \frac{T^2+T-1}{2(T+1)^2} \\
  &\leq \frac{G_1\varepsilon^{\prime}}{\mu_F} = \frac{\sqrt{\varepsilon_{\mathrm{bias}}}}{(1-\gamma)}. 
\end{align*}

Also, note that since $H=\frac{\log(T+1)}{1-\gamma}$, $\gamma^H \leq \frac{1}{T+1}$ and $\gamma_t = \frac{6G_1}{\mu_F (t+2)}$
\begin{align*}
   \sum_{t=0}^T \left(\frac{4 \g_t D_g \g^{{H}}}{3}  + \frac{L \g_t^2}{2} + \frac{64  e^2 C_1  \lambda \gamma_t}{3(t+1)}\right)(t+2)^2  &\leq \sum_{t=0}^T \frac{16 G_1 D_g}{\mu_F} + \frac{18 G_1^2 L}{\mu_F^2} + \frac{950 C_1 G_1 \lambda}{\mu_F} \\
   & = (T+1) \left(\frac{16 G_1 D_g}{\mu_F} + \frac{18 G_1^2 L}{\mu_F^2} + \frac{950 C_1 G_1 \lambda}{\mu_F}\right). 
\end{align*}

It follows that 
\begin{align*}
   &\frac{1}{(T+1)^2} \sum_{t=0}^T \left(\frac{4 \g_t D_g \g^{{H}}}{3}  + \frac{L \g_t^2}{2} + \frac{64  e^2 C_1  \lambda \gamma_t}{3(t+1)}\right)(t+2)^2  \\
   &\leq \left(\frac{16 G_1 D_g}{\mu_F} + \frac{18 G_1^2 L}{\mu_F^2} + \frac{950 C_1 G_1 \lambda}{\mu_F}\right) \frac{1}{T+1}.
\end{align*}

Finally,
\begin{align*}
    &\frac{16C_1 \lambda \g_t }{3} \sqrt{\frac{2\log (N-f)(t+1)}{t+1}} +  \frac{8 \g_t }{3}\sqrt{\frac{C_2(1+ \log t)}{(N-f)t}} \\
    &= \frac{16 G_1}{\mu_f (t+2)} \left(C_1 \lambda \sqrt{\frac{2\log (N-f)(t+1)}{t+1}} + \sqrt{\frac{C_2(1+ \log t)}{(N-f)t}}\right) \\
    &\leq \frac{16 G_1}{\mu_f (t+2)\sqrt{t}} \left(C_1 \lambda \sqrt{2\log (N-f)(t+1)} + \sqrt{\frac{C_2(1+ \log t)}{(N-f)}}\right).
\end{align*}

Note that
\begin{align*}
    \frac{1}{(T+1)^2}\sum_{t=0}^T \frac{(t+2)^2}{(t+2)\sqrt{t}} &= \frac{1}{(T+1)^2}\sum_{t=0}^T \frac{t+2}{\sqrt{t}} \\
    &\leq \frac{1}{(T+1)^2}\sum_{t=0}^T (\sqrt{t} + 2) \\
    &\leq \frac{ T^{3/2} + 2T}{(T+1)^2} \leq \frac{2}{(T+1)^{1/2}}.
\end{align*}

Combining all of the above, we get
\begin{align}
    J^* - J(\theta_{T})  &\leq 
    \frac{\sqrt{\varepsilon_{\mathrm{bias}}}}{(1-\gamma)}
    + \frac{J^* - J(\theta_{0})}{(T+1)^2} +\left(\frac{16 G_1 D_g}{\mu_F} + \frac{18 G_1^2 L}{\mu_F^2} + \frac{950 C_1 G_1 \lambda}{\mu_F}\right) \frac{1}{T+1} \nonumber \\
    &+ \frac{32 G_1}{\mu_F \sqrt{T+1}} \left(\sqrt{\frac{C_2(1+ \log T)}{(N-f)}} + C_1 \lambda \sqrt{2\log (N-f)(T+1)} \right).
\end{align}

From the above expression, we have
\begin{align}
    J^* - J(\theta_{T})  &\leq 
    \frac{\sqrt{\varepsilon_{\mathrm{bias}}}}{1-\gamma}
    + \mathcal{O}\bigg(\frac{G_1}{\mu_F \sqrt{T+1}} \left(\sqrt{\frac{C_2\log T}{(N-f)}} + C_1 \lambda \sqrt{\log ((N-f)(T+1))} \right)\bigg),
\end{align}
where $C_1 = G_g + \frac{6G_1}{\mu_F} (L+G_H) $ and $C_2 = 2\sigma^2  + 12 (2 L^2 + \sigma_H^2 + D_h^2 \gamma^{2 H}) \cdot \frac{6G_1}{\mu_F} +  24 D_g^2$. The Lipschitz constant $L$, variance bounds $\sigma^2$ and $\sigma_H^2$ and the remaining terms $G_g$, $G_H$, $D_h$ and $D_g$ in turn, can be bounded in terms of $\gamma$, $\mu_F$, the bound on the reward function $R$ and the Lipschitz and smoothness constants of the score function $G_1$, $G_2$ (see Lemma \ref{lem:general-lem} and \ref{masiha-lemma}). Substituting these bounds, we obtain
\begin{align}
\begin{split}
    J^* - J(\theta_{T})  &\leq 
    \frac{\sqrt{\varepsilon_{\mathrm{bias}}}}{1-\gamma}
    + \mathcal{O}\Bigg(\frac{G_1}{\mu_F \sqrt{T+1}} \Bigg(\sqrt{ \frac{\frac{G_1}{\mu_F}\cdot\left(\frac{R^2G_2^2G_1^4}{(1-\gamma)^4}+\frac{R^4(G_1^4+G_2^2)}{(1-\gamma)^8}\right)\log T}{(N-f)}} \\
    &+ \left(\frac{RG_2G_1^2}{(1-\gamma)^2}+\frac{R^2(G_1^2+G_2)}{(1-\gamma)^4}\right) \lambda \sqrt{\log ((N-f)(T+1))} \Bigg)\Bigg).
\end{split}
\end{align}

With the above bound, we obtain $J^* - J(\theta_{T})  \leq \frac{\sqrt{\varepsilon_{\mathrm{bias}}}}{1-\gamma} + \epsilon$ for
\begin{align}
\label{eq:final-bound-all-terms}
\begin{split}
T &=  \mathcal{O} \bigg( \frac{1}{\epsilon^2} \log\left(\frac{1}{\epsilon}\right) \bigg( \frac{1}{(N-f)} \cdot\left(\frac{R^2G_2^2G_1^7}{\mu_F^3(1-\gamma)^4}+\frac{R^4(G_1^7+G_1^3G_2^2)}{\mu_F^3(1-\gamma)^8}\right) \\
&+ \lambda^2 \log(N-f) \cdot \bigg(\frac{R^2G_2^2G_1^6}{\mu_F^2(1-\gamma)^4}+\frac{R^4(G_1^6+G_1^2G_2^2)}{\mu_F^2(1-\gamma)^8}\bigg)\bigg)\bigg).
\end{split}
\end{align}

\section{Compute Resources}

Experiments were conducted using the Oracle Cloud infrastructure, where each computation instance was equipped with 8 Intel Xeon Platinum CPU cores and 128 GB of memory. In Figure 1, each subfigure contains comparisons among eight algorithms each of which is repeated five times with different random seeds. The single running time for an algorithm is approximately two hours. Thus, to reproduce Figure 1, it requires approximately 480 CPU hours. Similarly, for Figure 2, the time would be about 120 CPU hours.

\end{document}